%% file: main.tex
\documentclass{article}

\usepackage[preprint]{neurips_2026}
\usepackage{graphicx}
\usepackage{booktabs}
\usepackage{multirow}
\usepackage{amsmath} 
\usepackage{wrapfig}
\usepackage{enumitem}

\usepackage{subcaption}

\usepackage[utf8]{inputenc} 
\usepackage[T1]{fontenc}    
\usepackage{hyperref}       
\usepackage{url}            
\usepackage{booktabs}       
\usepackage{amsfonts}       
\usepackage{nicefrac}       
\usepackage{microtype}      
\usepackage{xcolor}         
\usepackage{booktabs}
\usepackage{multirow}
\usepackage{graphicx}
\usepackage{bm}

\newtheorem{theorem}{Theorem}

\newtheorem{definition}{Definition}
\newtheorem{proof}{Proof}

\title{\method{}: Polar Riemannian Flow Matching for Structure-Preserving Graph Domain Adaptation}

\author{
Yingxu Wang \\
The Chinese University of Hong Kong \\
\texttt{yingxv.wang@gmail.com}
\And
Xinwang Liu \\
National University of Defense Technology \\
\texttt{xinwangliu@nudt.edu.cn}
\And
Mengzhu Wang \\
Hebei University of Technology \\
\texttt{dreamkily@gmail.com}
\And
Siyang Gao, Nan Yin\\
City University of Hong Kong \\
\texttt{siyangao@cityu.edu.hk, yinnan8911@gmail.com}
}

\def\method{DisRFM} 

\begin{document}

\maketitle

\begin{abstract}
Graph Domain Adaptation (GDA) aims to transfer graph classifiers across domains with both semantic and topological shifts. Existing Euclidean adversarial methods face two challenges: {\textit{Structural Degeneration}}, where domain confusion entangles and suppresses label-relevant topology, and {\textit{Optimization Instability}}, where minimax training induces oscillatory gradients under large structural shifts. We propose \method{}, a geometry-aware GDA framework that addresses these challenges with Riemannian representation learning and flow-based transport. \method{} embeds graph representations on a constant-curvature manifold and expresses them in geodesic polar coordinates. Polar endpoint regularization calibrates topology-sensitive radial scales via univariate Wasserstein alignment and preserves scale-normalized class semantics through confidence-filtered angular alignment, with radial magnitude modulating pseudo-label reliability. \method{} introduces topology-conditioned polar flow matching, which couples class-compatible source and target samples by a normalized polar transport cost and learns a metric-corrected vector field along geodesic interpolants. Theoretical analysis characterizes the structural risk of unconditional domain confusion and relates polar discrepancies and flow error to target risk. Extensive experiments under diverse domain shifts demonstrate that \method{} consistently outperforms state-of-the-art methods.
\end{abstract}

\input{code/1_intro}
\input{code/2_related_work}
\input{code/3_method}

\input{code/4_experiments}
\input{code/5_conclusion}

\bibliographystyle{plain}
\bibliography{reference}

\newpage
\input{code/6_appendix}

\end{document}

%% file: code/1_intro.tex
\section{Introduction}

Graph Domain Adaptation (GDA) aims to transfer a graph classifier trained on a labeled source domain to an unlabeled target domain under distribution shift~\citep{wu2020unsupervise,wang2024degree,wang2026nested,wang2026usbd}. Unlike standard feature-domain adaptation, graph shifts are often simultaneously semantic and topological: the decision boundary should remain transferable, while graph size, density, degree profiles, structural roles, motifs, or community patterns may vary across domains~\citep{henderson2012rolx}. A reliable GDA model should therefore align task-relevant semantics without erasing structure-sensitive information that is
useful for target prediction.

Existing GDA methods align Euclidean latent spaces by making source
and target embeddings domain-indistinguishable through an adversarial
objective~\citep{sun2020adagcn,shen2020adversarial,ganin2016domain,feng2020graph}.
This paradigm is effective under moderate covariate shifts, but becomes fragile when domain variables are entangled with graph structure. First, Euclidean alignment can induce \textbf{\textit{Structural Degeneration}}. Since Euclidean coordinates conflate radial scale and direction, structural roles and class-discriminative cues may share coordinates; domain confusion can remove topology-sensitive variation alongside domain variation~\citep{bronstein2017geometric,nickel2017poincare,ganea2018hyperbolic,sala2018representation,chami2019hyperbolic}. Figure~\ref{challenge}(a) shows degraded structure-probe scores. Second, adversarial GDA optimizes a minimax game between the encoder and domain discriminator~\citep{goodfellow2014generative}, causing \textbf{\textit{Optimization Instability}}. Under
large structural shifts, it may yield oscillatory or poorly conditioned
gradients~\citep{arjovsky2017wasserstein,mescheder2018training,salimans2016improved},
as shown in Figure~\ref{challenge}(b). These limitations suggest that graph
domain adaptation should avoid unconditional domain confusion. Instead, it
should preserve label-relevant structure while transporting samples across
semantically and topologically compatible regions.

\begin{figure*}[t!]
  \centering
  \includegraphics[scale=0.4]{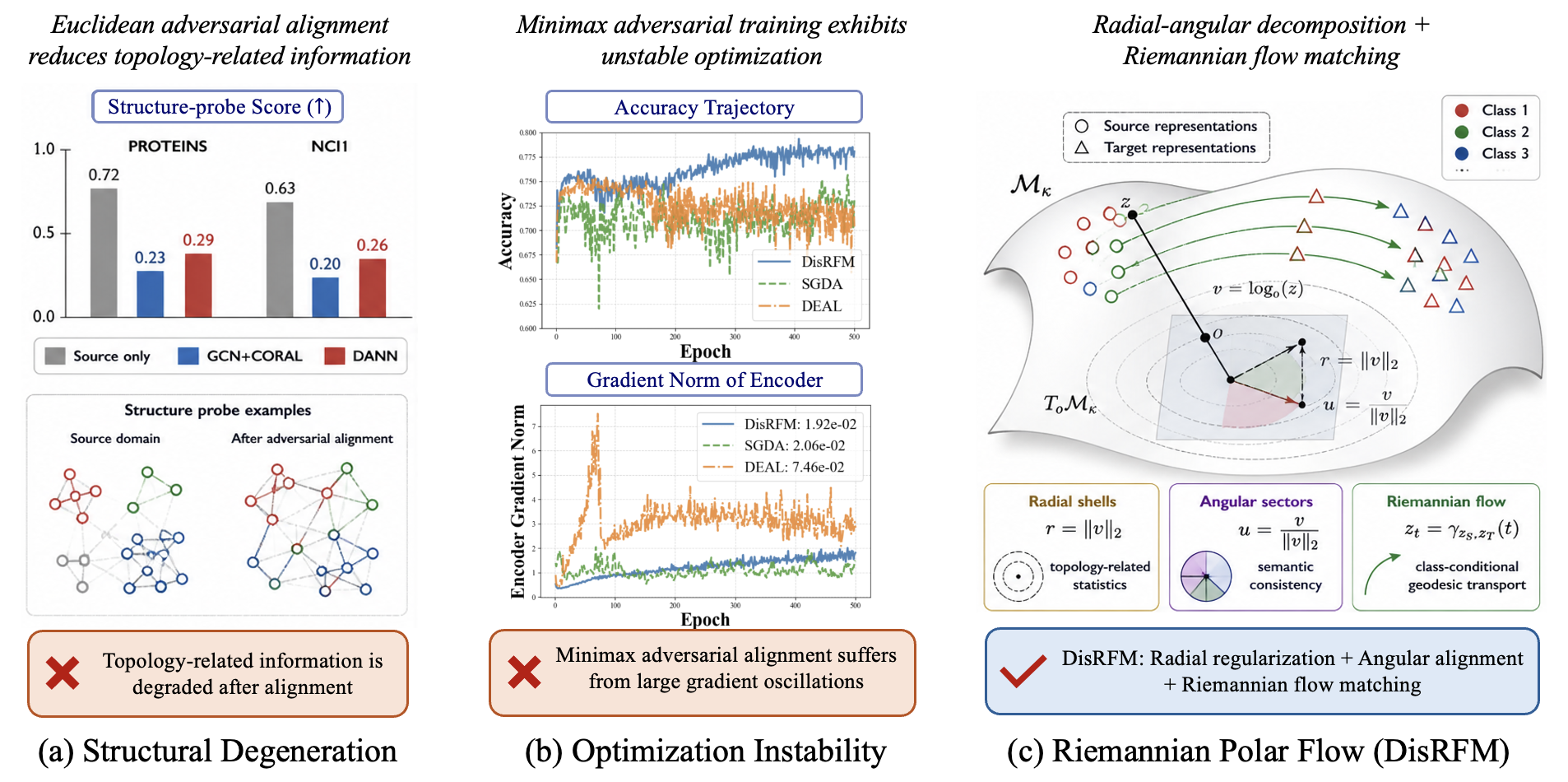} 
  \vspace{-5pt}
\caption{Motivation of DisRFM.
(a) Euclidean adversarial GDA erodes topology information, evidenced by lower
post-alignment structure-probe scores.
(b) Minimax alignment yields unstable training with gradient-norm oscillations.
(c) DisRFM imposes a Riemannian polar bias: radial shells act as compact
topology-related proxies, angular sectors preserve semantic consistency, and
flow matching transports source embeddings toward targets along class-conditioned
geodesics.}
  \label{challenge}
  \vspace{-1.0cm}
\end{figure*}

We address this problem from a representation-geometry and transport viewpoint.
Graph domain shifts often entangle label-relevant structural variation with
domain-specific nuisance factors. As a result, enforcing Euclidean domain
confusion may inadvertently suppress structural cues useful for target
prediction. Curved manifolds offer a suitable inductive bias for
hierarchy-rich and non-Euclidean graph structures beyond isotropic Euclidean
spaces~\citep{nickel2017poincare,tifrea2018poincar,chami2019hyperbolic}. In a
constant-curvature manifold, geodesic polar coordinates provide a natural view
of embeddings through a scale-sensitive radial component and a scale-normalized
angular component. This does not imply an identifiable separation between
topology and semantics; rather, it suggests a useful regularization bias for
graph transfer. The resulting challenge is to align domains without collapsing
all domain-correlated structure or treating radial, angular, and transport
effects as independent heuristics. This motivates topology-conditioned polar
transport: source and target graphs should be matched according to class
semantics and graph structure, and their alignment should follow the geometry of
the representation space.

To this end, we propose \method{}, a class-aware transport framework over constant-curvature manifolds. Polar geometry provides an operational coordinate bias, not an identifiable factorization: radius serves as a topology-related scale proxy, while angle offers a scale-normalized signal for semantic alignment. Radial structural calibration matches topology-sensitive metric scales, keeping comparable graphs at compatible radii without forcing structure into radius. Confidence-filtered angular alignment matches class prototypes on scale-normalized directions, preserving margins while limiting confirmation noise. Class-conditional Riemannian flow matching replaces discriminator-based alignment with velocity regression along pseudo-label coupled geodesics, transporting representations toward target support while preserving radius--angle coupling. Thus, \method{} replaces domain-level indistinguishability with geometry-constrained, class-conditional manifold alignment.

Our main contributions are summarized as follows: (1) We identify a structural failure mode of domain-confusion GDA and formalize, through a simple information and total-variation argument, why enforcing domain invariance can remove label-relevant graph structure when structure and domain are correlated. (2) We propose \method{}, which integrates coordinate-wise radial/angular regularization in Riemannian polar coordinates with geometry-aware 
flow matching for stable graph domain adaptation. 
(3) Extensive experiments on benchmark datasets demonstrate that \method{} outperforms state-of-the-art baselines and exhibits strong optimization stability.


%% file: code/2_related_work.tex
\section{Related Work}

\textbf{Domain Adaptation (DA).} DA transfers knowledge from a labeled source domain to an unlabeled target
domain by reducing distributional shifts while preserving task-relevant
semantics~\citep{singhal2023domain,xu2024video}. With the growing use of graph
data, this paradigm has extended to Graph Domain Adaptation (GDA), where both
node attributes and graph structures vary across domains~\citep{wu2024graph,
zhang2024survey}. Most methods learn graph representations with GNNs and apply
alignment techniques, such as adversarial learning, MMD minimization, or optimal
transport, to obtain domain-invariant features~\citep{yin2022deal,yin2023coco,
liu2024pairwise}. However, they typically align graphs in Euclidean space and
overlook non-Euclidean geometry, which often causes distorted representations
and negative transfer under significant structural shifts~\citep{yin2025dream,
fang2025benefits,chen2025smoothness}. To address this, we propose \method{} for geometry-aware alignment in non-Euclidean space.

\textbf{Manifold Learning on Graphs.}
Manifold learning assumes that high-dimensional data reside on low-dimensional
manifolds, capturing geometric properties that Euclidean methods often
distort~\citep{lin2008riemannian, zhang2011adaptive, meilua2024manifold}.
Recent advances extend this paradigm to non-Euclidean spaces, integrating graph
topology to better encode hierarchical and relational
structures~\citep{sun2025trace, sun2024motif, fei2025survey,wang2025dusego}. However, existing
manifold-based methods mainly focus on single-domain tasks, overlooking domain
adaptation~\citep{de2023latent, sun2023self, wang2024mixed,wang2026dsbd}. Crucially, naive
applications of general adaptation techniques to manifold spaces are suboptimal,
as they fail to model cross-domain distributional shifts while respecting the
underlying geometry~\citep{chen2025graffe, gharib2025geometric,collas2025riemannian}.
To bridge this gap, we propose \method{}, which formulates graph domain
adaptation as a continuous, geometry-aware transformation on a non-Euclidean
manifold via Riemannian flow matching.

%% file: code/3_method.tex
\section{Methodology}

\textbf{Overview.}
We propose \method{}, a two-component GDA framework: (1) \textbf{Polar Endpoint Regularization} mitigates structural degeneration using radius as a topology-related scale proxy and angle as a scale-normalized class-alignment signal. (2) \textbf{Topology-Conditioned Polar Flow Matching} mitigates indiscriminate alignment by transporting source and target graphs through class- and structure-compatible regions, replacing adversarial domain confusion with geometry-aware transport supervision. Together, they preserve label-relevant structure, reduce over-alignment, and improve transfer under structural shifts without assuming globally identifiable topology--semantics separation.

\subsection{Polar Endpoint Regularization}
\label{subsec:radial_alignment}

Existing GDA objectives usually align Euclidean embeddings with isotropic losses,
which do not distinguish scale-sensitive and direction-sensitive variations and
may therefore over-align graph representations under structural
shift~\citep{nickel2017poincare,bronstein2017geometric}. We instead use polar coordinates induced by a fixed origin on a constant-curvature manifold. On a geodesic normal ball, the metric has the block form
\begin{equation}
    ds^2 = dr^2 + S_c(r)^2 d\Omega^2,
    \qquad
    S_c(r)=
    \begin{cases}
    \sin(\sqrt{c}r)/\sqrt{c}, & c>0,\\
    r, & c=0,\\
    \sinh(\sqrt{-c}r)/\sqrt{-c}, & c<0,
    \end{cases}
    \label{eq:polar_metric}
\end{equation}
where $d\Omega^2$ is the standard spherical metric. This geometry motivates separate endpoint roles for radius and angle while preserving their metric coupling through the warping factor $S_c(r)^2$. Throughout the theoretical statements, $r$ denotes geodesic radius. In implementation, $r$ is computed from $\mathbf v=\operatorname{Log}_o^c(\mathbf z)$ in the origin tangent coordinates; the constant conformal factor at the origin only rescales all radii and is absorbed by the radial loss weight.

\textbf{Radial Structural Calibration.}
First, we obtain the graph-level representation $\mathbf{z} \!\in\! \mathcal{M}_c$ via~\citep{liu2019hyperbolic}
\begin{equation}
\label{HGCN}
    \mathbf{m}_j^{(l)}\! =\!\mathbf{W}^{(l)} \otimes_c \mathbf{h}_j^{(l)}\! =\! \text{Exp}_{\mathbf{0}}^c \left( \mathbf{W}^{(l)}\!\cdot \!\text{Log}_{\mathbf{0}}^c(\mathbf{h}_j^{(l)}) \right),\;
    \mathbf{h}_i^{(l+1)} \!=\! \text{Exp}_{\mathbf{0}}^c \Bigg( \sigma \Big( \!\!\sum_{j \in \mathcal{N}(i) \cup \{i\}} \!\!\alpha_{ij} \cdot \text{Log}_{\mathbf{0}}^c ( \mathbf{m}_j^{(l)} ) \Big) \Bigg),\nonumber
\end{equation}
where the manifold curvature $c$ controls the underlying geometry. 
We map a graph representation
$z\in\mathcal M_c$ to the tangent space at the origin by
$v=\mathrm{Log}_0^c(z)\in T_0\mathcal M_c$ and use the polar decomposition
\begin{equation}
    r = \|\mathbf{v}\|_{g_o}, \quad \mathbf{u} = {\mathbf{v}}\big/{\|\mathbf{v}\|_2},\nonumber
\end{equation}
where the radius $r$ acts as a structure-related scalar proxy and $\mathbf{u}$
records normalized tangent direction. Rather than assuming all structure lies in
$r$, we use radial magnitude as a compact topology-related summary and calibrate
the target radial profile to the source. We formulate this as marginal matching
and minimize the univariate Wasserstein
distance~\citep{courty2016optimal,peyre2019computational}, computed by sorting.
For batch size $B$, let $\mathcal{R}_S=\text{sort}(\{r_S^{(i)}\}_{i=1}^B)$ and
$\mathcal{R}_T=\text{sort}(\{r_T^{(i)}\}_{i=1}^B)$ denote radial order
statistics, and define:
\begin{equation}
    \mathcal{L}_{rad} = \frac{1}{B} \sum_{k=1}^{B} \left| \mathcal{R}_S[k] - \mathcal{R}_T[k] \right|.
    \label{eq:radial_loss}
\end{equation}
Minimizing $\mathcal{L}_{rad}$ aligns the radial statistics of the target with those of the source. This regularizer reduces the risk of structural collapse and preserves topology-related diversity during adaptation.

\textbf{Angular Semantic Alignment.}
\label{subsec:angular_alignment}
Let $\mathbf{W} \in \mathbb{R}^{K \times d}$ denote the source classifier weights, and let $\mathbf{w}_k$ be a prototype for class $k$. For a target graph with tangent representation $\mathbf{v}_T^{(i)}$, we define cosine similarities
\begin{equation}
s_{i,k}
    =
    \frac{\mathbf{v}_T^{(i)}}{\|\mathbf{v}_T^{(i)}\|_2}
    \cdot
    \left(
    \frac{\mathbf{w}_k}{\|\mathbf{w}_k\|_2}
    \right)^\top,\nonumber
    \end{equation}
so semantic matching depends on direction rather than scale.
Since target ground-truth labels are unavailable, we rely on the source classifier to generate pseudo-labels~\citep{lee2013pseudo}. Recognizing that indiscriminate usage of noisy pseudo-labels risks distorting the manifold geometry~\citep{zou2018unsupervised, chen2019progressive}, we use pseudo-labels $\hat y_i=\arg\max_k p(k|v_T^{(i)})$ but
filter uncertain targets with:
    \begin{equation}
    \begin{aligned}
        M_i = \mathbb{I}\left(\max_{k} p(k|\mathbf{v}_T^{(i)}) > \zeta\right), \nonumber
        \end{aligned}
    \end{equation}
where $p(\cdot\mid\mathbf v_T^{(i)}) = \operatorname{Softmax}(\mathbf{W}\mathbf{v}_T^{(i)}+\mathbf b)$ denotes class probability, and $\mathbb{I}(\cdot)$ is the indicator function. This mask $M_i$ retains samples with confidence exceeding $\zeta$, filtering out uncertain predictions. The confidence score is computed from the full tangent representation but only used to screen pseudo-labels; angular similarity remains scale-normalized. Because large-radius target samples often lie farther from the shared support under structural shift, we use radius only as a soft reliability prior for pseudo-label supervision, not as a reason to discard complex graphs. Specifically, we set $\alpha_i=\exp(-\tau r_T^{(i)})$, with $\tau=1$ after radial normalization. The angular loss is formulated as:
\begin{equation}
    \mathcal{L}_{ang} = \frac{\sum_{i=1}^{B} M_i \cdot \alpha_i \cdot \ell_{ce}(\gamma \cdot \mathbf{s}_i, \hat{y}_i)}{\sum_{i=1}^{B} M_i \cdot \alpha_i + \epsilon},
\end{equation}
where $\ell_{ce}$ is the Cross-Entropy loss, $\mathbf{s}_i$ is the vector of cosine similarities, and $\gamma$ is a temperature scaling factor to sharpen the predictions. By minimizing $\mathcal{L}_{ang}$, we encourage semantic consistency in the angular domain while allowing the radial magnitude to modulate the reliability of pseudo-label supervision. In our formulation, angle acts as the primary carrier of semantic alignment, whereas radius plays an auxiliary reliability-regulation role rather than a hard semantic exclusion role.

\subsection{Topology-Conditioned Polar Flow Matching}
\label{subsec:tpfm}

The transport component uses one objective, not separate coupling and flow losses. Polar coupling determines admissible source--target pairs, while flow matching supervises transport geometry. It is not an additional regularizer; it is a stop-gradient sampling measure for the flow matching loss.

\textbf{Stopped Polar Coupling.}
For each class $y$, define source and reliable target index sets
    $\mathcal I_y^S=\{i:y_i^S=y\}$,
    $\mathcal I_y^T=\{j:M_j=1,\hat y_j=y\}$,
and let $\rho_j=\max_k p(k|\mathbf v_T^{(j)})$ denote target confidence. For
$i\in\mathcal I_y^S$ and $j\in\mathcal I_y^T$, we define the normalized polar
transport cost
\begin{equation}
\begin{aligned}
    C_{ij}^{(y)}
    =
    \widetilde d_{\mathcal M_c}(z_i^S,z_j^T)^2
    + |\widetilde r_i^S-\widetilde r_j^T|^2
    + \widetilde d_{\mathbb S}(u_i^S,u_j^T)^2
    - \log(\rho_j+\epsilon),\nonumber
\end{aligned}
\label{eq:tpfm_cost}
\end{equation}
where $\widetilde d_{\mathcal M_c}$ and $\widetilde d_{\mathbb S}$ are z-score-normalized pairwise distances within the class minibatch, and $\widetilde r$ is the z-score-normalized radius over pooled source--target samples of that class. For raw component $A$, we use
    $\widetilde A_{ij}=\frac{A_{ij}-\operatorname{sg}(\mu_A)}
    {\operatorname{sg}(\sigma_A)+\epsilon}$,
where $\mu_A,\sigma_A$ are minibatch statistics and $\operatorname{sg}(\cdot)$ stops gradients through them. The geometric, radial, angular, and confidence terms respectively enforce locality, match structure-sensitive scale, preserve class-compatible directions, and downweight unreliable pseudo-labels. All normalized components use unit weights, avoiding cost-balancing hyperparameters.

Let $\mu_y$ be uniform over $\mathcal I_y^S$ and let
$\nu_y(j)=\rho_j/\sum_{l\in\mathcal I_y^T}\rho_l$. If either set is empty, class
$y$ is skipped for that minibatch. Otherwise, we compute an entropic transport
plan
\begin{equation}
    \pi_y^\star
    =
    \arg\min_{\pi\in\Pi(\mu_y,\nu_y)}
    \langle \pi,C^{(y)}\rangle
    +\varepsilon_{\rm ot}
    \sum_{i,j}\pi_{ij}\big(\log \pi_{ij}-1\big),\nonumber
    \label{eq:tpfm_coupling}
\end{equation}
where $\Pi(\mu_y,\nu_y)$ is the set of couplings with marginals $\mu_y$ and $\nu_y$. The minibatch coupling is $\pi_B=\operatorname{Normalize}(\sum_y\pi_y^\star)$. Here $\operatorname{sg}(\cdot)$ denotes stop-gradient: it is the identity in the forward pass and blocks backward gradients. We use $\operatorname{sg}(\pi_B)$ in the loss to prevent unstable Sinkhorn gradients.

\textbf{Metric-Corrected Polar Flow Regression.}
For a coupled pair $(\mathbf{z}_i^S,\mathbf{z}_j^T)\sim\operatorname{sg}(\pi_B)$, define the
constant-speed geodesic interpolant
\begin{equation}
    \mathbf{z}_t
    =
    \gamma_{ij}(t)
    =
    \operatorname{Exp}_{\mathbf{z}_i^S}^{c}\!\left(
    t\operatorname{Log}_{\mathbf{z}_i^S}^{c}(\mathbf{z}_j^T)
    \right),
    \qquad t\sim\mathcal U[0,1].\nonumber
    \label{eq:geodesic_interpolation}
\end{equation}
Let $\xi_t$ be its analytic velocity,
   $\xi_t=P_{\mathbf{z}_i^S\to \mathbf{z}_t}^{c}\!\left(\operatorname{Log}_{\mathbf{z}_i^S}^{c}(\mathbf{z}_j^T)\right)$,
where $P_{x\to y}^{c}$ denotes parallel transport. Writing $\mathbf{z}_t$ in polar coordinates as $(r_t,u_t)$ with $u_t=u(\mathbf{z}_t)$, the radial and angular velocities are:
\begin{equation}
    \dot r_t = d r_{\mathbf{z}_t}[\xi_t],
    \qquad
    \dot u_t = d u_{\mathbf{z}_t}[\xi_t]\in T_{u_t}\mathbb S^{d-1}.\nonumber
    \label{eq:target_polar_velocity}
\end{equation}
We parameterize the learned vector field in polar form,
\begin{equation}
    \mathbf v_\psi(z_t,t)
    =
    a_\psi(\mathbf{z}_t,t)\partial_r
    +
    b_\psi(\mathbf{z}_t,t),
    \qquad
    b_\psi(\mathbf{z}_t,t)\in T_{u_t}\mathbb S^{d-1},\nonumber
    \label{eq:polar_vector_field}
\end{equation}
where $a_\psi$ predicts radial speed and $b_\psi$ predicts angular speed. The
unified topology-conditioned polar flow matching loss is
\begin{equation}
\begin{aligned}
    \mathcal L_{\rm FM}
    =
    \mathbb E_{t\sim\mathcal U[0,1],\,(\mathbf{z}_i^S,\mathbf{z}_j^T)\sim\operatorname{sg}(\pi_B)}
    \Big[
    |a_\psi(\mathbf{z}_t,t)-\dot r_t|^2
    +S_c(r_t)^2
    \|b_\psi(\mathbf{z}_t,t)-\dot u_t\|_{\mathbb S}^{2}
    \Big].
\end{aligned}
\label{eq:tpfm_loss}
\end{equation}
The factor $S_c(r_t)^2$ is the angular block of the polar metric in
Eq.~\eqref{eq:polar_metric}. Therefore, \method{} learns one manifold flow whose
velocity error is decomposed and measured by the correct curvature-dependent
geometry. This distinguishes our objective from generic Euclidean or Riemannian
flow matching, while keeping the coupling and flow supervision in a single loss.

\subsection{Learning Framework}
\label{sec:framework}

\method{} separates endpoint shaping from trajectory learning. The radial and
angular losses regularize endpoint distributions in polar coordinates, while
flow matching learns a single coupled manifold transport. This distinction is
important: independent radial and angular flows would ignore the metric coupling
and can produce distorted, non-geodesic trajectories.

Formally, \method{} jointly optimizes the source task capability, geometric structural integrity, and the manifold transport trajectory. The objective function is formulated as:
\begin{equation}
    \mathcal{L}_{total} = \mathcal{L}_{task} + \lambda_{1}\mathcal{L}_{rad} + \lambda_{2}\mathcal{L}_{ang} +\lambda_{3}\mathcal{L}_{FM},
    \label{eq:total_loss}
\end{equation}
where $\lambda_{1}, \lambda_{2}$, and $\lambda_{3}$ are hyperparameters balancing the contributions of radial regularization, angular regularization, and manifold transport, respectively. We note that $\mathcal{L}_{task}$ is computed from the full tangent representation, so the radial-angular decomposition functions as a regularization bias rather than a hard restriction on all prediction signals. Specifically, $\mathcal{L}_{task}$ is the standard cross-entropy loss computed on the labeled source domain $\mathcal{D}_S$ to ensure discriminative capability:
\begin{equation}
    \mathcal{L}_{task} = \mathbb{E}_{(\mathcal G_S, y_S) \sim \mathcal{D}_S} \left[ -\log p(y_S | \mathbf{v}_S) \right],\quad
    \mathbf v_S=\operatorname{Log}_o^c(f_\theta(\mathcal G_S)),\nonumber
    \label{eq:task_loss}
\end{equation}
where $p(y|\mathbf{v}_S) = \operatorname{Softmax}(\mathbf{W}\mathbf{v}_S + \mathbf{b})$ denotes the predicted probability in the tangent space.

\subsection{Theoretical Analysis}
\label{sec:theory}

Our analysis is conditional: it identifies when topology-conditioned polar
transport is beneficial, rather than claiming universal topology-semantics
identifiability or global convergence of nonconvex training. Let
$\mathbf{z}=f_\theta(G)\in\mathcal M_c$ be the Riemannian representation of $G$, and
write $\mathbf{z}=(r,u)$ in polar coordinates around $o$. Let $a(G)\in\mathbb R^m$ denote
topology-related graph statistics and $y$ the class label.

\begin{theorem}[A Structural Failure Mode of Domain Confusion]
\label{thm:domain_confusion_failure}
Consider a binary structural variable $A\in\{0,1\}$ with equal priors
$\Pr(A=0)=\Pr(A=1)=1/2$, label $Y=A$, and domain variable $D\in\{0,1\}$ with
$\Pr(D=A)=1-\eta$ for some $0\le\eta<1/2$. Assume a representation $Z$ depends
on the graph only through $A$ in this toy model. If an adversarial objective
enforces approximate domain invariance
\[
    {\rm TV}\big(P_{Z|D=0},P_{Z|D=1}\big)\le \delta,
\]
then the total variation between the label-conditionals of $Z$ is bounded by
\[
    {\rm TV}\big(P_{Z|Y=0},P_{Z|Y=1}\big)
    \le
    \frac{\delta}{1-2\eta}.
\]
Consequently, any classifier using $Z$ has error at least
\[
    \inf_h\Pr(h(Z)\neq Y)
    \ge
    \frac12\left(1-\frac{\delta}{1-2\eta}\right).
\]
\end{theorem}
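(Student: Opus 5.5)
Since $Z$ depends on the graph only through $A$, and $Y=A$, write $P_a := P_{Z\mid A=a}$ for $a\in\{0,1\}$. The assumption that $Z$ depends on the graph only through $A$ gives the Markov chain $D \to A \to Z$. So $P_{Z\mid D=d}$ is a mixture of $P_0$ and $P_1$ whose weights are the posteriors $\Pr(A=a\mid D=d)$. The plan has three steps: compute these posteriors, express the difference of the domain-conditionals as a scalar multiple of $P_0-P_1$, and then convert the resulting total-variation bound into an error lower bound through the Bayes risk.

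\textbf{Step 1 (posteriors).} I would take the model to mean that $D$ is $A$ passed through a symmetric binary channel with flip probability $\eta$, i.e.\ $\Pr(D=a\mid A=a)=1-\eta$ for both values of $a$. This is the natural reading of $\Pr(D=A)=1-\eta$ together with equal priors. Then $D$ is also uniform, and Bayes' rule gives $\Pr(A=d\mid D=d)=1-\eta$. Hence
\[
P_{Z\mid D=0}=(1-\eta)P_0+\eta P_1,\qquad P_{Z\mid D=1}=\eta P_0+(1-\eta)P_1 .
\]
This is the step I expect to need the most care. If the flip probabilities were asymmetric, with only their average equal to $\eta$, the posterior weights would change. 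I would therefore state the symmetric-channel reading explicitly as part of the toy model. Alternatively, one can check that the same computation still gives a factor at least $1-2\eta$ under the stated constraint whenever the channel is symmetric.

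\textbf{Step 2 (TV scaling).} Subtracting the two mixtures gives
\[
P_{Z\mid D=0}-P_{Z\mid D=1}=(1-2\eta)(P_0-P_1).
\]
Total variation is half the $L^1$ norm of the signed measure, so it scales linearly, and
\[
{\rm TV}(P_{Z\mid D=0},P_{Z\mid D=1})=(1-2\eta)\,{\rm TV}(P_0,P_1).
\]
Since $\eta<1/2$, the factor $1-2\eta$ is positive, and we can divide by it. Using $Y=A$, this yields ${\rm TV}(P_{Z\mid Y=0},P_{Z\mid Y=1})\le \delta/(1-2\eta)$.

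\textbf{Step 3 (error bound).} With equal priors on $Y$, any (possibly randomized) classifier $h$ has error
\[
\Pr(h(Z)\neq Y)=\tfrac12\bigl[P_0(h=1)+P_1(h=0)\bigr]=\tfrac12\bigl[1-(P_0(h=0)-P_1(h=0))\bigr].
\]
The difference $P_0(h=0)-P_1(h=0)$ is at most ${\rm TV}(P_0,P_1)$, which is the Le Cam / Neyman--Pearson bound for two equally likely hypotheses. Taking the infimum over $h$ and plugging in Step 2 gives
\[
\inf_h \Pr(h(Z)\neq Y)\ \ge\ \tfrac12\bigl(1-\delta/(1-2\eta)\bigr).
\]
When $\delta/(1-2\eta)\ge 1$ the bound is vacuous but still true.
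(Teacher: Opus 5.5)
Your Steps 2 and 3 are sound; allowing randomized $h$ is slightly more general than the paper's deterministic classifiers. Step 1, however, proves only a special case of the theorem.

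With $\Pr(A=0)=\Pr(A=1)=1/2$, the hypothesis $\Pr(D=A)=1-\eta$ constrains only the sum of the two mismatch rates $e_0=\Pr(D=1\mid A=0)$ and $e_1=\Pr(D=0\mid A=1)$, namely $e_0+e_1=2\eta$. Nothing in the statement forces $e_0=e_1$. Declaring a symmetric channel ``part of the toy model'' therefore changes the theorem rather than proving it.

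In the asymmetric case your posterior weights $(1-\eta,\eta)$ are wrong, and the equality ${\rm TV}(P_{Z\mid D=0},P_{Z\mid D=1})=(1-2\eta)\,{\rm TV}(P_0,P_1)$ fails. For example, $e_0=2\eta$, $e_1=0$ gives $\Pr(A=0\mid D=0)=1$ and $\Pr(A=0\mid D=1)=2\eta/(1+2\eta)$, so the factor is $1/(1+2\eta)$. Your closing ``alternatively'' sentence (which presumably meant \emph{asymmetric}) only asserts the needed fact. That verification is precisely the missing step.

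The repair is short, and it is what the paper does. Writing $\alpha_d=\Pr(A=0\mid D=d)$, Bayes' rule gives
\[
\alpha_0=\frac{1-e_0}{1-e_0+e_1},\qquad \alpha_1=\frac{e_0}{1+e_0-e_1},
\]
and the Markov property still yields $P_{Z\mid D=0}-P_{Z\mid D=1}=(\alpha_0-\alpha_1)(P_0-P_1)$. Expanding,
\[
\alpha_0-\alpha_1=\frac{1-e_0-e_1}{(1-e_0+e_1)(1+e_0-e_1)}=\frac{1-2\eta}{1-(e_0-e_1)^2}\ \ge\ 1-2\eta .
\]
The inequality holds because $|e_0-e_1|\le e_0+e_1=2\eta<1$, which puts the denominator in $(0,1]$.

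Thus ${\rm TV}(P_{Z\mid D=0},P_{Z\mid D=1})\ge(1-2\eta)\,{\rm TV}(P_0,P_1)$ in general, with equality when $e_0=e_1$. This inequality, rather than your equality, is all Step 2 needs, and your Step 3 then goes through unchanged.
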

Theorem~\ref{thm:domain_confusion_failure} formalizes structural degeneration.
If label-relevant structure correlates with the domain, forcing source and
target representations to be indistinguishable can make them uninformative
about the label. \method{} avoids this failure mode by not imposing
unconditional domain confusion. Instead, it aligns class-conditioned source and
target distributions through topology-aware polar transport.

\begin{definition}[Operational Polar Representation]
\label{def:operational_polar_disentanglement}
A representation $\mathbf{z}=f_\theta(G)$ is
$(\epsilon_s,\epsilon_y,\epsilon_g)$ operationally polar-disentangled for a graph domain adaptation task if the following conditions hold:
\begin{itemize}
    \item \text{{(i) Structure sufficiency:}}
$\inf_{q_s:\mathbb R_+\to\mathbb R^m}
\mathbb E\big[\|a(G)-q_s(r(G))\|\big]\le \epsilon_s$,
\item {{(ii) Semantic sufficiency:}}
$\inf_{q_y:\mathbb S^{d-1}\to\Delta^{K-1}}
\mathbb E\big[\ell_{\rm ce}(q_y(u(G)),y)\big]\le \epsilon_y$,
\item
\text{{(iii) Low first-order interference:}}
${\rm GC}(\mathcal L_{\rm rad},\mathcal L_{\rm ang})
:=
\frac{
\left|
\left\langle
\nabla_g \mathcal L_{\rm rad},
\nabla_g \mathcal L_{\rm ang}
\right\rangle_g
\right|
}{
\|\nabla_g \mathcal L_{\rm rad}\|_g
\|\nabla_g \mathcal L_{\rm ang}\|_g+\delta
}
\le \epsilon_g$,
\end{itemize}
for a small numerical constant $\delta>0$.
\end{definition}
Definition~\ref{def:operational_polar_disentanglement} formalizes task-level
polar disentanglement. Conditions \textit{(i)} and \textit{(ii)} assign complementary coordinate
roles, while condition \textit{(iii)} bounds normalized Riemannian first-order interaction
between radial and angular endpoint objectives. The definition captures the
coordinate-wise bias induced by our regularizers without assuming a globally
identifiable topology--semantics factorization.

\begin{theorem}[First-Order Orthogonality of Polar Endpoint Losses]
\label{thm:polar_disentanglement}
Let $\mathcal M_c$ be a $d$-dimensional constant-curvature manifold and let
$o\in\mathcal M_c$ be the origin. Work on a geodesic normal ball $B_o(\rho)$,
with $\rho<\pi/\sqrt c$ when $c>0$, and exclude $o$. For
$\mathbf{z}=(r,u)\in B_o(\rho)\setminus\{o\}$, the polar metric is Eq.~\eqref{eq:polar_metric}.
For a batch $\mathbf z=(\mathbf{z}_i)_{i=1}^B$, equip $\mathcal M_c^B$ with the product
metric. Let $\mathcal L_{\rm rad}(\mathbf z)=\ell_r(r_{1:B})$ and
$\mathcal L_{\rm ang}(\mathbf z)=\ell_u(u_{1:B})$ be almost-everywhere
differentiable radial-only and angular-only endpoint losses. Then, wherever the
gradients exist,
\[
    \left\langle
    \nabla_g\mathcal L_{\rm rad},
    \nabla_g\mathcal L_{\rm ang}
    \right\rangle_g=0.
\]
If a differentiable radial reliability function weights the angular loss $\mathcal L_{\rm ang}^{w}=\sum_i\alpha(r_i)\ell_i(u_i)$, then
\[
\left|
\left\langle
\nabla_g\mathcal L_{\rm rad},
\nabla_g\mathcal L_{\rm ang}^{w}
\right\rangle_g
\right|
\le
\sum_{i=1}^{B}
|\partial_{r_i}\ell_r|\,|\alpha'(r_i)|\,|\ell_i(u_i)|.
\]
For the choice $\alpha(r)=e^{-r}$, the coupling term is bounded by
$\sum_i |\partial_{r_i}\ell_r|e^{-r_i}|\ell_i(u_i)|$.
\end{theorem}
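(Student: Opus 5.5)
The plan is to work in geodesic polar coordinates on each factor and use the block-diagonal form of the metric in Eq.~\eqref{eq:polar_metric}. First I would justify that $(r,u)$ are legitimate smooth coordinates on $B_o(\rho)\setminus\{o\}$. Since $\rho$ is below the injectivity radius of $\mathcal M_c$ (which is $\pi/\sqrt c$ for $c>0$ and infinite otherwise), $\operatorname{Exp}_o$ is a diffeomorphism from the tangent ball onto $B_o(\rho)$. Writing $\mathbf v=\operatorname{Log}_o(\mathbf z)=r u$ with $r=\|\mathbf v\|_{g_o}$ and $u\in\mathbb S^{d-1}$ identifies $B_o(\rho)\setminus\{o\}$ with $(0,\rho)\times\mathbb S^{d-1}$. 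In these coordinates the Gauss lemma gives $g=dr^2+S_c(r)^2 g_{\mathbb S}$, with no cross terms $dr\,d\Omega$. I would also note that the Euclidean normalization $\mathbf v/\|\mathbf v\|_2$ used in implementation differs from $\mathbf v/\|\mathbf v\|_{g_o}$ only by the constant conformal factor at $o$. Hence it defines the same angular coordinate, and everything below is unaffected.

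Next I would compute gradients for a single point. For a function $f(r)$ alone, $df=f'(r)\,dr$. The inverse metric is block diagonal with $g^{rr}=1$, so $\nabla_g f=f'(r)\,\partial_r$. For a function $h(u)$ alone, $dh$ annihilates $\partial_r$. Hence $\nabla_g h=S_c(r)^{-2}\nabla_{\mathbb S}h$, which lies in the angular block $T_u\mathbb S^{d-1}$ and is $g$-orthogonal to $\partial_r$.

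On the product $\mathcal M_c^B$ the metric is the sum of the factor metrics. The gradient of $\mathcal L_{\rm rad}=\ell_r(r_{1:B})$ is therefore $\sum_i \partial_{r_i}\ell_r\,\partial_{r_i}$. The gradient of $\mathcal L_{\rm ang}=\ell_u(u_{1:B})$ is $\sum_i S_c(r_i)^{-2}\nabla_{u_i}\ell_u$. The inner product splits as a sum of per-factor inner products, each between a radial and an angular vector, so it vanishes. Almost-everywhere differentiability is only used to ensure both gradients exist at the point under consideration. For instance, the sorted $W_1$ loss is piecewise linear in $r_{1:B}$ and is differentiable off ties and zero differences.

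For the weighted loss $\mathcal L^w_{\rm ang}=\sum_i\alpha(r_i)\ell_i(u_i)$, the product rule gives
\[
\nabla_g\mathcal L^w_{\rm ang}=\sum_i \alpha'(r_i)\ell_i(u_i)\,\partial_{r_i}+\sum_i \alpha(r_i)S_c(r_i)^{-2}\nabla_{u_i}\ell_i .
\]
The angular part is orthogonal to $\nabla_g\mathcal L_{\rm rad}$ by the previous step. Using $g(\partial_{r_i},\partial_{r_j})=\delta_{ij}$, only the radial part contributes, and the inner product equals $\sum_i \partial_{r_i}\ell_r\,\alpha'(r_i)\,\ell_i(u_i)$. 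The triangle inequality then yields the stated bound. Substituting $\alpha(r)=e^{-r}$, so that $|\alpha'(r)|=e^{-r}$, gives the final corollary.

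The main obstacle I expect is not the computation, which is short, but making the coordinate setup rigorous. Two points need care. First, the origin must be excluded, since $u$ is undefined there and $\partial_r$ is singular at $o$. Second, the tangent-space quantities used in the implementation must be matched with geodesic polar coordinates. I would handle both through the Gauss lemma and the injectivity-radius restriction $\rho<\pi/\sqrt c$.
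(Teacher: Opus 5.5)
Your proposal is correct and follows essentially the same route as the paper: both write the Riemannian gradient on each factor of the product metric as $(\partial_{r_i}F,\,S_c(r_i)^{-2}\nabla_{\mathbb S,i}F)$ using the block-diagonal polar metric, conclude that the radial and angular blocks are $g$-orthogonal so the pure losses have zero inner product, and for the weighted loss isolate the single cross term $\sum_i\partial_{r_i}\ell_r\,\alpha'(r_i)\ell_i(u_i)$ before applying the triangle inequality. Your justification of the chart via the Gauss lemma and the injectivity radius, and your remark that the Euclidean and $g_o$ normalizations of $\mathbf v$ give the same angular coordinate, are extra rigor; the paper simply takes the polar metric form as a hypothesis.
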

Theorem~\ref{thm:polar_disentanglement} justifies endpoint regularization: pure radial and pure angular endpoint losses are first-order orthogonal under the polar metric. The practical radius-weighted angular loss is not exactly orthogonal, but the reliability-weight derivative explicitly bounds its additional first-order interaction.

\begin{theorem}[Radial Structural Proxy Bound]
\label{thm:radial_proxy}
Let $P_S$ and $P_T$ be source and target graph distributions. For a graph $G$,
let $r(G)=\|\operatorname{Log}_o^c(f_\theta(G))\|_{g_o}$ and let
$a(G)\in\mathbb R^m$ denote topology-related graph statistics. Denote by
$r_\#P$ and $a_\#P$ the pushforward distributions induced by $r(G)$ and $a(G)$.
Suppose there exists an $L_a$-Lipschitz map $h:\mathbb R_+\to\mathbb R^m$ such
that, for $D\in\{S,T\}$,
    $\mathbb E_{G\sim P_D}[\|a(G)-h(r(G))\|]\le\epsilon_D$.
Then
\[
    W_1(a_\#P_S,a_\#P_T)
    \le
    L_a W_1(r_\#P_S,r_\#P_T)+\epsilon_S+\epsilon_T.
\]
For equal-size empirical distributions,
    $W_1(\widehat P_S^r,\widehat P_T^r)
    =
    \frac1B\sum_{k=1}^B |\mathcal R_S[k]-\mathcal R_T[k]|$,
which is Eq.~\eqref{eq:radial_loss}.
\end{theorem}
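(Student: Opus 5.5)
The plan is to prove the inequality by a triangle inequality in the Kantorovich--Rubinstein (coupling) form of $W_1$ on $\mathbb R^m$, passing through the pushforwards $(h\circ r)_\#P_S$ and $(h\circ r)_\#P_T$:
\[
W_1(a_\#P_S,a_\#P_T)\le W_1\big(a_\#P_S,(h\circ r)_\#P_S\big)+W_1\big((h\circ r)_\#P_S,(h\circ r)_\#P_T\big)+W_1\big((h\circ r)_\#P_T,a_\#P_T\big).
\]
We work with $W_1$ with respect to the Euclidean norm $\|\cdot\|$ on $\mathbb R^m$, matching the norm in the hypothesis. We also assume the first moments are finite, which follows from the hypothesis whenever $r$ has a finite first moment.

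\textbf{Outer terms.} For the first term, use the \emph{synchronous} coupling induced by $G\sim P_S$, namely the joint law of $(a(G),h(r(G)))$. Its marginals are $a_\#P_S$ and $(h\circ r)_\#P_S$. Hence
\[
W_1\big(a_\#P_S,(h\circ r)_\#P_S\big)\le \mathbb E_{G\sim P_S}\|a(G)-h(r(G))\|\le\epsilon_S.
\]
The third term is bounded by $\epsilon_T$ in exactly the same way.

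\textbf{Middle term.} Take an optimal coupling $\gamma$ of $r_\#P_S$ and $r_\#P_T$ on $\mathbb R_+\times\mathbb R_+$; it exists because the cost is lower semicontinuous. Push $\gamma$ forward under $(s,t)\mapsto(h(s),h(t))$. This gives a coupling of $(h\circ r)_\#P_S$ and $(h\circ r)_\#P_T$. The Lipschitz bound $\|h(s)-h(t)\|\le L_a|s-t|$ then yields
\[
W_1\big((h\circ r)_\#P_S,(h\circ r)_\#P_T\big)\le L_a\int|s-t|\,d\gamma=L_a W_1(r_\#P_S,r_\#P_T).
\]
If one prefers to avoid appealing to existence of an optimal coupling, take an $\eta$-optimal coupling and let $\eta\to0$. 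Summing the three bounds gives the claim.

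\textbf{Empirical identity.} For the empirical statement, use the standard fact that on $\mathbb R$ the $W_1$ distance equals $\int_0^1|F_S^{-1}(q)-F_T^{-1}(q)|\,dq$, where $F^{-1}$ denotes the quantile function. The comonotone (sorted) coupling is optimal for any convex cost of $s-t$. For two uniform measures on $B$ atoms, each quantile function is a step function equal to $\mathcal R_D[k]$ on the interval $((k-1)/B,k/B]$. The integral therefore reduces to $\frac1B\sum_k|\mathcal R_S[k]-\mathcal R_T[k]|$, which is Eq.~\eqref{eq:radial_loss}.

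Alternatively, one can argue combinatorially. Couplings of two equal-size uniform empirical measures form the Birkhoff polytope, so some permutation is optimal. Any crossing pair in a permutation can then be uncrossed without increasing $|s-t|$-cost, by the Monge property of convex costs, so the sorted matching is optimal.

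The main point needing care is the middle step: we must make sure that the Lipschitz map composed with an optimal coupling indeed gives an admissible coupling with the right marginals. We must also state the norm used for $W_1$ on $\mathbb R^m$ explicitly, since $L_a$ and $\epsilon_D$ must refer to the same norm. Everything else is routine.
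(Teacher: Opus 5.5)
Your proposal is correct and matches the paper's proof: a triangle inequality through the proxies $h_\#(r_\#P_S)$ and $h_\#(r_\#P_T)$, synchronous couplings for the two outer terms, a radial coupling pushed through $h$ for the middle term (the paper takes an infimum over arbitrary couplings rather than an optimal one), and the quantile-function formula for the empirical identity. Your extra remarks on the norm convention, finite first moments, and the Birkhoff/uncrossing alternative are harmless refinements that the paper does not include.
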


Theorem~\ref{thm:radial_proxy} gives a conditional, testable guarantee: when
radius predicts the chosen structural statistics with small error, radial
calibration controls the corresponding structural discrepancy. It does not claim
that radius automatically captures every graph-structural factor.

\begin{theorem}[Angular Margin Preservation]
\label{thm:angular_margin}
Let $\{\hat w_c\}_{c=1}^K\subset\mathbb S^{d-1}$ be normalized class prototypes,
and define
    $h(u)=\arg\max_{c\in[K]} \hat w_c^\top u$.
For a labeled sample $(\mathbf{z},y)$, let $u=u(\mathbf{z})$ and let
$\tilde u$ be the angular coordinate after an endpoint transport map. Define
the angular margin
\[
    m_y(u)
    =
    \hat w_y^\top u
    -
    \max_{c\ne y}\hat w_c^\top u .
\]
If 
    $m_y(u)\ge \mu$
    and
    $d_{\mathbb S}(u,\tilde u)<\frac{\mu}{2}$ for some $\mu>0$, then
    $h(\tilde u)=y$.
For a labeled distribution $P$,
\[
    R_{\rm post}(h)
    \le
    R_{\rm pre}(h)
    +
    \rho_\mu
    +
    \frac{2}{\mu}\Delta_{\rm ang},
\]
where $R_{\rm pre}(h)=\Pr_{P}[h(u)\ne y]$,
    $R_{\rm post}(h)=\Pr_{P}[h(\tilde u)\ne y],$
    $\rho_\mu=\Pr_{P}\!\left[0\le m_y(u)<\mu\right]$, and
    $\Delta_{\rm ang}=\mathbb E_{P}\!\left[d_{\mathbb S}(u,\tilde u)\right]$.
\end{theorem}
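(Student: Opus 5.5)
The plan is to prove the deterministic statement first, using the fact that each prototype score $u\mapsto \hat w_c^\top u$ is $1$-Lipschitz on the sphere with respect to the geodesic distance $d_{\mathbb S}$. The distributional risk bound then follows by splitting the post-transport error event into three pieces and applying Markov's inequality to the angular displacement.

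\textbf{Deterministic part.} For unit vectors $u,\tilde u\in\mathbb S^{d-1}$, the chordal distance is bounded by the arc length: $\|u-\tilde u\|_2 = 2\sin(d_{\mathbb S}(u,\tilde u)/2)\le d_{\mathbb S}(u,\tilde u)$. Since $\|\hat w_c\|_2=1$, Cauchy--Schwarz gives, for every $c\in[K]$,
\[
|\hat w_c^\top u-\hat w_c^\top\tilde u| \le \|u-\tilde u\|_2 \le d_{\mathbb S}(u,\tilde u).
\]
Write $\Delta:=d_{\mathbb S}(u,\tilde u)$. The score of the true class decreases by at most $\Delta$, and each competing score $\hat w_c^\top u$ with $c\neq y$ increases by at most $\Delta$. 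Hence
\[
m_y(\tilde u)\ge m_y(u)-2\Delta > \mu-2\cdot\tfrac{\mu}{2}=0 .
\]
A strictly positive margin means $\hat w_y^\top\tilde u$ strictly exceeds every other score. The argmax is therefore unique and equals $y$, so no tie-breaking convention is needed.

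\textbf{Risk bound.} Consider the event $\{h(\tilde u)\ne y\}$ and split it according to the value of $m_y(u)$.
\begin{itemize}
\item If $m_y(u)<0$, some other class strictly beats $y$ at $u$, so $h(u)\ne y$. This piece is contained in the pre-transport error event and contributes at most $R_{\rm pre}(h)$.
\item If $0\le m_y(u)<\mu$, the sample lies in the event whose probability defines $\rho_\mu$, so this piece contributes at most $\rho_\mu$.
\item If $m_y(u)\ge\mu$, the deterministic part shows that an error requires $d_{\mathbb S}(u,\tilde u)\ge\mu/2$.
\end{itemize}
By Markov's inequality, $\Pr[d_{\mathbb S}(u,\tilde u)\ge\mu/2]\le 2\Delta_{\rm ang}/\mu$. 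A union bound over the three pieces yields $R_{\rm post}(h)\le R_{\rm pre}(h)+\rho_\mu+\frac{2}{\mu}\Delta_{\rm ang}$.

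The main point needing care is the boundary case $m_y(u)=0$, where $h(u)$ depends on the tie-breaking rule. This case is harmless because it is absorbed into $\rho_\mu$, which includes $m_y(u)=0$. The other care point is using the chord--arc inequality, so that the Lipschitz bound holds in the intrinsic metric $d_{\mathbb S}$ rather than only in the ambient Euclidean metric. Everything else is routine.
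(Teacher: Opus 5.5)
Your proposal is correct and follows essentially the same route as the paper. Both use Cauchy--Schwarz with the chord--arc inequality to get $m_y(\tilde u)\ge m_y(u)-2d_{\mathbb S}(u,\tilde u)$, then place the post-transport error event inside the union of the pre-error, small-margin, and large-drift events, and close with a union bound and Markov's inequality. The only cosmetic differences are that the paper works with pairwise margins $\hat w_y^\top u-\hat w_c^\top u$ and proves the event inclusion by complements, whereas you bound the max-margin directly and partition by the value of $m_y(u)$.
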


Theorem~\ref{thm:angular_margin} explains why class-conditional angular alignment is preferable to unconditional domain alignment: labels are preserved when angular drift is small relative to the source prototype margin.

\begin{theorem}[Polar Flow Endpoint Stability]
\label{thm:fm_endpoint_stability}
Fix a coupling $\pi$ and assume all analytic geodesics and learned flow
trajectories stay inside a compact geodesic normal ball $B_o(\rho)$. Let
$\gamma_t(z_S,z_T)$ be the analytic geodesic and let $\varphi_t^\psi$ be the
trajectory generated by the learned polar field in Eq.~\eqref{eq:polar_vector_field}
with $\varphi_0^\psi=z_S$. Assume the reconstructed manifold vector field is
$L_v$-Lipschitz in $z$ on $B_o(\rho)$. Then there exists a constant
$C_{\rm geo}$ depending only on the ball and the metric such that
\[
    \mathbb E_{(z_S,z_T)\sim\pi}
    \left[d_{\mathcal M_c}(\varphi_1^\psi,z_T)\right]
    \le
    C_{\rm geo}e^{L_v}\sqrt{\mathcal L_{\rm FM}}.
\]
\end{theorem}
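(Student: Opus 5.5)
We prove the bound with a Grönwall-type stability argument for ODEs on a Riemannian manifold. The analytic geodesic $\gamma_t$ is compared with the learned trajectory $\varphi_t^\psi$, and the residual velocity error is controlled by $\mathcal L_{\rm FM}$.

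\textbf{Step 1: the polar loss controls the Riemannian velocity error.} Let $\hat v_\psi(z,t)=a_\psi\partial_r+b_\psi$ be the manifold vector field reconstructed from the polar parameterization. By the block form of the metric in Eq.~\eqref{eq:polar_metric}, $\partial_r$ is a unit vector $g$-orthogonal to $T_{u}\mathbb S^{d-1}$. Therefore, for any $z=(r,u)\neq o$,
\[
\|\hat v_\psi(z,t)-\xi_t\|_g^2=|a_\psi-\dot r_t|^2+S_c(r_t)^2\|b_\psi-\dot u_t\|_{\mathbb S}^2 ,
\]
because $\xi_t=\dot r_t\partial_r+\dot u_t$ is the decomposition of the analytic velocity. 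Hence
\[
\mathbb E_{t,\pi}\big[\|\hat v_\psi(\gamma_t,t)-\dot\gamma_t\|_g^2\big]=\mathcal L_{\rm FM}.
\]
The origin has measure zero along a.e.\ geodesic and is ignored.

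\textbf{Step 2: Grönwall comparison along the two trajectories.} Set $e(t)=d_{\mathcal M_c}(\varphi_t^\psi,\gamma_t)$, so that $e(0)=0$. On the compact ball $B_o(\rho)$, the squared distance is smooth away from the cut locus; for $c>0$ this is ensured by $\rho<\pi/(2\sqrt c)$ or by shrinking the ball. We compare the two velocities by parallel transport along the minimizing geodesic joining $\gamma_t$ to $\varphi_t^\psi$. The first variation of distance gives, for a.e.\ $t$,
\[
\dot e(t)\le \big\|\hat v_\psi(\varphi_t^\psi,t)-P\,\hat v_\psi(\gamma_t,t)\big\|_g+\|\hat v_\psi(\gamma_t,t)-\dot\gamma_t\|_g .
\]
The first term is at most $L_v e(t)$ by Lipschitzness, where Lipschitzness is understood via parallel transport. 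The metric distortion incurred in comparing tangent vectors at different points (curvature and second-fundamental-type terms of $d^2$) is bounded on the compact ball, and we absorb it into a constant $C_{\rm geo}$. Grönwall's inequality then yields
\[
e(1)\le C_{\rm geo}e^{L_v}\int_0^1\|\hat v_\psi(\gamma_t,t)-\dot\gamma_t\|_g\,dt .
\]

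\textbf{Step 3: take expectations.} Since $\gamma_1=z_T$, the left side is $d(\varphi_1^\psi,z_T)$. Take $\mathbb E_\pi$ of the Grönwall bound and apply Cauchy--Schwarz (equivalently Jensen) over $(t,z_S,z_T)$ with $t\sim\mathcal U[0,1]$:
\[
\mathbb E\!\int_0^1\|\cdot\|_g\,dt\le\Big(\mathbb E_{t,\pi}\|\cdot\|_g^2\Big)^{1/2}=\sqrt{\mathcal L_{\rm FM}} .
\]
This gives the claim.

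\textbf{Main obstacle.} The delicate step is Step 2, namely making the differential inequality for $e(t)$ rigorous on a curved space. The difficulty is that $e$ is only Lipschitz, and the comparison of vector fields at distinct points requires a consistent interpretation of the $L_v$-Lipschitz assumption. I would first try the cleanest route: work in normal coordinates at $o$ on the compact ball, where the metric $g$ is uniformly equivalent to the Euclidean metric with constants depending only on $\rho$ and $c$. There the Euclidean Grönwall argument applies to the coordinate ODEs, with the Lipschitz constant converted between $g$ and Euclidean norms. This conversion also explains why the constant $C_{\rm geo}$ depends only on the ball and the metric.
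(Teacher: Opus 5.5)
Your Steps 1--3 follow the paper's proof essentially step for step: you identify $\mathcal L_{\rm FM}$ with the mean squared Riemannian velocity residual via the block polar metric, bound the upper Dini derivative of $e(t)=d(\varphi_t^\psi,\gamma_t)$ by the first-variation inequality with parallel transport, split it into $L_v e(t)+\varepsilon(t)$, apply Gr\"onwall, and finish with Jensen/Cauchy--Schwarz. One refinement: the first-variation inequality has no curvature correction at first order, so the intrinsic argument already gives $C_{\rm geo}=1$ and the ``metric distortion'' you absorb is unnecessary; your normal-coordinate fallback yields the stated form only if $L_v$ is read as the coordinate Lipschitz constant, because converting an intrinsic constant puts Christoffel-symbol and norm-equivalence factors inside the exponent rather than into a prefactor depending only on the ball and metric.
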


This result follows from a Riemannian Gronwall argument: endpoint deviation is controlled by time-integrated metric-corrected velocity-regression error under a fixed population coupling. It does not imply global optimal transport or convergence of nonconvex training; minibatch Sinkhorn couplings add pseudo-label, coupling-estimation, finite-sample, and numerical-integration errors.

\begin{theorem}[Target-Risk Bound with Polar Discrepancy]
\label{thm:target_risk_polar}
Let $P_S$ and $P_T$ be source and target distributions over graph-label pairs.
Assume $\mathbf{z}=f_\theta(G)\in B_o(\rho)\subset\mathcal M_c$ and the loss
$\ell_\phi(\mathbf{z},y)=\ell(g_\phi(\mathbf{z}),y)\in[0,1]$ is $L_\ell$-Lipschitz in $z$. Let
$\Phi_\psi$ be the endpoint map induced by the learned polar flow and let
$\Psi$ be an ideal class-conditioned transport map. For each class $y$, let
$\Gamma_y$ be a lifted coupling over source--target graph pairs $(G,G')$ such
that the induced representation pair
$\mathbf z=\Psi(f_\theta(G))$ and $\mathbf z'=f_\theta(G')$ has marginals
$\Psi_\#P_S(\mathbf z\mid y)$ and $P_T(\mathbf z\mid y)$, respectively. Define
\[
D_{\rm rad}^{\Gamma}=
\sum_y\omega_y\mathbb E_{(G,G')\sim\Gamma_y}|r(\Psi(f_\theta(G)))-r(f_\theta(G'))|,
\; D_{\rm top}^{\Gamma}=
\sum_y\omega_y\mathbb E_{(G,G')\sim\Gamma_y}
\|q(G)-q(G')\|_2,
\]
\[
D_{\rm ang}^{\Gamma}=
\sum_y\omega_y\mathbb E_{(G,G')\sim\Gamma_y}d_{\mathbb S}(u(\Psi(f_\theta(G))),u(f_\theta(G'))),
\]
where $\omega_y=P_T(Y=y)$. Let $C_\rho=\sup_{0\le t\le\rho}S_c(t)$ and let
$\lambda_{\rm lab}=\frac12\sum_y|P_S(Y=y)-P_T(Y=y)|$. If pseudo-labels used to
construct the target class-conditionals have error $\epsilon_{\rm pl}$, then
there exist constants $C_{\rm top}$, $C_{\rm FM}$, and $C_{\rm pl}$ such that
\[
\begin{aligned}
    R_T(g_\phi)
    \le
    R_S(g_\phi\circ\Phi_\psi)
    +L_\ell\Big(
        D_{\rm rad}^{\Gamma}
        +C_\rho D_{\rm ang}^{\Gamma}
        +C_{\rm top}D_{\rm top}^{\Gamma}
        +C_{\rm FM}\sqrt{\mathcal L_{\rm FM}}
    \Big)+C_{\rm pl}\epsilon_{\rm pl}
    +\lambda_{\rm lab}.
\end{aligned}
\]
\end{theorem}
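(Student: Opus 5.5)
The plan is a chain of triangle inequalities. Each link moves from the source risk of the transported classifier to the target risk, and each link costs one discrepancy term.

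\textbf{Label shift.} First decompose both risks by class: $R_T(g_\phi)=\sum_y \omega_y R_T^y$ and $R_S(g_\phi\circ\Phi_\psi)=\sum_y P_S(Y=y)R_S^y$, where $R_D^y$ denotes the conditional risk. Because $\ell\in[0,1]$, reweighting the source class-conditionals from $P_S(Y=y)$ to $\omega_y$ changes the source risk by at most $\tfrac12\sum_y|P_S(Y=y)-P_T(Y=y)|=\lambda_{\rm lab}$. This is the usual total-variation bound for a bounded function, applied to the label marginals.

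\textbf{Pseudo-labels.} The target class-conditionals that are actually available are built from pseudo-labels rather than true labels. Replacing the true conditionals by the pseudo-labelled ones moves a mass of at most $\epsilon_{\rm pl}$ of target samples between classes. Since the loss is bounded, this contributes the term $C_{\rm pl}\epsilon_{\rm pl}$.

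\textbf{Transport within a class.} It then remains to bound, for each $y$, the difference between $\mathbb E_{P_T(\cdot|y)}\ell_\phi(\mathbf z',y)$ and $\mathbb E_{P_S(\cdot|y)}\ell_\phi(\Phi_\psi(f_\theta(G)),y)$. I would insert the ideal map $\Psi$ as an intermediate point.
\begin{enumerate}
\item[(a)] Compare $\Phi_\psi$ with $\Psi$. Lipschitzness of the loss gives $L_\ell\,\mathbb E\,d_{\mathcal M_c}(\Phi_\psi(\mathbf z),\Psi(\mathbf z))$.
\item[(b)] Apply Theorem~\ref{thm:fm_endpoint_stability} to this distance, with the coupling fixed to the one that pairs each source point with its image under $\Psi$. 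It is bounded by $C_{\rm geo}e^{L_v}\sqrt{\mathcal L_{\rm FM}}$, which gives $C_{\rm FM}$.
\item[(c)] Compare $\Psi(f_\theta(G))$ with $f_\theta(G')$. Use the lifted coupling $\Gamma_y$: since its representation marginals are exactly $\Psi_\#P_S(\cdot|y)$ and $P_T(\cdot|y)$, the difference of expected losses equals an expectation under $\Gamma_y$ of a loss difference. That expectation is at most $L_\ell\,\mathbb E_{\Gamma_y} d_{\mathcal M_c}(\mathbf z,\mathbf z')$.
\end{enumerate}

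\textbf{Distance in polar coordinates.} The geometric step bounds the geodesic distance by polar discrepancies. Take a path from $(r,u)$ that first moves radially to $(r',u)$ and then along the sphere of radius $r'$ to $(r',u')$. Under the metric in Eq.~\eqref{eq:polar_metric} its length is $|r-r'|+S_c(r')\,d_{\mathbb S}(u,u')$. Hence
\[
d_{\mathcal M_c}(\mathbf z,\mathbf z')\le |r-r'|+C_\rho\, d_{\mathbb S}(u,u')
\]
on $B_o(\rho)$. Taking $\Gamma_y$-expectations and summing with weights $\omega_y$ yields $D_{\rm rad}^\Gamma+C_\rho D_{\rm ang}^\Gamma$.

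\textbf{Main obstacle: the $D_{\rm top}$ term.} With the argument above, the inequality already holds with $C_{\rm top}=0$. To give $D_{\rm top}^\Gamma$ real content, I would allow the loss to depend on the graph through topology, $\ell(g_\phi(\mathbf z,q(G)),y)$, with Lipschitz constant proportional to $C_{\rm top}$ in $q$. A simpler alternative is to note that any nonnegative $C_{\rm top}$ preserves the bound, and then invoke Theorem~\ref{thm:radial_proxy} to interpret that term as structural calibration.

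The care needed elsewhere is in stating the order of the conditioning. The $\Phi_\psi$-versus-$\Psi$ comparison is made under the source marginal. The pseudo-label error enters only where target class-conditionals are estimated, and it must be kept separate from the $\lambda_{\rm lab}$ reweighting step.
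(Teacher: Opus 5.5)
Your proposal follows the paper's proof essentially step for step: class decomposition through the lifted coupling $\Gamma_y$ plus Lipschitzness, the radial-then-angular path giving $d_{\mathcal M_c}\le|r-r'|+C_\rho\,d_{\mathbb S}(u,u')$, insertion of the nonnegative $C_{\rm top}D_{\rm top}^\Gamma$ term (the paper also notes that $C_{\rm top}=0$ suffices), label reweighting at cost $\lambda_{\rm lab}$, Theorem~\ref{thm:fm_endpoint_stability} applied to the coupling pairing $Z$ with $\Psi(Z)$, and a bounded-loss pseudo-label slack. The paper settles the ordering you flag at the end in exactly the way you intend: it reweights the $\Psi$-transported risk from $\omega_y$ to $P_S(Y=y)$ first, and only then compares $\Psi$ with $\Phi_\psi$, so the flow-matching step is taken under the source marginal.
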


Theorem~\ref{thm:target_risk_polar} shows that the target risk is controlled by the transported-source risk, radial discrepancy, angular class-conditional discrepancy, topology-conditioned transport discrepancy, flow endpoint error, pseudo-label noise, and label-prior mismatch. This is a conditional population bound: the empirical objective in Eq.~\eqref{eq:total_loss} is a tractable surrogate, while minibatch coupling, pseudo-labeling, and finite-sample effects remain approximation terms rather than being hidden in the notation.

%% file: code/4_experiments.tex
\section{Experiments}

\begin{table*}[t]
\small
\caption{Graph classification results (in \%) under node and edge density domain shifts on the Mutagenicity dataset, and feature domain shifts on DD, PROTEINS, BZR, BZR\_MD, COX2, and COX2\_MD. For convenience, PROTEINS, DD, COX2, COX2\_MD, BZR, and BZR\_MD are abbreviated as P, D, C, CM, B, and BM, respectively. \textbf{Bold} results indicate the best performance.}
\resizebox{1.0\textwidth}{!}{
\begin{tabular}{l|c|c|c|c|c|c|c|c|c|c|c|c}
\toprule
\textbf{Methods} & \multicolumn{3}{c|}{\textbf{Node Shift}} & \multicolumn{3}{c|}{\textbf{Edge Shift}} & \multicolumn{6}{c}{\textbf{Feature Shift}} \\
\cmidrule(lr){2-4} \cmidrule(lr){5-7} \cmidrule(lr){8-13}
& M0$\rightarrow$M1 & M0$\rightarrow$M2 & M0$\rightarrow$M3
& M0$\rightarrow$M1 & M0$\rightarrow$M2 & M0$\rightarrow$M3
& P$\rightarrow$D & D$\rightarrow$P & C$\rightarrow$CM & CM$\rightarrow$C & B$\rightarrow$BM & BM$\rightarrow$B \\
\midrule
WL subtree & 34.3 & 40.4 & 52.7 & 34.4 & 47.6 & 52.7 & 43.0 & 42.2 & 53.1 & 58.2 & 51.3 & 44.0 \\
GCN & 64.1\scriptsize{$\pm$1.4} & 65.5\scriptsize{$\pm$2.0} & 56.9\scriptsize{$\pm$2.1} & 66.3\scriptsize{$\pm$1.7} & 63.6\scriptsize{$\pm$1.4} & 56.0\scriptsize{$\pm$1.4} & 48.9\scriptsize{$\pm$2.0} & 60.9\scriptsize{$\pm$2.3} & 51.2\scriptsize{$\pm$1.8} & 66.9\scriptsize{$\pm$1.8} & 48.7\scriptsize{$\pm$2.0} & 78.8\scriptsize{$\pm$1.7} \\
GIN & 66.5\scriptsize{$\pm$2.1} & 52.0\scriptsize{$\pm$1.7} & 53.7\scriptsize{$\pm$1.7} & 67.1\scriptsize{$\pm$1.7} & 54.2\scriptsize{$\pm$2.6} & 55.4\scriptsize{$\pm$1.9} & 57.3\scriptsize{$\pm$2.2} & 61.9\scriptsize{$\pm$1.9} & 53.8\scriptsize{$\pm$2.5} & 55.6\scriptsize{$\pm$2.0} & 49.9\scriptsize{$\pm$2.4} & 79.2\scriptsize{$\pm$2.8} \\
GMT & 65.7\scriptsize{$\pm$1.8} & 62.1\scriptsize{$\pm$2.1} & 59.0\scriptsize{$\pm$2.0} & 67.9\scriptsize{$\pm$1.3} & 61.5\scriptsize{$\pm$1.8} & 58.2\scriptsize{$\pm$2.4} & 59.5\scriptsize{$\pm$2.5} & 50.7\scriptsize{$\pm$2.2} & 49.3\scriptsize{$\pm$1.8} & 58.2\scriptsize{$\pm$2.0} & 50.2\scriptsize{$\pm$2.3} & 74.4\scriptsize{$\pm$1.8} \\
CIN & 65.1\scriptsize{$\pm$1.7} & 66.0\scriptsize{$\pm$1.7} & 55.2\scriptsize{$\pm$1.5} & 66.3\scriptsize{$\pm$1.8} & 60.8\scriptsize{$\pm$1.7} & 55.8\scriptsize{$\pm$2.4} & 59.1\scriptsize{$\pm$2.6} & 58.0\scriptsize{$\pm$2.7} & 51.2\scriptsize{$\pm$2.0} & 55.6\scriptsize{$\pm$1.5} & 49.2\scriptsize{$\pm$1.4} & 74.2\scriptsize{$\pm$1.9} \\
PathNN & 70.2\scriptsize{$\pm$1.5} & 67.1\scriptsize{$\pm$2.0} & 58.0\scriptsize{$\pm$1.9} & 68.9\scriptsize{$\pm$1.9} & 62.9\scriptsize{$\pm$1.7} & 58.1\scriptsize{$\pm$1.6} & 57.9\scriptsize{$\pm$1.8} & 53.8\scriptsize{$\pm$3.3} & 49.8\scriptsize{$\pm$1.7} & 66.9\scriptsize{$\pm$2.5} & 50.3\scriptsize{$\pm$2.3} & 75.3\scriptsize{$\pm$2.2} \\
\midrule
dDGM & 79.1\scriptsize{$\pm$0.4} & 70.3\scriptsize{$\pm$0.5} & 63.9\scriptsize{$\pm$0.3} & 76.5\scriptsize{$\pm$1.3} & 69.0\scriptsize{$\pm$1.3} & 67.0\scriptsize{$\pm$0.4} & 59.0\scriptsize{$\pm$1.8} & 63.3\scriptsize{$\pm$1.5} & 58.4\scriptsize{$\pm$2.7} & 74.3\scriptsize{$\pm$1.2} & 53.6\scriptsize{$\pm$1.6} & 79.2\scriptsize{$\pm$1.5} \\
RieGrace & 78.5\scriptsize{$\pm$1.3} & 70.6\scriptsize{$\pm$1.4} & 62.8\scriptsize{$\pm$1.3} & 76.4\scriptsize{$\pm$0.5} & 68.9\scriptsize{$\pm$0.6} & 66.1\scriptsize{$\pm$1.2} & 59.8\scriptsize{$\pm$1.4} & 63.8\scriptsize{$\pm$1.3} & 56.8\scriptsize{$\pm$2.1} & 66.6\scriptsize{$\pm$1.4} & 52.2\scriptsize{$\pm$2.3} & 78.9\scriptsize{$\pm$2.0} \\
ProGDM & 75.2\scriptsize{$\pm$1.2} & 69.0\scriptsize{$\pm$1.1} & 55.7\scriptsize{$\pm$1.4} & 74.6\scriptsize{$\pm$1.2} & 67.5\scriptsize{$\pm$1.2} & 55.5\scriptsize{$\pm$0.7} & 58.7\scriptsize{$\pm$2.0} & 59.6\scriptsize{$\pm$2.3} & 54.2\scriptsize{$\pm$1.7} & 75.2\scriptsize{$\pm$2.0} & 52.6\scriptsize{$\pm$1.3} & 75.4\scriptsize{$\pm$2.9} \\
D-GCN & 75.2\scriptsize{$\pm$0.5} & 67.9\scriptsize{$\pm$0.4} & 56.3\scriptsize{$\pm$1.1} & 73.8\scriptsize{$\pm$1.2} & 66.8\scriptsize{$\pm$1.4} & 56.7\scriptsize{$\pm$1.0} & 62.4\scriptsize{$\pm$1.9} & 59.7\scriptsize{$\pm$2.1} & 53.7\scriptsize{$\pm$1.8} & 77.6\scriptsize{$\pm$1.9} & 51.7\scriptsize{$\pm$1.6} & 75.7\scriptsize{$\pm$2.0} \\
\midrule
DEAL & 77.1\scriptsize{$\pm$0.9} & 70.9\scriptsize{$\pm$0.9} & 60.3\scriptsize{$\pm$1.1} & 76.6\scriptsize{$\pm$1.6} & 70.6\scriptsize{$\pm$1.2} & 60.2\scriptsize{$\pm$2.1} & 61.7\scriptsize{$\pm$2.0} & 60.0\scriptsize{$\pm$1.5} & 52.7\scriptsize{$\pm$2.7} & 69.4\scriptsize{$\pm$2.9} & 52.4\scriptsize{$\pm$2.9} & 78.6\scriptsize{$\pm$1.4} \\
SGDA & 77.5\scriptsize{$\pm$0.6} & 69.7\scriptsize{$\pm$0.5} & 65.5\scriptsize{$\pm$0.8} & 75.9\scriptsize{$\pm$1.6} & 68.9\scriptsize{$\pm$0.8} & 64.4\scriptsize{$\pm$0.4} & 48.3\scriptsize{$\pm$2.0} & 55.8\scriptsize{$\pm$2.6} & 49.8\scriptsize{$\pm$1.8} & 66.9\scriptsize{$\pm$2.3} & 50.3\scriptsize{$\pm$2.1} & 78.8\scriptsize{$\pm$2.6} \\
A2GNN & 73.5\scriptsize{$\pm$1.9} & 66.1\scriptsize{$\pm$1.5} & 60.4\scriptsize{$\pm$1.1} & 69.5\scriptsize{$\pm$1.4} & 68.6\scriptsize{$\pm$1.4} & 58.8\scriptsize{$\pm$2.2} & 57.8\scriptsize{$\pm$2.1} & 60.3\scriptsize{$\pm$1.5} & 51.5\scriptsize{$\pm$1.8} & 67.7\scriptsize{$\pm$2.1} & 51.6\scriptsize{$\pm$2.3} & 77.5\scriptsize{$\pm$1.9} \\
StruRW & 78.3\scriptsize{$\pm$1.3} & 69.7\scriptsize{$\pm$1.3} & 62.6\scriptsize{$\pm$0.7} & 76.1\scriptsize{$\pm$1.5} & 69.0\scriptsize{$\pm$1.3} & 62.1\scriptsize{$\pm$1.0} & 59.1\scriptsize{$\pm$2.3} & 58.8\scriptsize{$\pm$2.8} & 51.2\scriptsize{$\pm$2.0} & 54.8\scriptsize{$\pm$2.9} & 49.2\scriptsize{$\pm$1.4} & 74.7\scriptsize{$\pm$2.1} \\
PA-BOTH & 69.8\scriptsize{$\pm$1.5} & 63.8\scriptsize{$\pm$1.9} & 55.3\scriptsize{$\pm$1.1} & 74.7\scriptsize{$\pm$1.1} & 65.3\scriptsize{$\pm$1.3} & 52.2\scriptsize{$\pm$1.5} & 54.2\scriptsize{$\pm$3.2} & 56.7\scriptsize{$\pm$2.6} & 52.9\scriptsize{$\pm$2.8} & 61.8\scriptsize{$\pm$2.0} & 47.5\scriptsize{$\pm$3.0} & 78.8\scriptsize{$\pm$1.9} \\
GAA & 79.3\scriptsize{$\pm$1.2} & 71.2\scriptsize{$\pm$0.7} & 65.6\scriptsize{$\pm$1.3} & 77.5\scriptsize{$\pm$1.2} & 70.0\scriptsize{$\pm$1.2} & 66.5\scriptsize{$\pm$1.3} & 62.4\scriptsize{$\pm$0.6} & 64.1\scriptsize{$\pm$0.8} & 59.4\scriptsize{$\pm$1.8} & 78.4\scriptsize{$\pm$1.2} & 57.2\scriptsize{$\pm$3.3} & 78.5\scriptsize{$\pm$3.0} \\
TDSS & 63.6\scriptsize{$\pm$1.3} & 56.7\scriptsize{$\pm$1.6} & 54.7\scriptsize{$\pm$1.0} & 71.6\scriptsize{$\pm$1.5} & 67.3\scriptsize{$\pm$1.0} & 55.4\scriptsize{$\pm$1.6} & 61.9\scriptsize{$\pm$1.1} & 63.6\scriptsize{$\pm$1.9} & 56.8\scriptsize{$\pm$1.3} & 77.0\scriptsize{$\pm$2.6} & 56.6\scriptsize{$\pm$1.1} & 79.1\scriptsize{$\pm$2.4} \\
\midrule
GOTDA & 77.3\scriptsize{$\pm$1.3} & 69.9\scriptsize{$\pm$1.7} & 65.3\scriptsize{$\pm$1.2} & 76.1\scriptsize{$\pm$1.2} & 68.5\scriptsize{$\pm$0.8} & 65.8\scriptsize{$\pm$0.6} & 62.1\scriptsize{$\pm$1.5} & 63.2\scriptsize{$\pm$1.7} & 58.5\scriptsize{$\pm$2.1} & 78.2\scriptsize{$\pm$1.1} & 57.0\scriptsize{$\pm$1.7} & 78.9\scriptsize{$\pm$2.6} \\
MASH & 76.1\scriptsize{$\pm$1.3} & 66.9\scriptsize{$\pm$1.3} & 54.3\scriptsize{$\pm$1.2} & 74.4\scriptsize{$\pm$1.2} & 66.1\scriptsize{$\pm$0.5} & 52.2\scriptsize{$\pm$1.2} & 61.8\scriptsize{$\pm$2.1} & 64.2\scriptsize{$\pm$2.4} & 56.2\scriptsize{$\pm$2.0} & 77.2\scriptsize{$\pm$2.1} & 55.2\scriptsize{$\pm$2.0} & 78.1\scriptsize{$\pm$2.3} \\
GeoAdapt & 76.9\scriptsize{$\pm$1.5} & 68.7\scriptsize{$\pm$1.4} & 60.1\scriptsize{$\pm$1.3} & 75.5\scriptsize{$\pm$1.2} & 67.5\scriptsize{$\pm$1.2} & 60.5\scriptsize{$\pm$0.6} & 63.1\scriptsize{$\pm$1.2} & 64.0\scriptsize{$\pm$1.4} & 56.7\scriptsize{$\pm$1.7} & 78.3\scriptsize{$\pm$2.1} & 57.5\scriptsize{$\pm$2.7} & 78.5\scriptsize{$\pm$2.2} \\
\midrule
\method{}-$\mathbb{E}^n$ & 79.4\scriptsize{$\pm$0.7} & 71.0\scriptsize{$\pm$1.0} & 65.2\scriptsize{$\pm$0.2} & 77.6\scriptsize{$\pm$0.4} & 70.2\scriptsize{$\pm$1.2} & 66.1\scriptsize{$\pm$0.3} & 61.9\scriptsize{$\pm$1.0} & 63.0\scriptsize{$\pm$2.0} & 56.5\scriptsize{$\pm$1.9} & 78.2\scriptsize{$\pm$1.0} & 56.8\scriptsize{$\pm$2.4} & 78.9\scriptsize{$\pm$1.3} \\
\method{}-$\mathbb{S}^n$ & 79.7\scriptsize{$\pm$0.7} & \textbf{72.0\scriptsize{$\pm$1.0}} & 65.9\scriptsize{$\pm$0.9} & 78.0\scriptsize{$\pm$0.5} & 71.0\scriptsize{$\pm$0.6} & \textbf{68.2\scriptsize{$\pm$1.0}} & \textbf{63.8\scriptsize{$\pm$1.2}} & 64.0\scriptsize{$\pm$2.7} & \textbf{59.7\scriptsize{$\pm$1.4}} & 78.4\scriptsize{$\pm$2.1} & 57.5\scriptsize{$\pm$1.8} & 79.1\scriptsize{$\pm$1.0} \\
\method{}-$\mathbb{H}^n$ & \textbf{79.8\scriptsize{$\pm$0.8}} & 71.7\scriptsize{$\pm$0.7} & \textbf{66.7\scriptsize{$\pm$0.8}} & \textbf{78.3\scriptsize{$\pm$0.7}} & \textbf{71.3\scriptsize{$\pm$0.4}} & 67.7\scriptsize{$\pm$0.9} & 63.3\scriptsize{$\pm$1.3} & \textbf{64.4\scriptsize{$\pm$1.6}} & 58.3\scriptsize{$\pm$2.2} & \textbf{79.0\scriptsize{$\pm$2.5}} & \textbf{58.0\scriptsize{$\pm$1.9}} & \textbf{79.4\scriptsize{$\pm$1.7}} \\
\bottomrule
\end{tabular}}
\vspace{-0.4cm}
\label{tab:node_edge_feature_shift_merged}
\end{table*}

\subsection{Experimental Settings}
\textbf{Dataset.}\label{dataset} We evaluate \method{} on two types of domain shift scenarios: (1) structure-based domain shifts: we partition the PROTEINS, NCI1, Mutagenicity~\citep{dobson2003distinguishing}, and ogbg-molhiv~\citep{hu2021ogblsc} datasets based on node and edge densities, following the protocol described in~\citep{yin2025dream}; (2) feature-based domain shifts: we evaluate \method{} on DD, PROTEINS, BZR, BZR\_MD, COX2, and COX2\_MD \citep{ sutherland2003spline} datasets. More details of experimental datasets are shown in Appendix~\ref{sec:dataset}.

\textbf{Baselines.}\label{baselines} We compare \method{} with competitive baselines on the above datasets, including two graph kernel methods: WL subtree \citep{shervashidze2011weisfeiler} and PathNN~\citep{michel2023path}; four general graph neural networks (GNNs): GCN \citep{kipf2017semi}, GIN \citep{xu2018how}, CIN \citep{bodnar2021weisfeiler} and GMT \citep{BaekKH21}; four manifold-based GNNs: dDGM~\citep{de2023latent}, RieGrace~\citep{sun2023self}, ProGDM~\citep{wang2024mixed}, and D-GCN~\citep{sun2024motif}; seven graph domain adaptation (GDA): DEAL~\citep{yin2022deal}, SGDA \citep{qiao2023semi}, StruRW \citep{liu2023structural}, A2GNN \citep{liu2024rethinking}, PA-BOTH \citep{liu2024pairwise}, GAA~\citep{fang2025benefits}, and TDSS~\citep{chen2025smoothness}; and three manifold-based GDA: GOTDA~\citep{long2022domain}, MASH~\citep{rustad2024graph}, and GeoAdapt~\citep{gharib2025geometric}. More settings are introduced in Appendix~\ref{sec:baselines}.

\subsection{Results Analysis}\label{sec:performance}

\textbf{Performance Comparison.} We evaluate \method{} against all baselines under diverse domain shifts. Results
are reported in Tables~\ref{tab:node_edge_feature_shift_merged} and
\ref{tab:proteins_node}--\ref{tab:hiv_idx}. We observe: First,
manifold-based GNNs outperform Euclidean GNNs, suggesting that non-Euclidean
geometry provides an inductive bias for graphs with hierarchical or relational
structure. However, manifold representation learning alone is insufficient, as
manifold-based adaptation baselines lag behind \method{} in many settings.
Second, \method{} achieves the strongest performance across node-density,
edge-density, and feature-shift scenarios. We attribute this to the two modules
of our framework: polar endpoint regularization reduces structural-scale
mismatch while preserving scale-normalized class semantics, and
topology-conditioned polar flow matching replaces adversarial minimax alignment
with stable geometry-aware transport supervision over compatible source--target
regions. Third, among the three geometries, the hyperbolic variant
\method{}-$\mathbb{H}^n$ obtains the best mean performance in 8 out of the 12
scenarios, indicating its advantage under hierarchy-rich structural shifts. The
spherical and Euclidean variants remain competitive, suggesting gains come not
only from curvature choice but also from the proposed polar endpoint and
transport objectives. More results and analysis are provided in
Appendix~\ref{sec:model performance}.

\begin{figure}[t]
    \centering
    \begin{minipage}[t]{0.56\linewidth}
        \centering
        \vspace{0pt}
        \includegraphics[width=\linewidth]{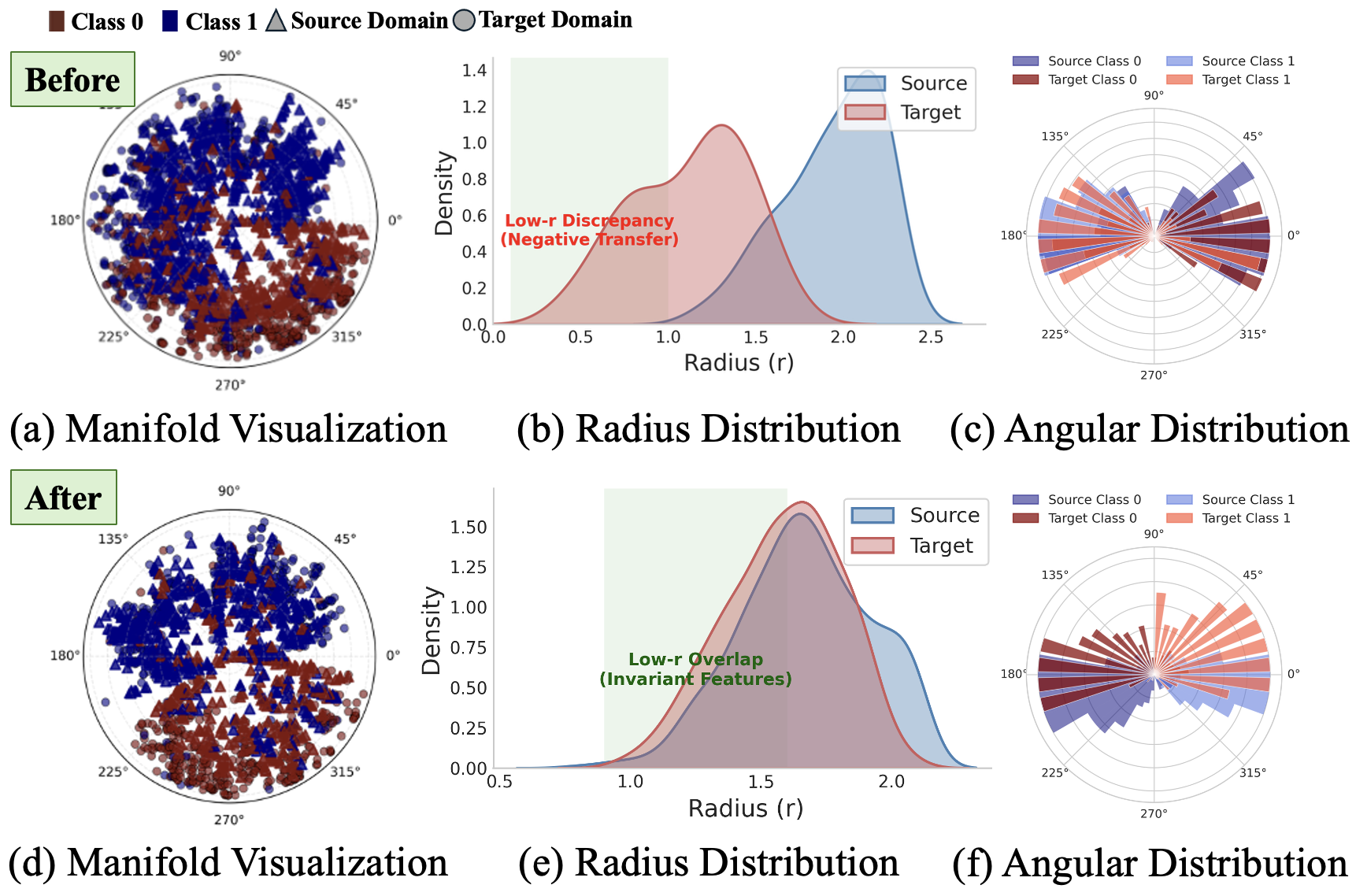}
        \captionof{figure}{Visualization of manifold geometry before (top) and after (bottom) alignment.}
        \label{fig:manifold_vis}
    \end{minipage}
    \hfill
    \begin{minipage}[t]{0.40\linewidth}
        \centering
        \vspace{0pt}
        \includegraphics[width=\linewidth]{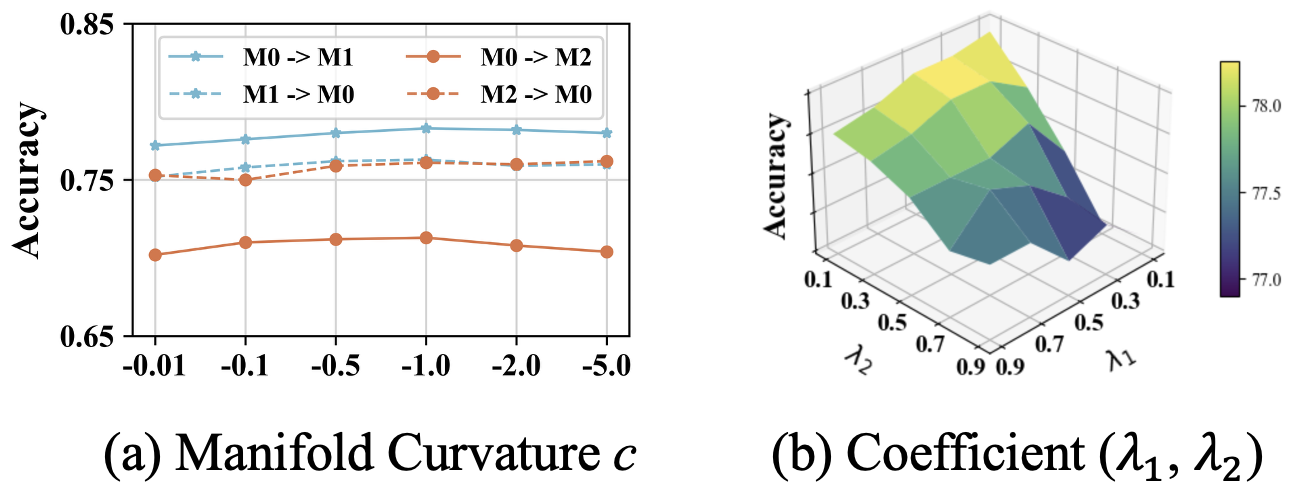}
        \vspace{-11pt}
        \captionof{figure}{Sensitivity to curvature $c$ and coefficients ($\lambda_1$, $\lambda_2$).}
        \label{fig:sensitivity}

        \vspace{4pt}

        \includegraphics[width=\linewidth]{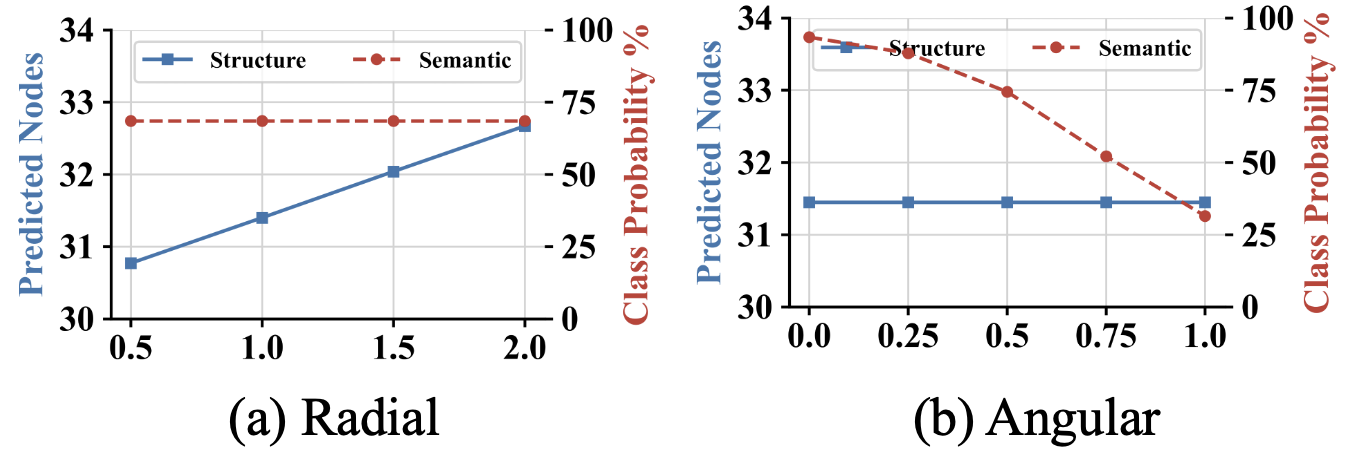}
        \vspace{-12pt}
        \captionof{figure}{Counterfactual disentanglement via radial and angular.}
        \label{fig:Intervention}
    \end{minipage}
    \vspace{-1.5cm}
\end{figure}

\textbf{Sensitivity Analysis.}
\label{sec:sensitivity} 
We conduct a sensitivity analysis of manifold curvature $c$ and balance
coefficients ($\lambda_1, \lambda_2$), shown in
Figure~\ref{fig:sensitivity}, to evaluate robustness of the chosen geometry and loss weighting under adaptation. As shown in Figure~\ref{fig:sensitivity}(a),
the model performs best at $c=-1.0$. This reflects a trade-off: curvatures
close to 0 fail to capture complex hierarchies due to limited Euclidean
capacity, while excessively large curvature causes geometric distortion,
hurting representation quality. We therefore choose $c=-1.0$ as the default
curvature. We set $\lambda_3=0.1$ and test ($\lambda_1$, $\lambda_2$) varying
between (0.1, 0.9). Figure~\ref{fig:sensitivity}(b) shows that \method{} reaches
its optimum when $\lambda_1=0.1$ and $\lambda_2=0.1$. Accuracy drops as these
weights increase, suggesting that excessive geometric alignment can overwhelm
the classification objective and hurt adaptation. More results appear in
Appendix~\ref{sec:sensitive analysis}.

\begin{wraptable}{r}{0.6\textwidth}
\vspace{-12pt}
\centering
\caption{The results of ablation studies on the Mutagenicity dataset. \textbf{Bold} results indicate the best performance.}
\vspace{-3pt}
\resizebox{0.6\textwidth}{!}{
\begin{tabular}{l|c|c|c|c|c|c}
\toprule
{\bf Methods} & M0$\rightarrow$M1 & M1$\rightarrow$M0 & M0$\rightarrow$M2 & M2$\rightarrow$M0 & M0$\rightarrow$M3 & M3$\rightarrow$M0 \\
\midrule
\method{} w/o FM & 76.2 & 73.6 & 69.6 & 74.0 & 65.3 & 66.2 \\
\method{} w/o RA & 76.9 & 74.4 & 70.7 & 73.9 & 66.1 & 67.1 \\
\method{} w/o AA & 76.4 & 74.0 & 70.2 & 73.5 & 65.7 & 66.0 \\
\method{} w/o PE & 75.9 & 72.9 & 69.0 & 72.5 & 64.6 & 64.8 \\
\midrule
\method{} & \textbf{78.3} & \textbf{76.3} & \textbf{71.3} & \textbf{76.1} & \textbf{67.7} & \textbf{69.2} \\
\bottomrule
\end{tabular}
}
\vspace{-0.4cm}
\label{tab:ablation_mutag_part}
\end{wraptable}
\textbf{Ablation Study.}
\label{sec:ablation} 
To examine the contribution of each component in \method{}, we conduct ablations
on four variants: 1) \method{} w/o FM, which removes the flow-based transport
mechanism; 2) \method{} w/o RA, which excludes the radial alignment loss; 3)
\method{} w/o AA, which excludes the angular alignment loss; and 4) \method{}
w/o PE, which disables the polar disentanglement strategy entirely.
Results are reported in Table~\ref{tab:ablation_mutag_part}, and we draw three observations: (1) \method{} outperforms \method{} w/o FM, confirming that
flow-based transport provides principled trajectory guidance to bridge domain
discrepancies. (2) The performance drops in \method{} w/o RA and \method{} w/o
AA validate the necessity of independent geometric constraints. (3) \method{}
w/o PE exhibits the sharpest degradation, underscoring that disentangling
structural and semantic components is critical for robust graph adaptation.
More results are reported in Appendix~\ref{sec:ablation study}.

\textbf{Visualization of Manifold Geometry.} 
We visualize polar geometry before and after alignment in
Figure~\ref{fig:manifold_vis}, using radius--angle plots and class-conditioned
histograms. Before alignment, source and target embeddings
exhibit radial mismatch: target samples concentrate at smaller radii, with
limited overlap. Same-class samples are poorly mixed across domains, with only
partial angular alignment. After alignment, radial profiles become closer,
source--target overlap increases near the origin, and angular histograms show
class-wise correspondence. Domains mix better within each class, without
collapsing inter-class angular margins, while different
classes remain directionally separated. These patterns support our
coordinate-wise design: radial regularization calibrates structural statistics,
whereas angular alignment preserves class-discriminative semantics during
transport.

\textbf{Disentanglement Analysis.}
To assess whether learned polar coordinates support an operational structure--semantics
separation, rather than global identifiability, we intervene on radial and angular factors. We scale $r$ while
fixing $u$, and interpolate $u$ across classes while fixing $r$, then evaluate
the perturbed representations with a structural probe and semantic classifier. As shown in
Figure~\ref{fig:Intervention}, radial scaling monotonically increases predicted
node count while class probability remains nearly unchanged. Conversely, angular
interpolation changes semantic probability but leaves the structural score
stable. These results suggest that $r$ mainly controls structure-related
variation, whereas $u$ carries class-discriminative semantics. More results
appear in Appendix~\ref{sec:disen_more}.

%% file: code/5_conclusion.tex
\section{Conclusion}

We proposed \method{}, a geometry-aware framework for graph domain adaptation under structural and semantic distribution shifts. \method{} uses polar endpoint regularization to calibrate structure-sensitive radial scales and align scale-normalized angular semantics with confidence-filtered pseudo-labels. It replaces adversarial domain confusion with topology-conditioned Riemannian flow matching, enabling stable transport between class- and structure-compatible source--target representations. Theoretical analysis explains the structural risk of unconditional domain confusion and supports the radial-angular design and flow-based transport objective. Experiments under node-density, edge-density, and feature shifts show strong and stable performance. Future work will explore adaptive curvature learning and extensions to heterogeneous and generative graph tasks.

%% file: code/6_appendix.tex
\appendix

\section{Riemannian Geometry and Stereographic Model}
\label{subsec:riemannian_geo}

\textbf{Problem Definition.}
We focus on the problem of GDA for graph classification. Let $\mathcal{D}_S\! =\! \{(\mathcal{G}_i^S, y_i^S)\}_{i=1}^{N_S}$ and $\mathcal{D}_T\! =\! \{\mathcal{G}_j^T\}_{j=1}^{N_T}$ denote the labeled source and unlabeled target domains. Both domains share a label space $\mathcal{Y}$ but under distribution shift $\mathbb{P}_S(\mathcal{D}_S) \neq \mathbb{P}_T(\mathcal{D}_T)$. Each graph is represented as $\mathcal{G} = (\mathcal{V}, \mathcal{E}, X)$, where $\mathcal{V}$ is the set of nodes, $\mathcal{E}$ is the set of edges, and $X \in \mathbb{R}^{|\mathcal{V}| \times F}$ denotes the node features. Our goal is to learn a geometry-aware encoder $f_\theta: \mathcal{G} \to \mathcal{M}_c$ and classifier $g_\phi: \mathcal{M}_c \to \mathcal{Y}$ to minimize classification error on $\mathcal{D}_T$, where $\mathcal{M}_c$ is a latent Riemannian manifold. 

Let $\mathcal{M}_c$ denote the $d$-dimensional Riemannian manifold with constant curvature $c$, defined as the interior of a ball $\mathcal{M}_c = \{ x \in \mathbb{R}^d \!:\! -c\|x\|^2 < 1 \}$. The Riemannian metric tensor $g_x^c$ is conformally equivalent to the Euclidean metric $g^E$, given by $g_x^c = (\lambda_x^c)^2 g^E$, where the conformal factor is $\lambda_x^c = 2 / (1 + c\|x\|^2)$. The curvature $c$ determines the geometry: $c < 0$ for the Hyperbolic space $\mathbb{H}^d$, $c > 0$ for the Spherical space $\mathbb{S}^d$, and $c \to 0$ for the Euclidean space $\mathbb{E}^d$.

\textbf{Exponential and Logarithmic Maps.}
Transformations between the manifold $\mathcal{M}_c$ and the tangent space $T_x\mathcal{M}_c$ (isomorphic to $\mathbb{R}^d$) are mediated by the Exponential map $\text{Exp}_x^c: T_x\mathcal{M}_c \to \mathcal{M}_c$ and the Logarithmic map $\text{Log}_x^c: \mathcal{M}_c \to T_x\mathcal{M}_c$. For the origin $0$, these maps are:
\begin{equation}
    \text{Exp}_0^c(v) = \tan_c(\|v\|) \frac{v}{\|v\|}, \quad \text{Log}_0^c(y) = \tan_c^{-1}(\|y\|) \frac{y}{\|y\|},\nonumber
\end{equation}
where $\tan_c(\cdot)$ and $\tan_c^{-1}(\cdot)$ are curvature-dependent trigonometric functions (\textit{e.g.}, $\tanh/\text{arctanh}$ for $c<0$).

\textbf{Geodesics and Distance.}
Geodesics generalize the concept of straight lines to Riemannian manifolds, representing the locally length-minimizing curves. The unique geodesic segment $\gamma: [0, 1] \to \mathcal{M}_c$ connecting a source point $x$ to a target point $y$ is parameterized as:
    $\gamma(t) = \text{Exp}_x^c(t \cdot \text{Log}_x^c(y)).\nonumber$
The Riemannian distance $d_{\mathcal{M}}(x, y)$ is the length of this geodesic, calculated as $d_{\mathcal{M}}(x, y) = \|\text{Log}_x^c(y)\|_x$. 

\textbf{Parallel Transport.}
Comparing tangent vectors at distinct points requires {Parallel Transport}. The transport of a vector $v \in T_0\mathcal{M}_c$ to $T_x\mathcal{M}_c$ along the geodesic connecting $0$ and $x$ is given by $P_{0 \to x}(v) = (\lambda_0^c / \lambda_x^c) v$. This operator ensures that the geometric properties of feature vectors (e.g., orientation) are preserved during transport.

\textbf{Riemannian Polar Coordinates.}
Formally, any point $x \in \mathcal{M}_c \setminus \{0\}$ can be uniquely represented via the polar decomposition of its tangent vector in $T_0\mathcal{M}_c$. We define the {radial coordinate} $r_x$ and {angular coordinate} $u_x$ as:
\begin{equation}
    r_x = \|\text{Log}_0^c(x)\|_2, \quad u_x = \frac{\text{Log}_0^c(x)}{\|\text{Log}_0^c(x)\|_2}.\nonumber
\end{equation}
Mathematically, $r_x$ corresponds to the Riemannian geodesic distance from the origin, \textit{i.e.}, $r_x = d_{\mathcal{M}}(0, x)$, while $u_x \in \mathbb{S}^{d-1}$ represents the direction in the tangent space. 

\textbf{Riemannian Graph Convolution.}
\label{subsec:rgcn}
Following the unified framework of Hyperbolic Graph Neural Networks (HGNN)~\citep{liu2019hyperbolic}, we generalize standard GCNs to a constant curvature Riemannian manifold $\mathcal{M}_c$. The sign of the curvature $c$ adaptively dictates the underlying geometry, unifying Hyperbolic space $\mathbb{H}$ ($c < 0$), Euclidean space $\mathbb{E}$ ($c = 0$), and Spherical Space $\mathbb{S}$ ($c > 0$) space within a single formalism. Fundamental operations, such as linear transformation and aggregation, are executed by mapping features to the tangent space $T_{\mathbf{0}}\mathcal{M}_c$, performing Euclidean computations, and retracting the results back to the manifold. Specifically, the linear transformation of a node feature $\mathbf{h}_j^{(l)} \in \mathcal{M}_c$ via a weight matrix $\mathbf{W}^{(l)}$ is defined by Möbius matrix-vector multiplication:
\begin{equation}
    \mathbf{m}_j^{(l)} = \mathbf{W}^{(l)} \otimes_c \mathbf{h}_j^{(l)} = \text{Exp}_{\mathbf{0}}^c \left( \mathbf{W}^{(l)} \cdot \text{Log}_{\mathbf{0}}^c(\mathbf{h}_j^{(l)}) \right).\nonumber
\end{equation}
Then, neighborhood aggregation is grounded in the Fréchet mean~\citep{karcher1977riemannian}, and approximated in the tangent space, resulting in the unified layer-wise update rule:
\begin{equation}
    \mathbf{h}_i^{(l+1)} \!=\! \text{Exp}_{\mathbf{0}}^c \Bigg( \sigma \Big( \sum_{j \in \mathcal{N}(i) \cup \{i\}} \alpha_{ij} \cdot \text{Log}_{\mathbf{0}}^c ( \mathbf{m}_j^{(l)} ) \Big) \Bigg).
\end{equation}
This architecture ensures that representation learning faithfully respects the underlying manifold geometry.

\setcounter{theorem}{0}
\section{Proof of Theorem 1}

\begin{theorem}[A Structural Failure Mode of Domain Confusion]
Consider a binary structural variable $A\in\{0,1\}$ with equal priors
$\Pr(A=0)=\Pr(A=1)=1/2$, label $Y=A$, and domain variable $D\in\{0,1\}$ with
$\Pr(D=A)=1-\eta$ for some $0\le\eta<1/2$. Assume a representation $Z$ depends
on the graph only through $A$ in this toy model. If an adversarial objective
enforces approximate domain invariance
\[
    {\rm TV}\big(P_{Z|D=0},P_{Z|D=1}\big)\le \delta,
\]
then the total variation between the label-conditionals of $Z$ is bounded by
\[
    {\rm TV}\big(P_{Z|Y=0},P_{Z|Y=1}\big)
    \le
    \frac{\delta}{1-2\eta}.
\]
Consequently, any classifier using $Z$ has error at least
\[
    \inf_h\Pr(h(Z)\neq Y)
    \ge
    \frac12\left(1-\frac{\delta}{1-2\eta}\right).
\]
\end{theorem}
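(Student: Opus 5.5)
The plan is to show directly that, in this toy model, the domain-conditional laws of $Z$ are mixtures of the label-conditional laws. Total variation then scales exactly by the mixing contrast $1-2\eta$. Write $P_a:=P_{Z\mid A=a}$ for $a\in\{0,1\}$. The hypothesis that $Z$ ``depends on the graph only through $A$'' will be read as the conditional independence $Z\perp D\mid A$. This gives $P_{Z\mid A=a,D=d}=P_a$ for every $d$, and hence
\[
P_{Z\mid D=d}=\Pr(A=0\mid D=d)\,P_0+\Pr(A=1\mid D=d)\,P_1 .
\]
Since $Y=A$, we also have $P_{Z\mid Y=a}=P_a$. The statement therefore reduces to comparing $\mathrm{TV}(P_0,P_1)$ with $\mathrm{TV}(P_{Z\mid D=0},P_{Z\mid D=1})$.

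Next I compute the posterior weights. I take $\Pr(D=A)=1-\eta$ in the natural symmetric sense $\Pr(D\neq A\mid A=a)=\eta$ for both $a$; this is the intended reading of the toy model, and I will state it explicitly. Together with the equal priors on $A$, Bayes' rule gives $\Pr(D=d)=1/2$, $\Pr(A=d\mid D=d)=1-\eta$ and $\Pr(A\neq d\mid D=d)=\eta$. Hence
\[
P_{Z\mid D=0}-P_{Z\mid D=1}=\big[(1-\eta)-\eta\big](P_0-P_1)=(1-2\eta)(P_0-P_1)
\]
as signed measures. Because $\mathrm{TV}(\mu,\nu)=\sup_E|\mu(E)-\nu(E)|$ is homogeneous in the signed difference, this yields
\[
\mathrm{TV}(P_{Z\mid D=0},P_{Z\mid D=1})=(1-2\eta)\,\mathrm{TV}(P_0,P_1).
\]
Dividing by $1-2\eta>0$ (valid since $\eta<1/2$) and using $\mathrm{TV}(\cdot)\le\delta$ gives the first claim, $\mathrm{TV}(P_{Z\mid Y=0},P_{Z\mid Y=1})\le\delta/(1-2\eta)$.

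For the error bound I would invoke the standard two-point (Le Cam) identity for equal priors. For any measurable $h$,
\[
\Pr(h(Z)\neq Y)=\tfrac12\big[P_0(h=1)+P_1(h=0)\big]=\tfrac12\big[1-(P_0(E)-P_1(E))\big],
\]
where $E=\{h=0\}$. Since $P_0(E)-P_1(E)\le\mathrm{TV}(P_0,P_1)$, taking the infimum over $h$ gives $\inf_h\Pr(h(Z)\neq Y)\ge\tfrac12(1-\mathrm{TV}(P_0,P_1))$. Substituting the first bound then gives $\tfrac12\big(1-\delta/(1-2\eta)\big)$. Randomized classifiers are covered as well, by replacing $\mathbf 1_E$ with a $[0,1]$-valued test function, which obeys the same bound.

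The computations are routine. The only step needing care is the modelling step: justifying the conditional independence $Z\perp D\mid A$ and the symmetric form of the noise $\Pr(D\neq A\mid A)$. Under an asymmetric flip one would instead get the factor $|\Pr(A=0\mid D=0)-\Pr(A=0\mid D=1)|$ in place of $1-2\eta$. I would therefore state the symmetric reading as the precise assumption and remark that the asymmetric case follows by the same argument with this modified factor.
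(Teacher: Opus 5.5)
Your proof follows the paper's route: you read the hypothesis as $Z\perp D\mid A$, write each $P_{Z\mid D=d}$ as a mixture of $P_0,P_1$, scale the signed difference, and finish with the two-point bound. As proposed, however, it proves the theorem only under an extra assumption. The statement fixes only the overall agreement rate $\Pr(D=A)=1-\eta$. It does not say that $\Pr(D\neq A\mid A=a)=\eta$ for each $a$. Your closing remark on the asymmetric case leaves the conclusion as $\delta/|\Pr(A=0\mid D=0)-\Pr(A=0\mid D=1)|$. That is not the claimed $\delta/(1-2\eta)$ unless you also show this mixing contrast is at least $1-2\eta$. That inequality is the missing step, and it is exactly what the paper proves.

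Set $e_0=\Pr(D=1\mid A=0)$ and $e_1=\Pr(D=0\mid A=1)$. Equal priors give $e_0+e_1=2\eta<1$. Bayes' rule then yields
\[
\Pr(A=0\mid D=0)-\Pr(A=0\mid D=1)=\frac{1-e_0}{1-e_0+e_1}-\frac{e_0}{1+e_0-e_1}=\frac{1-2\eta}{1-(e_0-e_1)^2}\ge 1-2\eta ,
\]
because $0\le (e_0-e_1)^2\le (e_0+e_1)^2<1$. Hence in general $\mathrm{TV}(P_{Z\mid D=0},P_{Z\mid D=1})\ge (1-2\eta)\,\mathrm{TV}(P_0,P_1)$. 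The contrast equals $1-2\eta$ exactly when $e_0=e_1$, which is your symmetric case.

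With this estimate in place, the division by $1-2\eta$ and the Le Cam step go through unchanged. So drop the symmetric-flip hypothesis and insert this one-line computation instead. Asymmetry does not weaken the bound; it only turns your identity into an inequality in the favourable direction.
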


\begin{proof}

Let $\mathcal Z$ denote the measurable representation space of $Z$, and let
$\mathcal B(\mathcal Z)$ denote the collection of measurable subsets of
$\mathcal Z$. For two probability measures $P$ and $Q$ on $\mathcal Z$, we use
the convention
\[
    {\rm TV}(P,Q)
    =
    \sup_{B\in\mathcal B(\mathcal Z)} |P(B)-Q(B)|.
\]
For $a,d\in\{0,1\}$, define
\[
    \mu_a := P_{Z\mid A=a},
    \qquad
    \nu_d := P_{Z\mid D=d}.
\]
Here, $\mu_a$ is the conditional distribution of the representation $Z$ given
the structural variable $A=a$, and $\nu_d$ is the conditional distribution of
$Z$ given the domain variable $D=d$. Since $Y=A$, we also have
\[
    P_{Z\mid Y=a} = P_{Z\mid A=a} = \mu_a .
\]

The assumption that $Z$ depends on the graph only through $A$ means that, in
this toy model, $Z$ is conditionally independent of $D$ given $A$. Equivalently,
\[
    P_{Z\mid A=a,D=d} = P_{Z\mid A=a}=\mu_a,
    \qquad a,d\in\{0,1\}.
\]
Therefore, each domain-conditional distribution $\nu_d$ can be written as a
mixture of the two structure-conditionals $\mu_0$ and $\mu_1$.

Assume equal class priors, i.e.,
\[
    \Pr(A=0)=\Pr(A=1)=\frac12 .
\]
Let
\[
    e_0 := \Pr(D=1\mid A=0),
    \qquad
    e_1 := \Pr(D=0\mid A=1),
\]
where $e_0$ and $e_1$ are the two possible mismatch rates between the structural
variable $A$ and the domain variable $D$. Since
\[
    \Pr(D=A)=1-\eta,
\]
we have
\[
    \Pr(D\neq A)=\eta.
\]
Under equal class priors,
\[
    \eta
    =
    \frac12 e_0+\frac12 e_1,
\]
and hence
\[
    e_0+e_1=2\eta .
\]

Now define the two mixture coefficients
\[
    \alpha_0 := \Pr(A=0\mid D=0),
    \qquad
    \alpha_1 := \Pr(A=0\mid D=1).
\]
By Bayes' rule,
\[
    \alpha_0
    =
    \frac{\Pr(D=0\mid A=0)\Pr(A=0)}{\Pr(D=0)}
    =
    \frac{1-e_0}{1-e_0+e_1},
\]
and
\[
    \alpha_1
    =
    \frac{\Pr(D=1\mid A=0)\Pr(A=0)}{\Pr(D=1)}
    =
    \frac{e_0}{e_0+1-e_1}.
\]
Therefore,
\[
    \nu_0
    =
    \alpha_0\mu_0+(1-\alpha_0)\mu_1,
\]
and
\[
    \nu_1
    =
    \alpha_1\mu_0+(1-\alpha_1)\mu_1 .
\]
Subtracting the two signed measures gives
\[
    \nu_0-\nu_1
    =
    (\alpha_0-\alpha_1)(\mu_0-\mu_1).
\]
Taking total variation on both sides yields
\[
    {\rm TV}(\nu_0,\nu_1)
    =
    |\alpha_0-\alpha_1|\,{\rm TV}(\mu_0,\mu_1).
\]

It remains to lower bound $|\alpha_0-\alpha_1|$. Since $\eta<1/2$, we have
$e_0+e_1<1$, so $\alpha_0>\alpha_1$. Thus
\[
\begin{aligned}
    \alpha_0-\alpha_1
    &=
    \frac{1-e_0}{1-e_0+e_1}
    -
    \frac{e_0}{1+e_0-e_1}  \\
    &=
    \frac{1-e_0-e_1}{(1-e_0+e_1)(1+e_0-e_1)} \\
    &=
    \frac{1-2\eta}{1-(e_0-e_1)^2}.
\end{aligned}
\]
Because $1-(e_0-e_1)^2\le 1$ and is positive, we obtain
\[
    \alpha_0-\alpha_1 \ge 1-2\eta .
\]
Combining this with the previous identity gives
\[
    {\rm TV}(\nu_0,\nu_1)
    \ge
    (1-2\eta)\,{\rm TV}(\mu_0,\mu_1).
\]
The adversarial domain-invariance condition assumes
\[
    {\rm TV}(P_{Z\mid D=0},P_{Z\mid D=1})
    =
    {\rm TV}(\nu_0,\nu_1)
    \le \delta .
\]
Therefore,
\[
    (1-2\eta)\,{\rm TV}(\mu_0,\mu_1)
    \le \delta .
\]
Since $\eta<1/2$, we have $1-2\eta>0$, and hence
\[
    {\rm TV}(\mu_0,\mu_1)
    \le
    \frac{\delta}{1-2\eta}.
\]
Using $Y=A$, this is equivalent to
\[
    {\rm TV}\big(P_{Z\mid Y=0},P_{Z\mid Y=1}\big)
    \le
    \frac{\delta}{1-2\eta}.
\]

Finally, consider any deterministic classifier $h:\mathcal Z\to\{0,1\}$.
Let
\[
    B_h := \{z\in\mathcal Z: h(z)=1\}
\]
be the measurable decision region where $h$ predicts class $1$. Under equal
class priors, the classification error of $h$ is
\[
\begin{aligned}
    \Pr(h(Z)\neq Y)
    &=
    \frac12 \Pr(h(Z)=1\mid Y=0)
    +
    \frac12 \Pr(h(Z)=0\mid Y=1) \\
    &=
    \frac12 \mu_0(B_h)
    +
    \frac12 \mu_1(B_h^c) \\
    &=
    \frac12\left[
        1-\big(\mu_1(B_h)-\mu_0(B_h)\big)
    \right].
\end{aligned}
\]
By the definition of total variation,
\[
    \mu_1(B_h)-\mu_0(B_h)
    \le
    {\rm TV}(\mu_0,\mu_1).
\]
Thus, for every classifier $h$,
\[
    \Pr(h(Z)\neq Y)
    \ge
    \frac12\left(1-{\rm TV}(\mu_0,\mu_1)\right).
\]
Taking the infimum over all classifiers $h$ and applying the bound above gives
\[
    \inf_h \Pr(h(Z)\neq Y)
    \ge
    \frac12
    \left(
        1-\frac{\delta}{1-2\eta}
    \right).
\]
This completes the proof.
\end{proof}

\section{Proof of Theorem 2}
\begin{theorem}[First-Order Orthogonality of Polar Endpoint Losses]
Let $\mathcal M_c$ be a $d$-dimensional constant-curvature manifold and let
$o\in\mathcal M_c$ be the origin. Work on a geodesic normal ball $B_o(\rho)$,
with $\rho<\pi/\sqrt c$ when $c>0$, and exclude $o$. For
$\mathbf{z}=(r,u)\in B_o(\rho)\setminus\{o\}$, the polar metric is Eq.~\eqref{eq:polar_metric}.
For a batch $\mathbf z=(\mathbf{z}_i)_{i=1}^B$, equip $\mathcal M_c^B$ with the product
metric. Let $\mathcal L_{\rm rad}(\mathbf z)=\ell_r(r_{1:B})$ and
$\mathcal L_{\rm ang}(\mathbf z)=\ell_u(u_{1:B})$ be almost-everywhere
differentiable radial-only and angular-only endpoint losses. Then, wherever the
gradients exist,
\[
    \left\langle
    \nabla_g\mathcal L_{\rm rad},
    \nabla_g\mathcal L_{\rm ang}
    \right\rangle_g=0.
\]
If a differentiable radial reliability function weights the angular loss $\mathcal L_{\rm wang}=\sum_i\alpha(r_i)\ell_i(u_i)$, then
\[
\left|
\left\langle
\nabla_g\mathcal L_{\rm rad},
\nabla_g\mathcal L_{\rm wang}
\right\rangle_g
\right|
\le
\sum_{i=1}^{B}
|\partial_{r_i}\ell_r|\,|\alpha'(r_i)|\,|\ell_i(u_i)|.
\]
For the choice $\alpha(r)=e^{-r}$, the coupling term is bounded by
$\sum_i |\partial_{r_i}\ell_r|e^{-r_i}|\ell_i(u_i)|$.
\end{theorem}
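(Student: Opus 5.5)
The plan is to work entirely in geodesic polar coordinates $(r,u)$ on $B_o(\rho)\setminus\{o\}$, where the metric takes the block form Eq.~\eqref{eq:polar_metric}. I first check that this chart is legitimate. On the punctured normal ball, $\operatorname{Log}_o^c$ is a diffeomorphism onto its image (this is where $\rho<\pi/\sqrt c$ is used for $c>0$). Hence $z\mapsto r(z)=\|\operatorname{Log}_o^c z\|$ and $z\mapsto u(z)=\operatorname{Log}_o^c z/\|\operatorname{Log}_o^c z\|$ are smooth there. The Euclidean normalization of the tangent vector gives exactly the geodesic direction, so $u$ is the angular coordinate of Eq.~\eqref{eq:polar_metric}. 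By the Gauss lemma, $\partial_r$ is a unit vector field orthogonal to the geodesic spheres $\{r=\text{const}\}$. So at each point $T_z\mathcal M_c=\mathrm{span}(\partial_r)\oplus T_u\mathbb S^{d-1}$ is a $g$-orthogonal splitting, with the angular block equal to $S_c(r)^2 g_{\mathbb S}$ and $S_c(r)>0$ on the ball.

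Next I compute Riemannian gradients on a single factor. For a function $f$ of $r$ alone, $df=f'(r)\,dr$ and $g^{rr}=1$, so $\nabla_g f=f'(r)\partial_r$. For a function $h$ of $u$ alone, $dh$ annihilates $\partial_r$. Inverting the block-diagonal metric then gives $\nabla_g h=S_c(r)^{-2}\nabla_{\mathbb S}h$, which lies in the angular block. On $\mathcal M_c^B$ with the product metric, the gradient splits into per-sample components and the inner product is the sum of per-sample inner products. This yields
\[
\nabla_g\mathcal L_{\rm rad}=\big(\partial_{r_i}\ell_r\,\partial_{r_i}\big)_{i=1}^B,\qquad
\nabla_g\mathcal L_{\rm ang}=\big(S_c(r_i)^{-2}\nabla_{u_i}\ell_u\big)_{i=1}^B .
\]
Here $\partial_{r_i}\ell_r$ is the partial derivative of the batch loss in the $i$-th radius, so it may depend on all radii. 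Each per-sample pairing is between a radial vector and an angular vector, so it vanishes. Summing over $i$ gives the first claim at every point where both gradients exist.

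For the weighted loss $\mathcal L_{\rm wang}=\sum_i\alpha(r_i)\ell_i(u_i)$, the product rule gives the $i$-th component
\[
\alpha'(r_i)\ell_i(u_i)\,\partial_{r_i}+\alpha(r_i)S_c(r_i)^{-2}\nabla_{\mathbb S}\ell_i(u_i).
\]
Pairing with $\partial_{r_i}\ell_r\,\partial_{r_i}$, the angular part drops out by the orthogonality above, and $g(\partial_{r_i},\partial_{r_i})=1$. The inner product is therefore exactly $\sum_i\partial_{r_i}\ell_r\,\alpha'(r_i)\,\ell_i(u_i)$. The triangle inequality gives the stated bound. Substituting $|\alpha'(r)|=e^{-r}$ for $\alpha(r)=e^{-r}$ gives the final line.

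The main obstacle is not the algebra but the regularity bookkeeping. The sorting-based $\mathcal L_{\rm rad}$ and the absolute values in it are only piecewise smooth. The argmax pseudo-labels and the masks are piecewise constant, so I will treat them as fixed on the open set where they do not switch. The statement is therefore restricted to points where both gradients exist and all $z_i\neq o$. I will make explicit that on this full-measure set the chain rule through the smooth maps $r(\cdot)$ and $u(\cdot)$ is valid, and that the Gauss-lemma orthogonality holds pointwise there. Everything else then follows from the block-diagonal inverse metric.
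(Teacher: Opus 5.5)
Your proposal is correct and follows essentially the same route as the paper. Both arguments compute the Riemannian gradient componentwise from the block-diagonal polar metric on the product batch manifold, giving $(\partial_{r_i}F,\,S_c(r_i)^{-2}\nabla_{\mathbb S,i}F)$. Radial-only and angular-only gradients therefore pair to zero, and for the weighted loss only $\sum_i \partial_{r_i}\ell_r\,\alpha'(r_i)\,\ell_i(u_i)$ survives before the triangle inequality. Your Gauss-lemma and regularity remarks make explicit what the paper takes for granted.
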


\begin{proof}
We prove the statement pointwise at any batch
$\mathbf z=(z_1,\ldots,z_B)\in \mathcal M_c^B$ where all involved derivatives
exist. For each sample $z_i\in B_o(\rho)\setminus\{o\}$, write its geodesic
polar coordinates as
\[
    z_i=(r_i,u_i),
\]
where $r_i=d_{\mathcal M_c}(o,z_i)>0$ is the geodesic radius and
$u_i\in \mathbb S^{d-1}$ is the unit angular direction. The exclusion of
$o$ ensures that $u_i$ is well-defined. When $c>0$, the condition
$\rho<\pi/\sqrt c$ keeps the normal ball away from the cut locus, so the
polar coordinates are smooth on the considered region.

By the polar metric in Eq.~\eqref{eq:polar_metric}, for a tangent vector
\[
    \xi_i=(a_i,b_i)\in T_{z_i}\mathcal M_c,
\]
where $a_i\in\mathbb R$ is the radial component and
$b_i\in T_{u_i}\mathbb S^{d-1}$ is the angular component, the metric takes the
form
\[
    g_{z_i}(\xi_i,\xi_i')
    =
    a_i a_i'
    +
    S_c(r_i)^2
    \langle b_i,b_i'\rangle_{\mathbb S},
\]
where $\xi_i'=(a_i',b_i')$, $\langle\cdot,\cdot\rangle_{\mathbb S}$ is the
canonical metric on the unit sphere, and
\[
S_c(r)=
\begin{cases}
\sin(\sqrt c\,r)/\sqrt c, & c>0,\\
r, & c=0,\\
\sinh(\sqrt{-c}\,r)/\sqrt{-c}, & c<0.
\end{cases}
\]
Since $r_i>0$ and the chart stays inside the normal ball, $S_c(r_i)>0$.

The batch space $\mathcal M_c^B$ is equipped with the product metric. Hence,
for two batch tangent vectors
\[
    \xi=(\xi_1,\ldots,\xi_B),
    \qquad
    \xi'=(\xi_1',\ldots,\xi_B'),
\]
with $\xi_i=(a_i,b_i)$ and $\xi_i'=(a_i',b_i')$, we have
\[
    g^{(B)}_{\mathbf z}(\xi,\xi')
    =
    \sum_{i=1}^B
    \left[
        a_i a_i'
        +
        S_c(r_i)^2
        \langle b_i,b_i'\rangle_{\mathbb S}
    \right].
\]

Now let $F:\mathcal M_c^B\to\mathbb R$ be any differentiable scalar function.
In polar coordinates, its differential along a batch tangent vector
$\xi=(a_i,b_i)_{i=1}^B$ is
\[
    dF[\xi]
    =
    \sum_{i=1}^B
    \left[
        \partial_{r_i}F \cdot a_i
        +
        \left\langle
        \nabla_{\mathbb S,i}F,
        b_i
        \right\rangle_{\mathbb S}
    \right],
\]
where $\partial_{r_i}F$ is the partial derivative of $F$ with respect to the
radius $r_i$, and $\nabla_{\mathbb S,i}F\in T_{u_i}\mathbb S^{d-1}$ is the
spherical gradient of $F$ with respect to the angular variable $u_i$.

By definition, the Riemannian gradient $\nabla_g F$ is the unique tangent
vector satisfying
\[
    g^{(B)}_{\mathbf z}(\nabla_g F,\xi)=dF[\xi]
    \qquad
    \text{for all } \xi\in T_{\mathbf z}\mathcal M_c^B .
\]
Comparing the radial and angular components gives
\[
    (\nabla_g F)_i
    =
    \left(
        \partial_{r_i}F,\,
        S_c(r_i)^{-2}\nabla_{\mathbb S,i}F
    \right).
\]
Therefore, for two differentiable scalar functions $F$ and $H$,
\[
\begin{aligned}
    \left\langle \nabla_g F,\nabla_g H\right\rangle_g
    &=
    \sum_{i=1}^B
    \left[
        \partial_{r_i}F\,\partial_{r_i}H
        +
        S_c(r_i)^{-2}
        \left\langle
        \nabla_{\mathbb S,i}F,
        \nabla_{\mathbb S,i}H
        \right\rangle_{\mathbb S}
    \right].
\end{aligned}
\]

We first apply this identity to
\[
    \mathcal L_{\rm rad}(\mathbf z)=\ell_r(r_{1:B}),
    \qquad
    \mathcal L_{\rm ang}(\mathbf z)=\ell_u(u_{1:B}),
\]
where $r_{1:B}=(r_1,\ldots,r_B)$ and
$u_{1:B}=(u_1,\ldots,u_B)$. Since $\mathcal L_{\rm rad}$ depends only on the
radial variables, it has no angular derivative:
\[
    \nabla_{\mathbb S,i}\mathcal L_{\rm rad}=0,
    \qquad i=1,\ldots,B.
\]
Since $\mathcal L_{\rm ang}$ depends only on the angular variables, it has no
radial derivative:
\[
    \partial_{r_i}\mathcal L_{\rm ang}=0,
    \qquad i=1,\ldots,B.
\]
Substituting these two identities into the inner-product formula yields
\[
\begin{aligned}
    \left\langle
    \nabla_g\mathcal L_{\rm rad},
    \nabla_g\mathcal L_{\rm ang}
    \right\rangle_g
    &=
    \sum_{i=1}^B
    \left[
        \partial_{r_i}\mathcal L_{\rm rad}\cdot 0
        +
        S_c(r_i)^{-2}
        \left\langle
        0,
        \nabla_{\mathbb S,i}\mathcal L_{\rm ang}
        \right\rangle_{\mathbb S}
    \right]  \\
    &=0.
\end{aligned}
\]
This proves the first-order orthogonality between purely radial and purely
angular endpoint losses.

We next consider the radially weighted angular loss
\[
    \mathcal L_{\rm wang}(\mathbf z)
    =
    \sum_{i=1}^B \alpha(r_i)\ell_i(u_i),
\]
where $\alpha:\mathbb R_+\to\mathbb R$ is differentiable and
$\ell_i:\mathbb S^{d-1}\to\mathbb R$ is the angular loss for sample $i$.
For this loss, the radial derivative is
\[
    \partial_{r_i}\mathcal L_{\rm wang}
    =
    \alpha'(r_i)\ell_i(u_i),
\]
where $\alpha'(r_i)$ denotes the derivative of $\alpha$ evaluated at $r_i$.
Its angular derivative is
\[
    \nabla_{\mathbb S,i}\mathcal L_{\rm wang}
    =
    \alpha(r_i)\nabla_{\mathbb S}\ell_i(u_i).
\]
Using the same inner-product formula and the fact that
$\nabla_{\mathbb S,i}\mathcal L_{\rm rad}=0$, we obtain
\[
\begin{aligned}
    \left\langle
    \nabla_g\mathcal L_{\rm rad},
    \nabla_g\mathcal L_{\rm wang}
    \right\rangle_g
    &=
    \sum_{i=1}^B
    \partial_{r_i}\mathcal L_{\rm rad}
    \cdot
    \partial_{r_i}\mathcal L_{\rm wang} \\
    &=
    \sum_{i=1}^B
    \partial_{r_i}\ell_r
    \cdot
    \alpha'(r_i)
    \ell_i(u_i),
\end{aligned}
\]
where $\partial_{r_i}\ell_r$ is shorthand for
$\partial \ell_r(r_{1:B})/\partial r_i$.

Taking absolute values and applying the triangle inequality gives
\[
\begin{aligned}
\left|
\left\langle
\nabla_g\mathcal L_{\rm rad},
\nabla_g\mathcal L_{\rm wang}
\right\rangle_g
\right|
&=
\left|
\sum_{i=1}^B
\partial_{r_i}\ell_r
\cdot
\alpha'(r_i)
\ell_i(u_i)
\right|  \\
&\le
\sum_{i=1}^B
|\partial_{r_i}\ell_r|\,
|\alpha'(r_i)|\,
|\ell_i(u_i)|.
\end{aligned}
\]
This proves the claimed bound.

Finally, for the reliability function used in the main text,
\[
    \alpha(r)=e^{-r},
\]
we have
\[
    \alpha'(r)=-e^{-r},
    \qquad
    |\alpha'(r)|=e^{-r}.
\]
Substituting this into the previous inequality gives
\[
\left|
\left\langle
\nabla_g\mathcal L_{\rm rad},
\nabla_g\mathcal L_{\rm wang}
\right\rangle_g
\right|
\le
\sum_{i=1}^{B}
|\partial_{r_i}\ell_r|\,
e^{-r_i}\,
|\ell_i(u_i)|.
\]
This completes the proof.
\end{proof}

\section{Proof of Theorem 3}
\begin{theorem}[Radial Structural Proxy Bound]
Let $P_S$ and $P_T$ be source and target graph distributions. For a graph $G$,
let $r(G)=\|\operatorname{Log}_o^c(f_\theta(G))\|_{g_o}$ and let
$a(G)\in\mathbb R^m$ denote topology-related graph statistics. Denote by
$r_\#P$ and $a_\#P$ the pushforward distributions induced by $r(G)$ and $a(G)$.
Suppose there exists an $L_a$-Lipschitz map $h:\mathbb R_+\to\mathbb R^m$ such
that, for $D\in\{S,T\}$,
    $\mathbb E_{G\sim P_D}[\|a(G)-h(r(G))\|]\le\epsilon_D$.
Then
\[
    W_1(a_\#P_S,a_\#P_T)
    \le
    L_a W_1(r_\#P_S,r_\#P_T)+\epsilon_S+\epsilon_T.
\]
For equal-size empirical distributions,
    $W_1(\widehat P_S^r,\widehat P_T^r)
    =
    \frac1B\sum_{k=1}^B |\mathcal R_S[k]-\mathcal R_T[k]|$,
which is Eq.~\eqref{eq:radial_loss}.
\end{theorem}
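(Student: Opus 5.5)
The plan is a triangle inequality for $W_1$ on $\mathbb R^m$, with the intermediate measures $h_\#(r_\#P_S)$ and $h_\#(r_\#P_T)$:
\[
W_1(a_\#P_S,a_\#P_T)\le W_1(a_\#P_S,(h\circ r)_\#P_S)+W_1((h\circ r)_\#P_S,(h\circ r)_\#P_T)+W_1((h\circ r)_\#P_T,a_\#P_T).
\]
I take $W_1$ on $\mathbb R^m$ with respect to the same norm $\|\cdot\|$ used in the proxy assumption. I assume first moments are finite so that every $W_1$ involved is well defined.

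\textbf{Outer terms.} For $D\in\{S,T\}$, I use the synchronous coupling $G\mapsto (a(G),h(r(G)))$ with $G\sim P_D$. Its marginals are $a_\#P_D$ and $(h\circ r)_\#P_D$. By the coupling definition of $W_1$, each outer term is therefore at most $\mathbb E_{G\sim P_D}\|a(G)-h(r(G))\|\le\epsilon_D$.

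\textbf{Middle term.} Let $\gamma^\star$ be an optimal coupling of $r_\#P_S$ and $r_\#P_T$ on $\mathbb R_+$; one exists because these are probability measures on a Polish space with finite first moment. Pushing $\gamma^\star$ forward by $(s,t)\mapsto(h(s),h(t))$ gives a coupling of $(h\circ r)_\#P_S$ and $(h\circ r)_\#P_T$. The Lipschitz property gives $\|h(s)-h(t)\|\le L_a|s-t|$, so the middle term is at most $L_a W_1(r_\#P_S,r_\#P_T)$. Alternatively, Kantorovich--Rubinstein duality gives the same bound, since $f\circ h$ is $L_a$-Lipschitz whenever $f$ is $1$-Lipschitz. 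Summing the three bounds proves the inequality.

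\textbf{Empirical identity.} For two uniform empirical measures on $B$ points each in $\mathbb R$, Birkhoff's theorem shows that the optimal plan can be taken to be a permutation. For the convex cost $|s-t|$ on the line, the monotone (sorted) matching is optimal. To see this, note that any crossing pair $s_i<s_j$, $t_{\sigma(i)}>t_{\sigma(j)}$ can be uncrossed without increasing cost. Equivalently, one can use the one-dimensional quantile formula $W_1=\int_0^1|F_S^{-1}-F_T^{-1}|$, where the quantile functions are step functions equal to the order statistics on $((k-1)/B,k/B]$. Either route gives $\frac1B\sum_k|\mathcal R_S[k]-\mathcal R_T[k]|$, which is $\mathcal L_{rad}$.

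The only delicate step is the middle term: justifying that an optimal coupling exists, or invoking the dual form, and verifying that the pushforward of a coupling is again a coupling. This is standard measure theory rather than a genuine obstacle.
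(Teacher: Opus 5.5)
Your proposal is correct and follows the paper's proof essentially step for step. Both use the triangle inequality through $h_\#(r_\#P_S)$ and $h_\#(r_\#P_T)$, synchronous couplings for the outer terms, a radial coupling pushed forward through $h$ for the middle term, and the quantile-function formula for the empirical identity; the one cosmetic difference is that the paper pushes forward an arbitrary coupling and then takes the infimum, so it never needs the optimal coupling you invoke.
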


\begin{proof}
Let $\mathcal G$ denote the graph space. For a graph distribution $P$ on
$\mathcal G$, the pushforward distribution induced by the radial map
$r:\mathcal G\to\mathbb R_+$ is denoted by $r_\#P$. That is, for any measurable
set $B\subseteq\mathbb R_+$,
\[
    r_\#P(B)
    =
    P\big(\{G\in\mathcal G:r(G)\in B\}\big).
\]
Similarly, the pushforward distribution induced by the topology-statistic map
$a:\mathcal G\to\mathbb R^m$ is denoted by $a_\#P$, meaning that for any
measurable set $C\subseteq\mathbb R^m$,
\[
    a_\#P(C)
    =
    P\big(\{G\in\mathcal G:a(G)\in C\}\big).
\]

For $D\in\{S,T\}$, let
\[
    G_D\sim P_D,
    \qquad
    R_D := r(G_D),
    \qquad
    A_D := a(G_D),
    \qquad
    H_D := h(r(G_D))=h(R_D).
\]
Here, $G_D$ is a graph sampled from domain $D$, $R_D$ is its radial coordinate,
$A_D$ is its topology-statistic vector, and $H_D$ is the topology proxy predicted
from the radius by the map $h$. By definition,
\[
    R_D\sim r_\#P_D,
    \qquad
    A_D\sim a_\#P_D,
    \qquad
    H_D\sim h_\#(r_\#P_D),
\]
where $h_\#(r_\#P_D)$ denotes the pushforward of the radial distribution through
the map $h$.

Recall that for two probability measures $\mu$ and $\nu$ on a metric space
$(\mathcal X,d)$, the Wasserstein-1 distance is
\[
    W_1(\mu,\nu)
    =
    \inf_{\gamma\in\Pi(\mu,\nu)}
    \mathbb E_{(X,Y)\sim\gamma}\big[d(X,Y)\big],
\]
where $\Pi(\mu,\nu)$ denotes the set of all couplings whose first marginal is
$\mu$ and whose second marginal is $\nu$.

We first decompose the discrepancy between the source and target structural
statistics by inserting the radius-induced proxy distributions:
\[
\begin{aligned}
    W_1(a_\#P_S,a_\#P_T)
    &\le
    W_1(a_\#P_S,h_\#(r_\#P_S))  \\
    &\quad
    + W_1(h_\#(r_\#P_S),h_\#(r_\#P_T)) \\
    &\quad
    + W_1(h_\#(r_\#P_T),a_\#P_T).
\end{aligned}
\]
This follows from the triangle inequality of $W_1$.

We now bound the first term. Consider the coupling between
$a_\#P_S$ and $h_\#(r_\#P_S)$ induced by drawing the same source graph
$G_S\sim P_S$ and pairing
\[
    A_S=a(G_S),
    \qquad
    H_S=h(r(G_S)).
\]
This is a valid coupling because its two marginals are exactly
$a_\#P_S$ and $h_\#(r_\#P_S)$. Therefore, by the definition of $W_1$,
\[
\begin{aligned}
    W_1(a_\#P_S,h_\#(r_\#P_S))
    &\le
    \mathbb E_{G_S\sim P_S}
    \big[
        \|a(G_S)-h(r(G_S))\|
    \big]  \\
    &\le
    \epsilon_S .
\end{aligned}
\]
The last inequality is exactly the assumed source-domain proxy error.

The same argument applies to the target domain. Using the coupling induced by
the same target graph $G_T\sim P_T$, we obtain
\[
\begin{aligned}
    W_1(h_\#(r_\#P_T),a_\#P_T)
    &\le
    \mathbb E_{G_T\sim P_T}
    \big[
        \|h(r(G_T))-a(G_T)\|
    \big]  \\
    &\le
    \epsilon_T .
\end{aligned}
\]

It remains to bound the middle term
\[
    W_1(h_\#(r_\#P_S),h_\#(r_\#P_T)).
\]
Let $\gamma_r\in\Pi(r_\#P_S,r_\#P_T)$ be any coupling between the source and
target radial distributions. If $(R_S,R_T)\sim\gamma_r$, then
\[
    (h(R_S),h(R_T))
\]
is a valid coupling between $h_\#(r_\#P_S)$ and $h_\#(r_\#P_T)$. Since
$h:\mathbb R_+\to\mathbb R^m$ is $L_a$-Lipschitz, we have
\[
    \|h(R_S)-h(R_T)\|
    \le
    L_a |R_S-R_T|.
\]
Thus,
\[
\begin{aligned}
    W_1(h_\#(r_\#P_S),h_\#(r_\#P_T))
    &\le
    \mathbb E_{(R_S,R_T)\sim\gamma_r}
    \big[
        \|h(R_S)-h(R_T)\|
    \big]  \\
    &\le
    L_a
    \mathbb E_{(R_S,R_T)\sim\gamma_r}
    \big[
        |R_S-R_T|
    \big].
\end{aligned}
\]
Taking the infimum over all couplings
$\gamma_r\in\Pi(r_\#P_S,r_\#P_T)$ gives
\[
    W_1(h_\#(r_\#P_S),h_\#(r_\#P_T))
    \le
    L_a W_1(r_\#P_S,r_\#P_T).
\]

Combining the three bounds yields
\[
\begin{aligned}
    W_1(a_\#P_S,a_\#P_T)
    &\le
    \epsilon_S
    +
    L_a W_1(r_\#P_S,r_\#P_T)
    +
    \epsilon_T  \\
    &=
    L_a W_1(r_\#P_S,r_\#P_T)
    +
    \epsilon_S
    +
    \epsilon_T .
\end{aligned}
\]
This proves the population-level radial structural proxy bound.

We next prove the empirical formula. Suppose the source and target minibatches
have the same size $B$. Let their empirical radial distributions be
\[
    \widehat P_S^r
    =
    \frac1B\sum_{i=1}^B \delta_{r_i^S},
    \qquad
    \widehat P_T^r
    =
    \frac1B\sum_{i=1}^B \delta_{r_i^T},
\]
where $r_i^S=r(G_i^S)$ and $r_i^T=r(G_i^T)$ are source and target radial
coordinates, and $\delta_x$ denotes the Dirac measure at point $x$.

Let
\[
    \mathcal R_S[1]\le \cdots \le \mathcal R_S[B],
    \qquad
    \mathcal R_T[1]\le \cdots \le \mathcal R_T[B]
\]
be the sorted source and target radii. In one dimension with cost
$|r-r'|$, the optimal transport coupling is the monotone coupling, equivalently
the coupling induced by matching equal quantiles. Therefore,
\[
    W_1(\widehat P_S^r,\widehat P_T^r)
    =
    \int_0^1
    \left|
        F_S^{-1}(q)-F_T^{-1}(q)
    \right|
    dq,
\]
where $F_S^{-1}$ and $F_T^{-1}$ are the quantile functions of
$\widehat P_S^r$ and $\widehat P_T^r$. For equal-size empirical distributions,
these quantile functions are constant on each interval
$\big((k-1)/B,k/B\big]$:
\[
    F_S^{-1}(q)=\mathcal R_S[k],
    \qquad
    F_T^{-1}(q)=\mathcal R_T[k],
    \qquad
    q\in\left(\frac{k-1}{B},\frac{k}{B}\right].
\]
Hence,
\[
\begin{aligned}
    W_1(\widehat P_S^r,\widehat P_T^r)
    &=
    \sum_{k=1}^B
    \int_{(k-1)/B}^{k/B}
    \left|
        \mathcal R_S[k]-\mathcal R_T[k]
    \right|
    dq  \\
    &=
    \frac1B
    \sum_{k=1}^B
    \left|
        \mathcal R_S[k]-\mathcal R_T[k]
    \right|.
\end{aligned}
\]
This is exactly the radial loss in Eq.~\eqref{eq:radial_loss}. The proof is
complete.
\end{proof}

\section{Proof of Theorem 4}

\begin{theorem}[Angular Margin Preservation]
Let $\{\hat w_c\}_{c=1}^K\subset\mathbb S^{d-1}$ be normalized class prototypes,
and define
    $h(u)=\arg\max_{c\in[K]} \hat w_c^\top u$.
For a labeled sample $(\mathbf{z},y)$, let $u=u(\mathbf{z})$ and let
$\tilde u$ be the angular coordinate after an endpoint transport map. Define
the angular margin
\[
    m_y(u)
    =
    \hat w_y^\top u
    -
    \max_{c\ne y}\hat w_c^\top u .
\]
If 
    $m_y(u)\ge \mu$
    and
    $d_{\mathbb S}(u,\tilde u)<\frac{\mu}{2}$ for some $\mu>0$, then
    $h(\tilde u)=y$.
For a labeled distribution $P$,
\[
    R_{\rm post}(h)
    \le
    R_{\rm pre}(h)
    +
    \rho_\mu
    +
    \frac{2}{\mu}\Delta_{\rm ang},
\]
where $R_{\rm pre}(h)=\Pr_{P}[h(u)\ne y]$,
    $R_{\rm post}(h)=\Pr_{P}[h(\tilde u)\ne y],$
    $\rho_\mu=\Pr_{P}\!\left[0\le m_y(u)<\mu\right]$, and
    $\Delta_{\rm ang}=\mathbb E_{P}\!\left[d_{\mathbb S}(u,\tilde u)\right]$.
\end{theorem}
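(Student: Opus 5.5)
The plan is to derive a Lipschitz property of the linear scores on the sphere and then split the post-transport error event into cases according to the pre-transport margin.

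\textbf{Pointwise claim.} For unit vectors $u,\tilde u\in\mathbb S^{d-1}$, the chordal distance is at most the geodesic distance, so $\|u-\tilde u\|_2\le d_{\mathbb S}(u,\tilde u)$. By Cauchy--Schwarz, for every normalized prototype $\hat w_c$,
\[
|\hat w_c^\top u-\hat w_c^\top\tilde u|\le \|u-\tilde u\|_2\le d_{\mathbb S}(u,\tilde u).
\]
Fix any $c\ne y$. Writing $\hat w_y^\top\tilde u-\hat w_c^\top\tilde u$ as $\hat w_y^\top u-\hat w_c^\top u$ plus two perturbation terms gives
\[
\hat w_y^\top\tilde u-\hat w_c^\top\tilde u\ \ge\ m_y(u)-2\,d_{\mathbb S}(u,\tilde u)\ >\ \mu-\mu=0 .
\]
So $y$ is the strict maximizer of $c\mapsto\hat w_c^\top\tilde u$, with no tie, and hence $h(\tilde u)=y$.

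\textbf{Risk bound.} Split the event $\{h(\tilde u)\ne y\}$ according to the value of $m_y(u)$.
\begin{itemize}
\item If $m_y(u)<0$, some class $c\ne y$ has a strictly larger score than $y$. Then $h(u)\ne y$ regardless of tie-breaking, so this part is contained in the pre-transport error event, with probability at most $R_{\rm pre}(h)$.
\item If $0\le m_y(u)<\mu$, this part is contained in the event defining $\rho_\mu$. Ties at $m_y(u)=0$ are absorbed here as well.
\item If $m_y(u)\ge\mu$, the pointwise claim shows that an error can occur only when $d_{\mathbb S}(u,\tilde u)\ge\mu/2$. By Markov's inequality this has probability at most $\mathbb E_P[d_{\mathbb S}(u,\tilde u)]/(\mu/2)=\frac{2}{\mu}\Delta_{\rm ang}$.
\end{itemize}
A union bound over the three cases gives $R_{\rm post}(h)\le R_{\rm pre}(h)+\rho_\mu+\frac{2}{\mu}\Delta_{\rm ang}$.

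\textbf{Expected obstacle.} The only delicate point is tie-breaking in $\arg\max$. When $m_y(u)=0$, the classifier $h(u)$ may or may not equal $y$. Including $m_y(u)=0$ in $\rho_\mu$, as the statement already does, and using strict inequalities in the pointwise step removes this ambiguity. Everything else is elementary: Cauchy--Schwarz, chord $\le$ arc, and Markov's inequality.
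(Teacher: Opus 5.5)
Your proposal is correct and follows essentially the same route as the paper. The pointwise step is the same Cauchy--Schwarz plus chord-$\le$-arc bound on the pairwise margins. Your three-way split on $m_y(u)$ is the paper's event inclusion $\{h(\tilde u)\ne y\}\subseteq\{h(u)\ne y\}\cup\{0\le m_y(u)<\mu\}\cup\{d_{\mathbb S}(u,\tilde u)\ge\mu/2\}$ written as a partition, and it is followed by the same union bound and Markov step.
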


\begin{proof}
We first clarify the notation. Let $[K]=\{1,\ldots,K\}$ be the set of class
indices. For each class $c\in[K]$, $\hat w_c\in\mathbb S^{d-1}$ is a normalized
prototype, so
\[
    \|\hat w_c\|_2=1.
\]
For a representation $\mathbf z$, let $u=u(\mathbf z)\in\mathbb S^{d-1}$ denote
its angular coordinate. Let $\tilde u\in\mathbb S^{d-1}$ denote the angular
coordinate after the endpoint transport map. The classifier is
\[
    h(u)=\arg\max_{c\in[K]}\hat w_c^\top u .
\]
We assume a fixed deterministic tie-breaking rule for $\arg\max$, so that
$h$ is well-defined.

For a labeled sample with label $y\in[K]$, the angular margin is
\[
    m_y(u)
    =
    \hat w_y^\top u
    -
    \max_{c\ne y}\hat w_c^\top u .
\]
Equivalently, $m_y(u)$ measures how much larger the logit of the true class
$y$ is than the largest competing class logit. In particular, if
$m_y(u)>0$, then $y$ is the unique maximizer and hence $h(u)=y$.

We first prove the pointwise margin-preservation claim. For any competing class
$c\ne y$, define the pairwise margin
\[
    m_{y,c}(u)
    =
    \hat w_y^\top u-\hat w_c^\top u .
\]
By definition of $m_y(u)$,
\[
    m_{y,c}(u)\ge m_y(u),
    \qquad \forall c\ne y.
\]
Now compare the same pairwise margin after transport:
\[
\begin{aligned}
    m_{y,c}(\tilde u)
    &=
    \hat w_y^\top \tilde u-\hat w_c^\top \tilde u  \\
    &=
    \hat w_y^\top u-\hat w_c^\top u
    +
    \hat w_y^\top(\tilde u-u)
    -
    \hat w_c^\top(\tilde u-u)  \\
    &=
    m_{y,c}(u)
    +
    \hat w_y^\top(\tilde u-u)
    -
    \hat w_c^\top(\tilde u-u).
\end{aligned}
\]
Using Cauchy's inequality and $\|\hat w_y\|_2=\|\hat w_c\|_2=1$, we have
\[
    \left|\hat w_y^\top(\tilde u-u)\right|
    \le
    \|\tilde u-u\|_2,
\]
and
\[
    \left|\hat w_c^\top(\tilde u-u)\right|
    \le
    \|\tilde u-u\|_2.
\]
Therefore,
\[
\begin{aligned}
    m_{y,c}(\tilde u)
    &\ge
    m_{y,c}(u)-2\|\tilde u-u\|_2 .
\end{aligned}
\]
Since $u,\tilde u\in\mathbb S^{d-1}$, their Euclidean chordal distance is
bounded by their spherical geodesic distance:
\[
    \|\tilde u-u\|_2
    =
    2\sin\left(\frac{d_{\mathbb S}(u,\tilde u)}{2}\right)
    \le
    d_{\mathbb S}(u,\tilde u).
\]
Thus,
\[
    m_{y,c}(\tilde u)
    \ge
    m_{y,c}(u)-2d_{\mathbb S}(u,\tilde u).
\]
Using $m_{y,c}(u)\ge m_y(u)$ gives
\[
    m_{y,c}(\tilde u)
    \ge
    m_y(u)-2d_{\mathbb S}(u,\tilde u).
\]
If
\[
    m_y(u)\ge \mu
    \qquad\text{and}\qquad
    d_{\mathbb S}(u,\tilde u)<\frac{\mu}{2},
\]
then for every $c\ne y$,
\[
    m_{y,c}(\tilde u)
    >
    \mu-2\cdot\frac{\mu}{2}
    =
    0.
\]
Hence,
\[
    \hat w_y^\top\tilde u>\hat w_c^\top\tilde u,
    \qquad \forall c\ne y.
\]
Therefore, $y$ remains the unique maximizer after transport, and
\[
    h(\tilde u)=y.
\]
This proves the pointwise statement.

We now prove the distribution-level risk bound. Let $(\mathbf z,y)\sim P$ be a
random labeled sample. Define
\[
    U:=u(\mathbf z),
    \qquad
    \tilde U:=\tilde u,
\]
where $U$ is the pre-transport angular coordinate and $\tilde U$ is the
post-transport angular coordinate. Define the pre-transport error event
\[
    E_{\rm pre}
    :=
    \{h(U)\ne y\},
\]
and the post-transport error event
\[
    E_{\rm post}
    :=
    \{h(\tilde U)\ne y\}.
\]
By definition,
\[
    R_{\rm pre}(h)=\Pr_P(E_{\rm pre}),
    \qquad
    R_{\rm post}(h)=\Pr_P(E_{\rm post}).
\]

Next define the small-margin event
\[
    E_{\rm mar}
    :=
    \{0\le m_y(U)<\mu\},
\]
so that
\[
    \rho_\mu=\Pr_P(E_{\rm mar}).
\]
Also define the large-angular-drift event
\[
    E_{\rm drift}
    :=
    \left\{
        d_{\mathbb S}(U,\tilde U)\ge \frac{\mu}{2}
    \right\}.
\]

We claim that
\[
    E_{\rm post}
    \subseteq
    E_{\rm pre}\cup E_{\rm mar}\cup E_{\rm drift}.
\]
To prove this inclusion, consider any sample outside the right-hand side. Then
\[
    h(U)=y,
\]
so the true class is predicted correctly before transport. This implies
\[
    m_y(U)\ge 0,
\]
because the true class logit is at least as large as every competing class
logit. Since the sample is also outside $E_{\rm mar}$, it cannot satisfy
$0\le m_y(U)<\mu$. Therefore,
\[
    m_y(U)\ge \mu.
\]
Moreover, being outside $E_{\rm drift}$ gives
\[
    d_{\mathbb S}(U,\tilde U)<\frac{\mu}{2}.
\]
By the pointwise result proved above, these two inequalities imply
\[
    h(\tilde U)=y.
\]
Thus the sample is not in $E_{\rm post}$. This proves the event inclusion.

Taking probabilities and applying the union bound gives
\[
\begin{aligned}
    R_{\rm post}(h)
    &=
    \Pr_P(E_{\rm post}) \\
    &\le
    \Pr_P(E_{\rm pre})
    +
    \Pr_P(E_{\rm mar})
    +
    \Pr_P(E_{\rm drift}) \\
    &=
    R_{\rm pre}(h)
    +
    \rho_\mu
    +
    \Pr_P\left[
        d_{\mathbb S}(U,\tilde U)\ge \frac{\mu}{2}
    \right].
\end{aligned}
\]
The random variable
\[
    X:=d_{\mathbb S}(U,\tilde U)
\]
is nonnegative. By Markov's inequality,
\[
    \Pr_P\left[
        X\ge \frac{\mu}{2}
    \right]
    \le
    \frac{\mathbb E_P[X]}{\mu/2}
    =
    \frac{2}{\mu}\mathbb E_P[d_{\mathbb S}(U,\tilde U)].
\]
By definition,
\[
    \Delta_{\rm ang}
    =
    \mathbb E_P[d_{\mathbb S}(U,\tilde U)].
\]
Therefore,
\[
    \Pr_P\left[
        d_{\mathbb S}(U,\tilde U)\ge \frac{\mu}{2}
    \right]
    \le
    \frac{2}{\mu}\Delta_{\rm ang}.
\]
Substituting this into the previous inequality yields
\[
    R_{\rm post}(h)
    \le
    R_{\rm pre}(h)
    +
    \rho_\mu
    +
    \frac{2}{\mu}\Delta_{\rm ang}.
\]
This completes the proof.
\end{proof}

\section{Proof of Theorem 5}

\begin{theorem}[Polar Flow Endpoint Stability]
Fix a coupling $\pi$ and assume all analytic geodesics and learned flow
trajectories stay inside a compact geodesic normal ball $B_o(\rho)$. Let
$\gamma_t(z_S,z_T)$ be the analytic geodesic and let $\varphi_t^\psi$ be the
trajectory generated by the learned polar field in Eq.~\eqref{eq:polar_vector_field}
with $\varphi_0^\psi=z_S$. Assume the reconstructed manifold vector field is
$L_v$-Lipschitz in $z$ on $B_o(\rho)$. Then there exists a constant
$C_{\rm geo}$ depending only on the ball and the metric such that
\[
    \mathbb E_{(z_S,z_T)\sim\pi}
    \left[d_{\mathcal M_c}(\varphi_1^\psi,z_T)\right]
    \le
    C_{\rm geo}e^{L_v}\sqrt{\mathcal L_{\rm FM}}.
\]
\end{theorem}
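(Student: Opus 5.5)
The plan is a Gronwall comparison between the learned trajectory $\varphi_t^\psi$ and the analytic geodesic $\gamma_t=\gamma_t(z_S,z_T)$, run in a chart where distances are comparable to Euclidean ones. Since $B_o(\rho)$ is a compact geodesic normal ball, the map $\operatorname{Log}_o^c$ is a diffeomorphism onto a compact Euclidean ball. It is bi-Lipschitz there, with constants $\kappa_1,\kappa_2$ that depend only on $\rho$ and $c$. The relevant ingredients are the polar metric form (Eq.~\eqref{eq:polar_metric}), the fact that $S_c$ is bounded above and below on $[\epsilon,\rho]$, and smoothness at the origin in normal coordinates. In this chart I write $x_t=\operatorname{Log}_o^c(\varphi_t^\psi)$ and $y_t=\operatorname{Log}_o^c(\gamma_t)$. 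The chart velocities are $\dot x_t=V(x_t,t)$, where $V$ is the pushforward of the reconstructed field $\mathbf v_\psi=a_\psi\partial_r+b_\psi$, and $\dot y_t=\Xi_t$, the pushforward of $\xi_t$. After enlarging $L_v$ by a chart-dependent factor, which I absorb into $C_{\rm geo}$, $V$ is Lipschitz in $x$.

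The first key step identifies the integrand of $\mathcal L_{\rm FM}$ as the squared Riemannian norm of the velocity residual. By the block form of the metric,
\[
\|\mathbf v_\psi(\gamma_t,t)-\xi_t\|_g^2=|a_\psi-\dot r_t|^2+S_c(r_t)^2\|b_\psi-\dot u_t\|_{\mathbb S}^2,
\]
because $\xi_t=\dot r_t\partial_r+\dot u_t$ by the definition of $\dot r_t,\dot u_t$. Hence $\mathcal L_{\rm FM}=\mathbb E_{\pi,t}\|\mathbf v_\psi(\gamma_t,t)-\xi_t\|_g^2$. In the chart, this residual is bounded by a constant times its Euclidean counterpart $e_t:=\|V(y_t,t)-\Xi_t\|_2$.

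The second step is Gronwall. Let $\Delta_t=\|x_t-y_t\|_2$, with $\Delta_0=0$ since $\varphi_0^\psi=z_S=\gamma_0$. Then
\[
\tfrac{d}{dt}\Delta_t\le\|V(x_t,t)-V(y_t,t)\|+\|V(y_t,t)-\Xi_t\|\le L_v\Delta_t+e_t,
\]
so $\Delta_1\le e^{L_v}\int_0^1 e_t\,dt$. Using $\gamma_1=z_T$ and the bi-Lipschitz bound,
\[
d(\varphi_1^\psi,z_T)\le\kappa_2 e^{L_v}\int_0^1 e_t\,dt.
\]
Taking expectation over $\pi$ and applying Cauchy--Schwarz (Jensen) in $(t,\pi)$ gives $\mathbb E\int e_t\le(\mathbb E\int e_t^2)^{1/2}\lesssim\sqrt{\mathcal L_{\rm FM}}$, which yields the claim with $C_{\rm geo}$ collecting the chart and metric constants.

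The main obstacle is the origin. There the polar decomposition is singular: $\partial_r$ is discontinuous and $S_c(r)\to0$, so the angular component of the residual carries vanishing metric weight. This is exactly why I would not run Gronwall in polar coordinates and would instead use the smooth normal-coordinate chart, where the metric-corrected polar norm equals the true Riemannian norm and the Lipschitz assumption on the reconstructed Cartesian field is meaningful. A fallback is to run Gronwall intrinsically on $d(\varphi_t^\psi,\gamma_t)$, using first variation of distance together with parallel transport. That route needs holonomy and comparison estimates on the ball, which the constant $C_{\rm geo}$ would absorb; the Hessian of distance is bounded on compact balls within the injectivity radius. I also expect that interchanging the $t$-average with the supremum in Gronwall causes no issue, since $e^{L_v(1-s)}\le e^{L_v}$.
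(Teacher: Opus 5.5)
Your skeleton is the same as the paper's: identify $\mathcal L_{\rm FM}$ with $\mathbb E_{\pi,t}\|\mathbf v_\psi(\gamma_t,t)-\xi_t\|_g^2$ via the block form of the polar metric, run Gronwall from the common initial point $z_S$, and finish with Cauchy--Schwarz. The step that fails is running Gronwall on the Euclidean distance in the $\operatorname{Log}_o^c$ chart after ``enlarging $L_v$ by a chart-dependent factor, which I absorb into $C_{\rm geo}$.'' Gronwall puts the Lipschitz constant in the exponent. A factor $\kappa$ therefore turns $e^{L_v}$ into $e^{\kappa L_v}=e^{(\kappa-1)L_v}e^{L_v}$, and $e^{(\kappa-1)L_v}$ is not a constant depending only on the ball and the metric. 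If $L_v$ is the intrinsic constant, $\kappa>1$ cannot be avoided in general. The chart rescales angular lengths at radius $r$ by $r/S_c(r)$, so intrinsic and coordinate Lipschitz constants can differ by a factor approaching $\sup_{0<r\le\rho}\max\{S_c(r)/r,\,r/S_c(r)\}$ (e.g.\ $\sinh(\sqrt{-c}\rho)/(\sqrt{-c}\rho)$ for $c<0$). There is a second problem. The intrinsic condition compares $\mathbf v_\psi(x)$ with $P_{y\to x}\mathbf v_\psi(y)$, and turning it into a bound on coordinate components adds a Christoffel-symbol term proportional to $\sup\|\mathbf v_\psi\|_g$, which depends on $\psi$. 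Only additive, $\psi$-independent enlargements of the exponent could be absorbed into $C_{\rm geo}$.

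There are two repairs. The first is to read the hypothesis as Euclidean $L_v$-Lipschitzness of the field's chart representation. Then no enlargement is needed, and your argument gives the bound with $C_{\rm geo}$ equal to the chart bi-Lipschitz constant times the norm-equivalence constant. The second is your fallback, which is exactly the paper's proof. It needs none of the holonomy or Hessian-of-distance estimates you anticipate. For $e(t)=d_{\mathcal M_c}(\varphi_t^\psi,\gamma_t)$, the first-variation formula gives $D^+e(t)\le\|\mathbf v_\psi(\varphi_t^\psi,t)-P_{\gamma_t\to\varphi_t^\psi}\xi_t\|_g$, with parallel transport along the minimizing geodesic joining the two points; only first-order information about $d$ enters. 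Then split by the triangle inequality, apply $\|\mathbf v_\psi(x,t)-P_{y\to x}\mathbf v_\psi(y,t)\|_{g_x}\le L_v\,d_{\mathcal M_c}(x,y)$, and use that parallel transport is an isometry. This gives $D^+e\le L_v e+\varepsilon(t)$ with $\varepsilon(t)=\|\mathbf v_\psi(\gamma_t,t)-\xi_t\|_g$, hence $e(1)\le e^{L_v}\int_0^1\varepsilon(t)\,dt$ and $C_{\rm geo}=1$. One minor slip besides: you state that the Riemannian residual is bounded by a constant times $e_t$, but the Cauchy--Schwarz step needs the reverse, $e_t\le C\,\|\mathbf v_\psi(\gamma_t,t)-\xi_t\|_g$. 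That direction also holds by uniform norm equivalence in the smooth normal chart, so this is harmless.
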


\begin{proof}
Let
\[
    K:=\overline{B_o(\rho)}\subset \mathcal M_c
\]
denote the compact geodesic normal ball in which all analytic geodesics and
learned flow trajectories stay. We write
\[
    d(\cdot,\cdot):=d_{\mathcal M_c}(\cdot,\cdot)
\]
for the geodesic distance on $\mathcal M_c$. For a point
$z\in\mathcal M_c$, let $g_z$ be the Riemannian metric at $z$, and for a tangent
vector $\xi\in T_z\mathcal M_c$, define
\[
    \|\xi\|_{g_z}:=\sqrt{g_z(\xi,\xi)}.
\]

For a coupled source--target pair $(z_S,z_T)\sim\pi$, let
\[
    \gamma_t=\gamma_t(z_S,z_T),\qquad t\in[0,1],
\]
be the analytic constant-speed geodesic from $z_S$ to $z_T$. Thus,
\[
    \gamma_0=z_S,\qquad \gamma_1=z_T,
\]
and its velocity is denoted by
\[
    \dot\gamma_t\in T_{\gamma_t}\mathcal M_c.
\]

Let $V_\psi(z,t)\in T_z\mathcal M_c$ be the reconstructed manifold vector field
induced by the learned polar field in Eq.~\eqref{eq:polar_vector_field}. In
polar coordinates, if $z=(r,u)$, this field has the form
\[
    V_\psi(z,t)
    =
    a_\psi(z,t)\partial_r+b_\psi(z,t),
\]
where $a_\psi(z,t)$ is the radial velocity component and
$b_\psi(z,t)\in T_u\mathbb S^{d-1}$ is the angular velocity component. The
learned trajectory $\varphi_t^\psi$ is the solution of the initial value problem
\[
    \frac{d}{dt}\varphi_t^\psi
    =
    V_\psi(\varphi_t^\psi,t),
    \qquad
    \varphi_0^\psi=z_S.
\]

For each pair $(z_S,z_T)$ and time $t$, define the pointwise velocity-regression
error
\[
    \varepsilon(t;z_S,z_T)
    :=
    \left\|
        V_\psi(\gamma_t,t)-\dot\gamma_t
    \right\|_{g_{\gamma_t}} .
\]
This quantity measures, at the analytic geodesic point $\gamma_t$, the
Riemannian norm of the difference between the learned vector field and the true
geodesic velocity.

If $\gamma_t=(r_t,u_t)$ in polar coordinates and
\[
    \dot\gamma_t
    =
    \dot r_t\partial_r+\dot u_t,
    \qquad
    \dot u_t\in T_{u_t}\mathbb S^{d-1},
\]
then the polar metric
\[
    ds^2=dr^2+S_c(r)^2d\Omega^2
\]
gives
\[
\begin{aligned}
    \varepsilon(t;z_S,z_T)^2
    &=
    \left|
        a_\psi(\gamma_t,t)-\dot r_t
    \right|^2
    +
    S_c(r_t)^2
    \left\|
        b_\psi(\gamma_t,t)-\dot u_t
    \right\|_{\mathbb S}^2 .
\end{aligned}
\]
Therefore, the polar flow matching loss can be written as
\[
    \mathcal L_{\rm FM}
    =
    \mathbb E_{(z_S,z_T)\sim\pi,\;t\sim U[0,1]}
    \left[
        \varepsilon(t;z_S,z_T)^2
    \right],
\]
where $U[0,1]$ denotes the uniform distribution over $[0,1]$.

We now compare the learned trajectory $\varphi_t^\psi$ with the analytic
geodesic $\gamma_t$. Define
\[
    e(t):=d(\varphi_t^\psi,\gamma_t).
\]
Since both curves start from the same source point,
\[
    e(0)=d(z_S,z_S)=0.
\]
Also, since $\gamma_1=z_T$,
\[
    e(1)=d(\varphi_1^\psi,z_T).
\]

Let $\mathsf P_{\gamma_t\to\varphi_t^\psi}$ denote parallel transport from
$T_{\gamma_t}\mathcal M_c$ to $T_{\varphi_t^\psi}\mathcal M_c$ along the
minimizing geodesic inside the normal ball. Parallel transport is an isometry,
so for any $\xi\in T_{\gamma_t}\mathcal M_c$,
\[
    \left\|
        \mathsf P_{\gamma_t\to\varphi_t^\psi}\xi
    \right\|_{g_{\varphi_t^\psi}}
    =
    \|\xi\|_{g_{\gamma_t}}.
\]

By the first variation inequality for the Riemannian distance, the upper Dini
derivative of $e(t)$ satisfies
\[
\begin{aligned}
    D^+e(t)
    &\le
    \left\|
        V_\psi(\varphi_t^\psi,t)
        -
        \mathsf P_{\gamma_t\to\varphi_t^\psi}\dot\gamma_t
    \right\|_{g_{\varphi_t^\psi}} ,
\end{aligned}
\]
where
\[
    D^+e(t)
    :=
    \limsup_{h\downarrow 0}
    \frac{e(t+h)-e(t)}{h}.
\]
Using the triangle inequality, we obtain
\[
\begin{aligned}
    D^+e(t)
    &\le
    \left\|
        V_\psi(\varphi_t^\psi,t)
        -
        \mathsf P_{\gamma_t\to\varphi_t^\psi}
        V_\psi(\gamma_t,t)
    \right\|_{g_{\varphi_t^\psi}}  \\
    &\quad+
    \left\|
        \mathsf P_{\gamma_t\to\varphi_t^\psi}
        \left(
            V_\psi(\gamma_t,t)-\dot\gamma_t
        \right)
    \right\|_{g_{\varphi_t^\psi}} .
\end{aligned}
\]

The reconstructed vector field is assumed to be $L_v$-Lipschitz in $z$ on
$K$. In intrinsic Riemannian form, this means
\[
    \left\|
        V_\psi(x,t)
        -
        \mathsf P_{y\to x}V_\psi(y,t)
    \right\|_{g_x}
    \le
    L_v d(x,y),
    \qquad
    x,y\in K,\ t\in[0,1].
\]
Applying this with $x=\varphi_t^\psi$ and $y=\gamma_t$ gives
\[
    \left\|
        V_\psi(\varphi_t^\psi,t)
        -
        \mathsf P_{\gamma_t\to\varphi_t^\psi}
        V_\psi(\gamma_t,t)
    \right\|_{g_{\varphi_t^\psi}}
    \le
    L_v e(t).
\]
For the second term, the isometry of parallel transport gives
\[
\begin{aligned}
    \left\|
        \mathsf P_{\gamma_t\to\varphi_t^\psi}
        \left(
            V_\psi(\gamma_t,t)-\dot\gamma_t
        \right)
    \right\|_{g_{\varphi_t^\psi}}
    &=
    \left\|
        V_\psi(\gamma_t,t)-\dot\gamma_t
    \right\|_{g_{\gamma_t}} \\
    &=
    \varepsilon(t;z_S,z_T).
\end{aligned}
\]
Therefore,
\[
    D^+e(t)
    \le
    L_v e(t)+\varepsilon(t;z_S,z_T).
\]

By Gronwall's inequality and $e(0)=0$,
\[
\begin{aligned}
    e(1)
    &\le
    \int_0^1
    e^{L_v(1-t)}
    \varepsilon(t;z_S,z_T)
    \,dt  \\
    &\le
    e^{L_v}
    \int_0^1
    \varepsilon(t;z_S,z_T)
    \,dt .
\end{aligned}
\]
Since $e(1)=d(\varphi_1^\psi,z_T)$, we have the pointwise endpoint bound
\[
    d(\varphi_1^\psi,z_T)
    \le
    e^{L_v}
    \int_0^1
    \varepsilon(t;z_S,z_T)\,dt .
\]

Taking expectation over the coupling $\pi$ yields
\[
\begin{aligned}
    \mathbb E_{(z_S,z_T)\sim\pi}
    \left[
        d(\varphi_1^\psi,z_T)
    \right]
    &\le
    e^{L_v}
    \mathbb E_{(z_S,z_T)\sim\pi}
    \left[
        \int_0^1
        \varepsilon(t;z_S,z_T)\,dt
    \right]  \\
    &=
    e^{L_v}
    \mathbb E_{(z_S,z_T)\sim\pi,\;t\sim U[0,1]}
    \left[
        \varepsilon(t;z_S,z_T)
    \right].
\end{aligned}
\]
By Jensen's inequality, or equivalently Cauchy--Schwarz,
\[
\begin{aligned}
    \mathbb E_{(z_S,z_T)\sim\pi,\;t\sim U[0,1]}
    \left[
        \varepsilon(t;z_S,z_T)
    \right]
    &\le
    \left(
    \mathbb E_{(z_S,z_T)\sim\pi,\;t\sim U[0,1]}
    \left[
        \varepsilon(t;z_S,z_T)^2
    \right]
    \right)^{1/2}  \\
    &=
    \sqrt{\mathcal L_{\rm FM}} .
\end{aligned}
\]
Combining the previous two inequalities gives
\[
    \mathbb E_{(z_S,z_T)\sim\pi}
    \left[
        d_{\mathcal M_c}(\varphi_1^\psi,z_T)
    \right]
    \le
    e^{L_v}
    \sqrt{\mathcal L_{\rm FM}} .
\]

The above derivation is written in intrinsic Riemannian form, in which the
geometric constant can be taken as $1$. If the learned vector field is expressed
and analyzed through polar coordinates, the compactness of $K$ and smoothness of
the metric on the normal ball imply uniform equivalence between coordinate
norms, polar metric norms, and intrinsic Riemannian norms. These equivalence
constants depend only on $K=B_o(\rho)$ and the metric $g$, not on $\psi$,
$\pi$, or the sampled pairs. Collecting them into a single constant
$C_{\rm geo}>0$, we obtain
\[
    \mathbb E_{(z_S,z_T)\sim\pi}
    \left[
        d_{\mathcal M_c}(\varphi_1^\psi,z_T)
    \right]
    \le
    C_{\rm geo}e^{L_v}
    \sqrt{\mathcal L_{\rm FM}} .
\]
This proves the theorem.
\end{proof}

\section{Proof of Theorem 6}

\begin{theorem}[Target-Risk Bound with Polar Discrepancy]
Let $P_S$ and $P_T$ be source and target distributions over graph-label pairs.
Assume $\mathbf{z}=f_\theta(G)\in B_o(\rho)\subset\mathcal M_c$ and the loss
$\ell_\phi(\mathbf{z},y)=\ell(g_\phi(\mathbf{z}),y)\in[0,1]$ is $L_\ell$-Lipschitz in $z$. Let
$\Phi_\psi$ be the endpoint map induced by the learned polar flow and let
$\Psi$ be an ideal class-conditioned transport map. For each class $y$, let
$\Gamma_y$ be a lifted coupling over source--target graph pairs $(G,G')$ such
that the induced representation pair
$\mathbf z=\Psi(f_\theta(G))$ and $\mathbf z'=f_\theta(G')$ has marginals
$\Psi_\#P_S(\mathbf z\mid y)$ and $P_T(\mathbf z\mid y)$, respectively. Define
\[
D_{\rm rad}^{\Gamma}=
\sum_y\omega_y\mathbb E_{(G,G')\sim\Gamma_y}|r(\Psi(f_\theta(G)))-r(f_\theta(G'))|,
\; D_{\rm top}^{\Gamma}=
\sum_y\omega_y\mathbb E_{(G,G')\sim\Gamma_y}
\|q(G)-q(G')\|_2,
\]
\[
D_{\rm ang}^{\Gamma}=
\sum_y\omega_y\mathbb E_{(G,G')\sim\Gamma_y}d_{\mathbb S}(u(\Psi(f_\theta(G))),u(f_\theta(G'))),
\]
where $\omega_y=P_T(Y=y)$. Let $C_\rho=\sup_{0\le t\le\rho}S_c(t)$ and let
$\lambda_{\rm lab}=\frac12\sum_y|P_S(Y=y)-P_T(Y=y)|$. If pseudo-labels used to
construct the target class-conditionals have error $\epsilon_{\rm pl}$, then
there exist constants $C_{\rm top}$, $C_{\rm FM}$, and $C_{\rm pl}$ such that
\[
\begin{aligned}
    R_T(g_\phi)
    \le
    R_S(g_\phi\circ\Phi_\psi)
    +L_\ell\Big(
        D_{\rm rad}^{\Gamma}
        +C_\rho D_{\rm ang}^{\Gamma}
        +C_{\rm top}D_{\rm top}^{\Gamma}
        +C_{\rm FM}\sqrt{\mathcal L_{\rm FM}}
    \Big)+C_{\rm pl}\epsilon_{\rm pl}
    +\lambda_{\rm lab}.
\end{aligned}
\]
\end{theorem}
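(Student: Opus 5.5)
The plan is a telescoping decomposition of the target risk through a chain of intermediate distributions, bounding each link by one term of the statement. Write $R_T(g_\phi)=\sum_y P_T(Y=y)\,\mathbb E_{P_T(\cdot\mid y)}[\ell_\phi(f_\theta(G'),y)]$ and compare it with the transported-source risk $R_S(g_\phi\circ\Phi_\psi)=\sum_y P_S(Y=y)\,\mathbb E_{P_S(\cdot\mid y)}[\ell_\phi(\Phi_\psi(f_\theta(G)),y)]$. The chain is
\[
R_T \;\to\; \text{(target class-conditionals weighted by } \omega_y\text{, using true vs.\ pseudo labels)} \;\to\; \Psi\text{-transported source with } \omega_y \;\to\; \Phi_\psi\text{-transported source with } \omega_y \;\to\; \Phi_\psi\text{-transported source with } P_S(Y=y).
\]
The first link costs $C_{\rm pl}\epsilon_{\rm pl}$. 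Since $\ell\in[0,1]$, replacing the true target class-conditionals by the pseudo-label conditionals changes an expectation by at most the mass of mislabeled samples, up to a constant. The last link costs $\lambda_{\rm lab}$. Because $\ell\in[0,1]$, $\bigl|\sum_y(\omega_y-P_S(Y=y))c_y\bigr|\le\frac12\sum_y|\omega_y-P_S(Y=y)|$ after centering $c_y$ at $1/2$.

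For the middle link (target versus $\Psi$-source), fix $y$ and use the lifted coupling $\Gamma_y$. Because $\Gamma_y$ has the right marginals,
\[
\bigl|\mathbb E_{P_T(\cdot\mid y)}\ell_\phi-\mathbb E_{\Psi_\#P_S(\cdot\mid y)}\ell_\phi\bigr|\le L_\ell\,\mathbb E_{\Gamma_y}\,d_{\mathcal M_c}\bigl(\Psi(f_\theta(G)),f_\theta(G')\bigr).
\]
Bound the distance in polar coordinates with the metric $ds^2=dr^2+S_c(r)^2d\Omega^2$ of Eq.~\eqref{eq:polar_metric}. Take the path that first moves radially from $(r,u)$ to $(r',u)$, then along the sphere of radius $r'$ to $(r',u')$. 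Its length is at most $|r-r'|+S_c(r')\,d_{\mathbb S}(u,u')\le|r-r'|+C_\rho\,d_{\mathbb S}(u,u')$. The constraint $\rho<\pi/\sqrt c$ keeps this path inside $B_o(\rho)$, and $C_\rho$ is exactly the supremum of the warping factor there. Summing over $y$ with weights $\omega_y$ gives $L_\ell(D_{\rm rad}^\Gamma+C_\rho D_{\rm ang}^\Gamma)$.

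The $C_{\rm top}D_{\rm top}^\Gamma$ term is only needed as slack. I would justify it the way Theorem~\ref{thm:radial_proxy} does: if some residual part of $\ell_\phi$ depends on $G$ through the topology statistics $q(G)$ Lipschitzly, its difference under $\Gamma_y$ is at most $C_{\rm top}\,\mathbb E\|q(G)-q(G')\|_2$. Otherwise, since $D_{\rm top}^\Gamma\ge0$, I would simply add it with any $C_{\rm top}\ge0$.

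For the link from $\Psi$ to $\Phi_\psi$, Lipschitzness gives
\[
\bigl|\mathbb E\,\ell_\phi(\Psi(z),y)-\mathbb E\,\ell_\phi(\Phi_\psi(z),y)\bigr|\le L_\ell\,\mathbb E\, d(\Phi_\psi(z),\Psi(z)).
\]
Take the fixed coupling $\pi$ in Theorem~\ref{thm:fm_endpoint_stability} to be the class-conditional coupling whose target endpoint is $\Psi(z_S)$. Then $\Phi_\psi(z_S)=\varphi_1^\psi$ and $\Psi(z_S)=z_T$, so Theorem~\ref{thm:fm_endpoint_stability} yields $\mathbb E\,d\le C_{\rm geo}e^{L_v}\sqrt{\mathcal L_{\rm FM}}$, giving $C_{\rm FM}=C_{\rm geo}e^{L_v}$. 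The expectation is under $\omega_y$-weighted source conditionals rather than $\pi$'s own class weights. To handle that, I would either define $\mathcal L_{\rm FM}$ with those weights or absorb the ratio $\max_y\omega_y/P_S(Y=y)$ into $C_{\rm FM}$.

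The main obstacle is this consistency step. It requires the ideal map $\Psi$ to coincide with the endpoint of the geodesic coupling on which $\mathcal L_{\rm FM}$ is evaluated, with the class reweighting handled at the same time. That coupling is built with pseudo-labels, so the pseudo-label error may also enter here. I would first try to route all pseudo-label effects into the single $C_{\rm pl}\epsilon_{\rm pl}$ term by defining $\Psi$ using true target classes, and accept a worse constant if that fails. The remaining steps are triangle inequalities over the chain, and summing them yields the stated bound.
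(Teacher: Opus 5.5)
Your proposal is correct and matches the paper's proof. The paper uses the same ingredients: a class-wise decomposition through the lifted coupling $\Gamma_y$ with Lipschitzness of $\ell_\phi$, the radial-then-angular path giving $|r-r'|+C_\rho\, d_{\mathbb S}(u,u')$, $D_{\rm top}^\Gamma$ added as nonnegative slack, a $\lambda_{\rm lab}$ label-prior swap, Theorem~\ref{thm:fm_endpoint_stability} applied to the coupling $z\mapsto\Psi(z)$ with $C_{\rm FM}=C_{\rm geo}e^{L_v}$, and the pseudo-label term as an additive perturbation. The only difference is the order of the links: the paper does the prior swap at the $\Psi$ level, comparing $\sum_y\omega_y a_y$ with $\sum_y P_S(Y=y)a_y$ for $a_y=\mathbb E_{P_S^y}[\ell_\phi(\Psi(z),y)]$, before replacing $\Psi$ by $\Phi_\psi$, so the flow comparison is taken directly under $P_S$ and the ratio $\max_y\omega_y/P_S(Y=y)$ you planned to absorb into $C_{\rm FM}$ never arises.
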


\begin{proof}
We interpret $\Gamma_y$ as a lifted coupling over graph pairs whose induced
representation marginals are
$\Psi_\#P_S(\mathbf z\mid y)$ and $P_T(\mathbf z\mid y)$.
Let $\mathcal Y=[K]$ be the label space. For a graph-label pair
$(G,Y)\sim P_D$, $D\in\{S,T\}$, define
\[
    Z=f_\theta(G)\in B_o(\rho)\subset\mathcal M_c .
\]
For each domain $D\in\{S,T\}$ and class $y\in\mathcal Y$, write
\[
    p_D(y):=P_D(Y=y),
    \qquad
    P_D^y:=P_D(Z\mid Y=y).
\]
Thus $p_D(y)$ is the class prior in domain $D$, and $P_D^y$ is the
class-conditional distribution of the representation $Z$.

The target risk of $g_\phi$ is
\[
    R_T(g_\phi)
    =
    \mathbb E_{(G,Y)\sim P_T}
    \big[
        \ell_\phi(f_\theta(G),Y)
    \big],
\]
where
\[
    \ell_\phi(\mathbf z,y)
    =
    \ell(g_\phi(\mathbf z),y)
    \in[0,1].
\]
The transported source risk induced by the learned endpoint map $\Phi_\psi$ is
\[
    R_S(g_\phi\circ\Phi_\psi)
    =
    \mathbb E_{(G,Y)\sim P_S}
    \big[
        \ell_\phi(\Phi_\psi(f_\theta(G)),Y)
    \big].
\]
By assumption, for every label $y$ and every
$\mathbf z,\mathbf z'\in B_o(\rho)$,
\[
    \left|
    \ell_\phi(\mathbf z,y)
    -
    \ell_\phi(\mathbf z',y)
    \right|
    \le
    L_\ell d_{\mathcal M_c}(\mathbf z,\mathbf z') .
\]

For each class $y$, let $\Gamma_y$ be a lifted coupling between source and
target graph-representation pairs such that the first representation marginal is
    $\Psi_\#P_S^y$
and the second representation marginal is
    $P_T^y$.
Equivalently, under $(G,G')\sim\Gamma_y$, the representation pair is
\[
    \mathbf z=\Psi(f_\theta(G)),
    \qquad
    \mathbf z'=f_\theta(G'),
\]
where $\mathbf z$ follows the ideal transported source class-conditional
distribution and $\mathbf z'$ follows the target class-conditional distribution.
The lifted form is used only so that topology statistics
$q(G)$ and $q(G')$ are well-defined.

We first decompose the target risk by class:
\[
\begin{aligned}
    R_T(g_\phi)
    &=
    \sum_{y\in\mathcal Y}
    P_T(Y=y)
    \mathbb E_{\mathbf z'\sim P_T^y}
    \big[
        \ell_\phi(\mathbf z',y)
    \big]  \\
    &=
    \sum_{y\in\mathcal Y}
    \omega_y
    \mathbb E_{(\mathbf z,\mathbf z')\sim\Gamma_y}
    \big[
        \ell_\phi(\mathbf z',y)
    \big],
\end{aligned}
\]
where
\[
    \omega_y=P_T(Y=y).
\]
Using the Lipschitz continuity of $\ell_\phi$ in its representation argument,
we obtain
\[
\begin{aligned}
    \ell_\phi(\mathbf z',y)
    &\le
    \ell_\phi(\mathbf z,y)
    +
    L_\ell d_{\mathcal M_c}(\mathbf z,\mathbf z').
\end{aligned}
\]
Therefore,
\[
\begin{aligned}
    R_T(g_\phi)
    &\le
    \sum_y
    \omega_y
    \mathbb E_{(\mathbf z,\mathbf z')\sim\Gamma_y}
    \big[
        \ell_\phi(\mathbf z,y)
    \big]  \\
    &\quad+
    L_\ell
    \sum_y
    \omega_y
    \mathbb E_{(\mathbf z,\mathbf z')\sim\Gamma_y}
    \big[
        d_{\mathcal M_c}(\mathbf z,\mathbf z')
    \big].
\end{aligned}
\]
Define the target-prior-weighted ideal transported source risk as
\[
    \bar R_S^\Psi(g_\phi)
    :=
    \sum_y
    \omega_y
    \mathbb E_{\mathbf z\sim \Psi_\#P_S^y}
    \big[
        \ell_\phi(\mathbf z,y)
    \big].
\]
Since the first marginal of $\Gamma_y$ is $\Psi_\#P_S^y$, the first term above
is exactly $\bar R_S^\Psi(g_\phi)$. Hence,
\[
    R_T(g_\phi)
    \le
    \bar R_S^\Psi(g_\phi)
    +
    L_\ell
    \sum_y
    \omega_y
    \mathbb E_{\Gamma_y}
    \big[
        d_{\mathcal M_c}(\mathbf z,\mathbf z')
    \big].
\]

We now upper bound the Riemannian distance by polar discrepancies. Write
\[
    \mathbf z=(r(\mathbf z),u(\mathbf z)),
    \qquad
    \mathbf z'=(r(\mathbf z'),u(\mathbf z'))
\]
in geodesic polar coordinates around the origin $o$. Since both points lie in
$B_o(\rho)$, we have
\[
    0\le r(\mathbf z),r(\mathbf z')\le \rho .
\]
Let
\[
    C_\rho=\sup_{0\le t\le\rho}S_c(t).
\]
Because $B_o(\rho)$ is compact and $S_c$ is continuous, $C_\rho<\infty$.

For any two non-origin points $\mathbf z,\mathbf z'\in B_o(\rho)$, construct a
piecewise smooth path from $\mathbf z$ to $\mathbf z'$ as follows. First move
radially from radius $r(\mathbf z)$ to radius $r(\mathbf z')$ while keeping the
angular direction fixed at $u(\mathbf z)$. This segment has length
\[
    |r(\mathbf z)-r(\mathbf z')|.
\]
Then move angularly on the geodesic sphere of radius $r(\mathbf z')$ from
$u(\mathbf z)$ to $u(\mathbf z')$. Under the polar metric
\[
    ds^2=dr^2+S_c(r)^2d\Omega^2,
\]
this angular segment has length
\[
    S_c(r(\mathbf z'))
    d_{\mathbb S}(u(\mathbf z),u(\mathbf z'))
    \le
    C_\rho
    d_{\mathbb S}(u(\mathbf z),u(\mathbf z')).
\]
The geodesic distance is no larger than the length of any admissible path, so
\[
\begin{aligned}
    d_{\mathcal M_c}(\mathbf z,\mathbf z')
    &\le
    |r(\mathbf z)-r(\mathbf z')|
    +
    C_\rho d_{\mathbb S}(u(\mathbf z),u(\mathbf z')).
\end{aligned}
\]
If one of the two points is the origin, the same inequality holds by continuity;
the angular coordinate at the origin can be chosen arbitrarily because
$S_c(0)=0$.

Since
\[
    \|q(G)-q(G')\|_2\ge 0,
\]
for any finite constant $C_{\rm top}\ge 0$ we also have the looser
topology-augmented bound
\[
\begin{aligned}
    d_{\mathcal M_c}(\mathbf z,\mathbf z')
    &\le
    |r(\mathbf z)-r(\mathbf z')|
    +
    C_\rho d_{\mathbb S}(u(\mathbf z),u(\mathbf z')) \\
    &\quad+
    C_{\rm top}\|q(G)-q(G')\|_2 .
\end{aligned}
\]
Here $q(G)$ denotes the chosen topology descriptor of graph $G$, and
$C_{\rm top}$ is a finite constant attached to this topology discrepancy. In
the purely polar metric bound one may take $C_{\rm top}=0$; retaining a positive
$C_{\rm top}$ gives the stated topology-conditioned form.

Taking expectation under each $\Gamma_y$ and summing with weights
$\omega_y$ gives
\[
\begin{aligned}
    \sum_y\omega_y
    \mathbb E_{\Gamma_y}
    \big[
        d_{\mathcal M_c}(\mathbf z,\mathbf z')
    \big]
    &\le
    \sum_y\omega_y
    \mathbb E_{\Gamma_y}
    \big[
        |r(\mathbf z)-r(\mathbf z')|
    \big] \\
    &\quad+
    C_\rho
    \sum_y\omega_y
    \mathbb E_{\Gamma_y}
    \big[
        d_{\mathbb S}(u(\mathbf z),u(\mathbf z'))
    \big] \\
    &\quad+
    C_{\rm top}
    \sum_y\omega_y
    \mathbb E_{\Gamma_y}
    \big[
        \|q(G)-q(G')\|_2
    \big].
\end{aligned}
\]
By the definitions in the theorem, this is
\[
    D_{\rm rad}^{\Gamma}
    +
    C_\rho D_{\rm ang}^{\Gamma}
    +
    C_{\rm top}D_{\rm top}^{\Gamma}.
\]
Thus,
\[
    R_T(g_\phi)
    \le
    \bar R_S^\Psi(g_\phi)
    +
    L_\ell
    \left(
        D_{\rm rad}^{\Gamma}
        +
        C_\rho D_{\rm ang}^{\Gamma}
        +
        C_{\rm top}D_{\rm top}^{\Gamma}
    \right).
\]

It remains to compare the target-prior-weighted ideal transported source risk
$\bar R_S^\Psi(g_\phi)$ with the actual source risk after the learned endpoint
map $\Phi_\psi$.

First, compare $\bar R_S^\Psi(g_\phi)$ with the source-prior-weighted ideal
transported source risk
\[
    R_S(g_\phi\circ\Psi)
    =
    \mathbb E_{(G,Y)\sim P_S}
    \big[
        \ell_\phi(\Psi(f_\theta(G)),Y)
    \big].
\]
For each class $y$, define
\[
    a_y
    :=
    \mathbb E_{\mathbf z\sim P_S^y}
    \big[
        \ell_\phi(\Psi(\mathbf z),y)
    \big].
\]
Since $\ell_\phi\in[0,1]$, we have
\[
    0\le a_y\le 1.
\]
Then
\[
    \bar R_S^\Psi(g_\phi)
    =
    \sum_y \omega_y a_y,
\]
whereas
\[
    R_S(g_\phi\circ\Psi)
    =
    \sum_y P_S(Y=y)a_y.
\]
Therefore,
\[
\begin{aligned}
    \bar R_S^\Psi(g_\phi)-R_S(g_\phi\circ\Psi)
    &=
    \sum_y
    \left(
        P_T(Y=y)-P_S(Y=y)
    \right)a_y .
\end{aligned}
\]
Let
\[
    \Delta_y=P_T(Y=y)-P_S(Y=y).
\]
Since both $P_T(Y=\cdot)$ and $P_S(Y=\cdot)$ are probability distributions,
\[
    \sum_y\Delta_y=0.
\]
Using $0\le a_y\le 1$, we have
\[
\begin{aligned}
    \sum_y\Delta_y a_y\le
    \sum_{y:\Delta_y>0}\Delta_y =
    \frac12\sum_y|\Delta_y| =
    \lambda_{\rm lab}.
\end{aligned}
\]
Hence,
\[
    \bar R_S^\Psi(g_\phi)
    \le
    R_S(g_\phi\circ\Psi)
    +
    \lambda_{\rm lab}.
\]

Next, compare the ideal endpoint map $\Psi$ with the learned endpoint map
$\Phi_\psi$. By Lipschitz continuity of the loss,
\[
\begin{aligned}
    R_S(g_\phi\circ\Psi)
    &=
    \mathbb E_{(G,Y)\sim P_S}
    \big[
        \ell_\phi(\Psi(f_\theta(G)),Y)
    \big] \\
    &\le
    \mathbb E_{(G,Y)\sim P_S}
    \big[
        \ell_\phi(\Phi_\psi(f_\theta(G)),Y)
    \big] \\
    &\quad+
    L_\ell
    \mathbb E_{(G,Y)\sim P_S}
    \big[
        d_{\mathcal M_c}
        (
            \Psi(f_\theta(G)),
            \Phi_\psi(f_\theta(G))
        )
    \big] \\
    &=
    R_S(g_\phi\circ\Phi_\psi)
    +
    L_\ell
    \mathbb E_{(G,Y)\sim P_S}
    \big[
        d_{\mathcal M_c}
        (
            \Psi(Z),
            \Phi_\psi(Z)
        )
    \big].
\end{aligned}
\]

Let $\pi_\Psi$ denote the population coupling that pairs each source
representation $Z$ with its ideal transported endpoint $\Psi(Z)$. Applying the
polar flow endpoint stability result to this coupling gives
\[
    \mathbb E_{Z\sim P_S}
    \big[
        d_{\mathcal M_c}
        (
            \Phi_\psi(Z),
            \Psi(Z)
        )
    \big]
    \le
    C_{\rm FM}\sqrt{\mathcal L_{\rm FM}},
\]
where $C_{\rm FM}$ is a finite constant depending on the compact normal ball,
the metric, and the Lipschitz constant of the reconstructed vector field. In
particular, under Theorem~\ref{thm:fm_endpoint_stability}, one may take
\[
    C_{\rm FM}=C_{\rm geo}e^{L_v}.
\]
Therefore,
\[
    R_S(g_\phi\circ\Psi)
    \le
    R_S(g_\phi\circ\Phi_\psi)
    +
    L_\ell C_{\rm FM}\sqrt{\mathcal L_{\rm FM}}.
\]

Combining the previous inequalities yields the bound under true target
class-conditionals:
\[
\begin{aligned}
    R_T(g_\phi)
    &\le
    R_S(g_\phi\circ\Phi_\psi)
    +
    L_\ell
    \left(
        D_{\rm rad}^{\Gamma}
        +
        C_\rho D_{\rm ang}^{\Gamma}
        +
        C_{\rm top}D_{\rm top}^{\Gamma}
        +
        C_{\rm FM}\sqrt{\mathcal L_{\rm FM}}
    \right)
    +
    \lambda_{\rm lab}.
\end{aligned}
\]

Finally, we account for pseudo-label noise. Let $\widehat Y$ denote the
pseudo-label used to construct target class-conditionals, and let
$\widehat P_T$ denote the pseudo-labeled target distribution over
$(G,\widehat Y)$. Suppose the pseudo-label error is
\[
    \epsilon_{\rm pl}
    =
    \Pr_{(G,Y)\sim P_T}(\widehat Y\ne Y).
\]
Under the canonical coupling that pairs $(G,Y)$ with $(G,\widehat Y)$, the two
joint distributions differ only when $\widehat Y\ne Y$. Hence their total
variation distance is bounded by
\[
    {\rm TV}\big(P_T(G,Y),\widehat P_T(G,\widehat Y)\big)
    \le
    \epsilon_{\rm pl}.
\]
Equivalently, for the class-conditional construction, assume the aggregate
pseudo-label perturbation satisfies
\[
    \sum_y
    \omega_y
    {\rm TV}
    \big(
        P_T(Z\mid Y=y),
        \widehat P_T(Z\mid \widehat Y=y)
    \big)
    \le
    C_{\rm cond}\epsilon_{\rm pl},
\]
where $C_{\rm cond}$ is finite when retained target classes have non-vanishing
class mass. This is the standard conditioning cost caused by replacing true
target labels with pseudo-labels.

Because $B_o(\rho)$ is compact, the manifold distance is bounded on the
representation support. Moreover, on the considered graph support, the topology
descriptor discrepancy is assumed to have finite bound. Therefore, every
loss/discrepancy term appearing in the preceding coupling argument is bounded
by a finite constant. Thus, by the variational characterization of total
variation, replacing true target class-conditionals by pseudo-labeled
class-conditionals changes the bound by at most
\[
    C_{\rm pl}\epsilon_{\rm pl},
\]
for some finite constant $C_{\rm pl}$ depending on the conditioning constant,
the compact ball, the loss bound, and the bounded topology statistics.

Adding this perturbation term gives
\[
\begin{aligned}
    R_T(g_\phi)
    \le
    R_S(g_\phi\circ\Phi_\psi)
    &+
    L_\ell
    \Big(
        D_{\rm rad}^{\Gamma}
        +
        C_\rho D_{\rm ang}^{\Gamma}
        +
        C_{\rm top}D_{\rm top}^{\Gamma}
        +
        C_{\rm FM}\sqrt{\mathcal L_{\rm FM}}
    \Big) \\
    &+
    C_{\rm pl}\epsilon_{\rm pl}
    +
    \lambda_{\rm lab}.
\end{aligned}
\]
This is the desired result.
\end{proof}

\section{Dataset}\label{sec:dataset}

\subsection{Dataset Description}

\input{table/dataset}
\input{figure/dataset_vis}

We conduct extensive experiments on a variety of datasets. The statistics of the datasets are summarized in Table \ref{tab:dataset}. More detailed descriptions of each dataset are provided as follows:

\begin{itemize}

\item For structure-based domain shifts:

\begin{itemize}

\item \textbf{PROTEINS.} The PROTEINS dataset~\cite{dobson2003distinguishing} consists of 1,113 protein graphs, each annotated with a binary label indicating whether the protein is an enzyme. In each graph, nodes correspond to amino acids, and edges connect amino acids that are within 6~\AA{} of each other along the sequence. We further partition the dataset into four subsets, denoted as P0, P1, P2, and P3, according to edge density and node density.

\item \textbf{NCI1.} The NCI1 dataset~\cite{wale2008comparison} consists of 4,110 molecular graphs, where nodes represent atoms and edges correspond to chemical bonds. Each graph is annotated with a binary label indicating whether the molecule inhibits cancer cell growth. We further partition the dataset into four subsets, denoted as N0, N1, N2, and N3, according to edge density and node density.

\item \textbf{Mutagenicity.} 
The Mutagenicity dataset~\cite{kazius2005derivation} consists of 4,337 molecular graphs, where nodes represent atoms and edges correspond to chemical bonds. Each graph is annotated with a binary label indicating whether the compound is mutagenic. Following the PROTEINS dataset, we further partition the dataset into four subsets, denoted as M0, M1, M2, and M3, according to edge density and node density.

\item \textbf{ogbg-molhiv.} The ogbg-molhiv dataset~\cite{hu2021ogblsc} consists of 41,127 molecular graphs, where nodes represent atoms and edges correspond to chemical bonds. Each graph is annotated with a binary label indicating whether the molecule exhibits HIV inhibitory activity. Following the PROTEINS dataset, we further partition the dataset into four subsets, denoted as H0, H1, H2, and H3, according to edge density and node density.

\end{itemize}

\item For feature-based domain shifts:

\begin{itemize}
\item \textbf{DD.} The DD dataset~\cite{dobson2003distinguishing} consists of 1,178 protein structure graphs, where nodes correspond to amino acids and edges encode spatial or chemical proximity between them. Compared to the PROTEINS dataset, DD graphs are substantially larger and denser, thereby introducing pronounced structural variability while preserving similar semantic labels.

\item \textbf{COX2.} The COX2 dataset~\cite{sutherland2003spline} consists of 467 molecular graphs, while COX2\_MD contains 303 modified molecular graphs. In both datasets, nodes correspond to atoms and edges represent chemical bonds. COX2\_MD introduces controlled structural variations to COX2 while preserving semantic labels.

\item \textbf{BZR.} The BZR dataset~\cite{sutherland2003spline} consists of 405 molecular graphs, while BZR\_MD contains 306 structurally modified graphs derived from BZR. In both datasets, nodes correspond to atoms and edges represent chemical bonds. BZR\_MD introduces controlled structural variations to simulate domain shifts while preserving consistent label semantics.

\end{itemize}

\end{itemize}

\subsection{Data Processing}

For datasets from the TUDataset~\footnote{https://chrsmrrs.github.io/datasets/}
 (e.g., PROTEINS and NCI1), we adopt the standard preprocessing and normalization procedures provided by PyTorch Geometric~\footnote{https://pyg.org/}
. For datasets from the Open Graph Benchmark (OGB)~\footnote{https://ogb.stanford.edu/}
, such as ogbg-molhiv, we follow the official OGB preprocessing and normalization protocols.

\section{Baselines}\label{sec:baselines}

In this section, we introduce the details of the compared baselines as follows:

\begin{itemize}
    \item \textbf{Graph kernel methods.} We compare \method{} with two graph kernel methods: 
    \begin{itemize}
        \item \textbf{WL subtree}: WL subtree \citep{shervashidze2011weisfeiler} is a graph kernel method that computes graph similarity via a kernel function, encoding local neighborhood structures into subtree patterns and efficiently capturing topological information within graphs.
        \item \textbf{PathNN}: PathNN~\citep{michel2023path} is an expressive graph neural network that models graphs by aggregating information along simple paths, enabling accurate capture of long-range dependencies and higher-order structural patterns beyond local neighborhoods.
    \end{itemize}
    \item \textbf{General Graph Neural Networks.} We compare \method{} with four general graph neural networks: 
    \begin{itemize}
        \item \textbf{GCN}: GCN \citep{kipf2017semi} is a message-passing graph neural network that updates node representations by aggregating and normalizing features from immediate neighbors, effectively capturing local structural and attribute information in graphs. 
        \item \textbf{GIN}: GIN \citep{xu2018how} is a graph neural network with an injective aggregation function that sums neighbor features, achieving strong expressive power equivalent to the Weisfeiler–Lehman test for distinguishing graph structures.
        \item \textbf{CIN}: CIN \citep{bodnar2021weisfeiler} is a higher-order graph neural network inspired by the Weisfeiler–Lehman framework, which operates on cellular complexes to capture rich topological structures beyond pairwise node interactions.
        \item \textbf{GMT}: GMT \citep{BaekKH21} is a graph neural network architecture that learns graph-level representations via hierarchical multiset pooling, enabling adaptive aggregation of node features and improving expressiveness for capturing complex graph structures.
    \end{itemize}
    \item \textbf{Manifold-based Graph Neural Networks}: We compare \method{} with four manifold-based general graph neural networks:
    \begin{itemize}
        \item \textbf{dDGM}: dDGM~\citep{de2023latent} is a latent graph inference framework that models graph structure in continuous product manifolds, enabling the learning of expressive graph representations by jointly inferring latent relational structure and node embeddings.
        \item \textbf{RieGrace}: RieGrace~\citep{sun2023self} is a self-supervised continual graph learning approach that adapts Riemannian representation spaces over time, enabling robust knowledge accumulation and transfer across evolving graph distributions.
        \item \textbf{ProGDM}: ProGDM~\citep{wang2024mixed} is a graph diffusion model that integrates mixed-curvature geometric spaces, enabling effective information propagation and representation learning across graphs with heterogeneous structural properties.
        \item \textbf{D-GCN}: D-GCN~\citep{sun2024motif} is a motif-aware Riemannian graph neural network that leverages generative–contrastive learning to capture higher-order structural patterns and geometric relationships in graph representations.
    \end{itemize}
    \item \textbf{Graph Domain Adaptation methods.} We compare \method{} with seven graph domain adaptation methods: 
    \begin{itemize}
        \item \textbf{DEAL}: DEAL \citep{yin2022deal} addresses unsupervised domain adaptation for graph-level classification by learning domain-invariant graph representations through adversarial training, where a shared graph encoder is optimized to confuse a domain discriminator while preserving discriminative power on source labels.
        \item \textbf{SGDA}: SGDA \citep{qiao2023semi} performs semi-supervised domain adaptation for graph transfer learning by jointly aligning feature representations and structural distributions across source and target domains, enabling effective knowledge transfer under limited target supervision. 
        \item \textbf{StruRW}: StruRW \citep{liu2023structural} improves graph domain adaptation by re-weighting structural components to mitigate distributional shifts and enhance cross-domain representation alignment.
        \item \textbf{A2GNN}: A2GNN \citep{liu2024rethinking} rethinks information propagation for unsupervised graph domain adaptation by redesigning message-passing mechanisms to better align cross-domain graph representations.
        \item \textbf{PA-BOTH}: PA-BOTH \citep{liu2024pairwise} enhances graph domain adaptation through pairwise alignment of node representations and structural patterns across domains, effectively reducing distribution shifts and facilitating robust knowledge transfer between source and target graphs. \item \textbf{GAA}: GAA~\citep{fang2025benefits} investigates attribute-driven graph domain adaptation by leveraging node attributes to guide representation alignment across domains, demonstrating that attribute information plays a critical role in mitigating domain shifts and improving transfer performance.
        \item \textbf{TDSS}: TDSS~\citep{chen2025smoothness} emphasizes graph smoothness as a key inductive bias for unsupervised graph domain adaptation, enforcing consistent representations among neighboring nodes to effectively reduce domain discrepancy and improve cross-domain generalization.
    \end{itemize}
    \item \textbf{Manifold-based Domain Adaptation methods.} We compare \method{} with three manifold-based domain adaptation methods:
    \begin{itemize}
        \item \textbf{GOTDA}: GOTDA~\citep{long2022domain} formulates domain adaptation as an optimal transport problem on Grassmann manifolds, aligning subspace representations across domains to reduce distributional discrepancy and enable effective knowledge transfer. 
        \item \textbf{MASH}: MASH~\citep{rustad2024graph} performs graph integration for diffusion-based manifold alignment by leveraging diffusion processes to align manifold structures across domains, facilitating coherent representation learning and cross-domain knowledge transfer.
        \item \textbf{GeoAdapt}: GeoAdapt~\cite{gharib2025geometric} performs domain adaptation by aligning geometric moments via Siegel embeddings, enabling principled matching of higher-order distributional statistics across domains in a structured geometric space.
    \end{itemize}
\end{itemize}

\subsection{Implementation Details}

\noindent\textbf{Implementation Details.} We implement \method{} and all baselines in PyTorch and conduct all experiments on NVIDIA A100 GPUs. For the encoder, \method{} adopts HGCN~\citep{liu2019hyperbolic} as the backbone, enabling adaptive representation learning across hyperbolic, Euclidean, and spherical spaces by modulating the curvature parameter $c$. All experiments use identical hyperparameter settings, with Adam, a learning rate of $1\times10^{-4}$, weight decay of $1\times10^{-12}$, hidden dimension 128, and three GNN layers. We set the manifold curvature to $c=-1.0$, the confidence threshold to $\zeta = 0.7$, and $\lambda_{1} = \lambda_{2} = \lambda_{3} = 0.1$. Following prior settings~\citep{wu2020unsupervised,yin2022deal}, \method{} is trained on labeled source data and evaluated on unlabeled target data. We report accuracy on TUDataset benchmarks (e.g., PROTEINS) and AUC on OGB datasets (e.g., ogbg-molhiv), averaged over five independent runs.

\section{Complexity Analysis}

Here we analyze the computational complexity of the proposed \method{}. The computational complexity is dominated by the Riemannian graph convolution and manifold flow matching. For a given batch of graphs, $|\mathcal{E}|$ denotes the total number of edges. $d$ is the embedding dimension. $L$ denotes the number of the HGCN. $|\mathcal{V}|$ is the total number of nodes. $B$ represents the batch size. The HGCN takes $\mathcal{O}\left(L \cdot\left(|\mathcal{E}| \cdot d+|\mathcal{V}| \cdot d^2\right)\right)$ time complexity, while the flow matching and geometric constraints introduce an additional term of $\mathcal{O}\left(B^2 \cdot d\right)$ for coupling and alignment. As a result, the overall computational complexity of \method{} is $\mathcal{O}\left(L \cdot\left(|\mathcal{E}| \cdot d+|\mathcal{V}| \cdot d^2\right)+B^2 \cdot d\right)$.

\section{More experimental results}

\subsection{More performance comparison}\label{sec:model performance}

In this section, we present additional experimental results comparing the proposed \method{} with all baseline models across various datasets, as reported in Tables~\ref{tab:proteins_node}–\ref{tab:hiv_idx}. We further observe that \method{} consistently outperforms baselines in most cases, further demonstrating its effectiveness and robustness.

Additionally, we conduct experiments to evaluate the flexibility of \method{}. Specifically, under the HGCN geometric framework~\citep{liu2019hyperbolic}, we replace the tangent space aggregation module with different GNN architectures (i.e., GCN, SAGE, GAT, and GIN), and report the results in Figure~\ref{fig:different_GNN}. The results show that GIN consistently outperforms other GNN architectures across most settings, indicating its stronger representational capacity. This observation further supports our choice of GIN as the default message passing mechanism to enhance the overall performance of \method{}. 

\subsection{More Ablation Study}\label{sec:ablation study}

To validate the effectiveness of each component in \method{}, we further conduct ablation studies on the PROTEINS, NCI1, and ogbg-molhiv datasets. Specifically, we evaluate four variants of \method{}, including \method{} w/o FM, \method{} w/o RA, \method{} w/o AA, and \method{} w/o PE. The experimental results are reported in Tables \ref{tab:proteins_ablation}, \ref{tab:nci1_ablation}, and \ref{tab:molhiv_ablation}. From these results, we observe trends consistent with those summarized in Section \ref{sec:ablation}.

\subsection{More Sensitivity Analysis}\label{sec:sensitive analysis}

\begin{wrapfigure}{r}{0.52\textwidth}
    \centering
    \vspace{-1cm} 
    \begin{subfigure}[b]{0.48\linewidth}
        \centering
        \includegraphics[width=\linewidth]{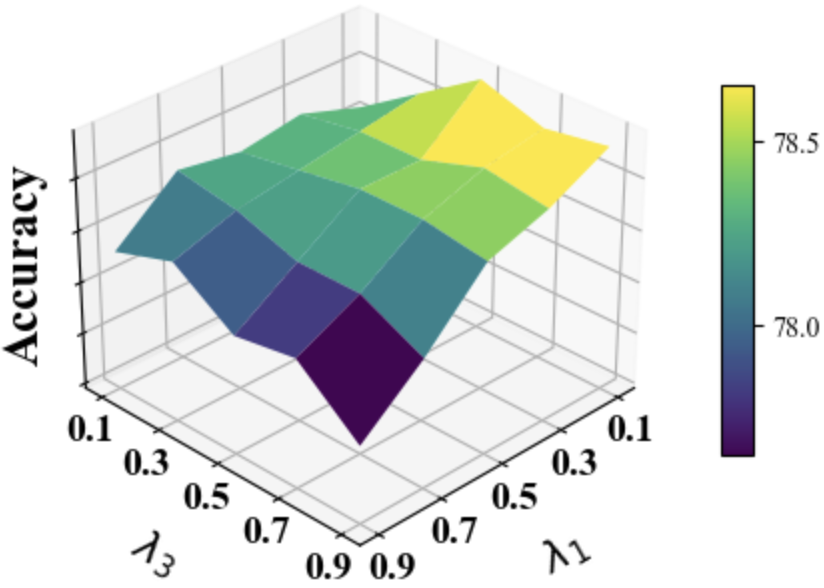}
        \label{fig:lambda1_lambda3}
    \end{subfigure}
    \begin{subfigure}[b]{0.48\linewidth}
        \centering
        \includegraphics[width=\linewidth]{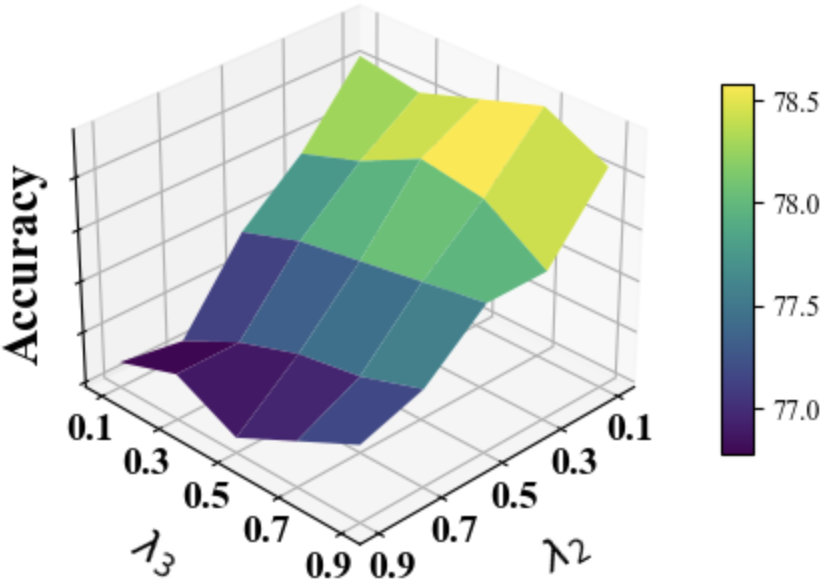}
        \label{fig:lambda2_lambda3}
    \end{subfigure}
    \caption{The model performance w.r.t. different combinations of $\lambda_1$, $\lambda_2$ and $\lambda_3$.}
    \label{fig:lambda_surface}
    \vspace{-0.4cm}
\end{wrapfigure}

In this section, we first investigate the sensitivity of the proposed \method{} to the angular confidence threshold $\zeta$ on the Mutagenicity, PROTEINS, NCI1, and ogbg-molhiv datasets. The threshold $\zeta$ governs the reliability of target pseudo-labels by excluding ambiguous samples during angular alignment. Figure~\ref{fig:hyper_threshold} reports the performance of \method{} as $\zeta$ varies over $\{0.5, 0.6, 0.7, 0.8, 0.9\}$. As shown in Figure~\ref{fig:hyper_threshold}, the performance consistently improves as $\zeta$ increases from 0.5 to 0.7, followed by a mild degradation at higher values. This behavior suggests that a low threshold allows noisy target samples to participate in angular alignment, leading to negative transfer, whereas an overly strict threshold excludes informative samples and weakens semantic guidance. Based on this trade-off, we set $\zeta = 0.7$ as the default value. We further analyze the sensitivity of \method{} to the curvature parameter $c$ on the PROTEINS, NCI1, and ogbg-molhiv datasets. The results, summarized in Figure~\ref{fig:hyper_curve}, exhibit trends consistent with the analysis presented in Section~\ref{sec:sensitivity}.

Additionally, we provide sensitivity analyses of the proposed \method{} with respect to
balance coefficients ($\lambda_1, \lambda_3$) and ($\lambda_2, \lambda_3$) on the Mutagenicity dataset. The results are illustrated in Figure~\ref{fig:lambda_surface}, from which we observe trends consistent with those discussed in Section \ref{sec:sensitivity}.

\subsection{Training Time and Memory Comparison}\label{sec:gpu_and_time}

Tables~\ref{tab:training_time} and \ref{tab:gpu_memory} present a comprehensive comparison of training time per epoch and GPU memory consumption among \method{} and existing graph domain adaptation methods under identical settings. The results indicate that \method{} incurs moderate computational cost and memory overhead across multiple datasets.

\begin{table}[t]
\centering
\caption{Training time per epoch of different graph domain adaptation methods (in seconds).}
\vspace{0.2cm}
\label{tab:training_time}
\resizebox{0.55\linewidth}{!}{
\begin{tabular}{lcccc}
\toprule
Methods & PROTEINS & NCI1 & Mutagenicity & ogbg-molhiv \\
\midrule
DEAL   & 0.176 & 0.528 & 0.598 & 1.347 \\
SGDA   & 0.072 & 0.247 & 0.260 & 1.073 \\
GAA    & 0.061 & 0.194 & 0.200 & 1.101 \\
TDSS   & 0.057 & 0.311 & 0.169 & 1.014 \\
\method{} & 0.091 & 0.477 & 0.511 & 1.737 \\
\bottomrule
\end{tabular}
}
\end{table}

\begin{table}[t]
\centering
\caption{GPU memory consumption of different graph domain adaptation methods during training (in GB).}
\label{tab:gpu_memory}
\vspace{0.2cm}
\resizebox{0.55\linewidth}{!}{
\begin{tabular}{lcccc}
\toprule
 & PROTEINS & NCI1 & Mutagenicity & ogbg-molhiv \\
\midrule
DEAL   & 1.6 & 1.3 & 1.4 & 2.3 \\
SGDA   & 1.5 & 1.5 & 1.3 & 1.9 \\
GAA    & 1.8 & 1.4 & 1.7 & 2.7 \\
TDSS   & 1.4 & 1.2 & 1.4 & 1.8 \\
\method{} & 1.6 & 1.3 & 1.5 & 2.0 \\
\bottomrule
\end{tabular}
}
\end{table}

\begin{table}[t]
\small
\centering
\caption{Correlation between the radial coordinate $r$ and graph complexity. Partial Corr indicates the Pearson correlation after controlling for the class label $y$.}
\vspace{0.2cm}
\begin{tabular}{l|c|c|c}
\toprule
\textbf{Structure $a(G)$} & \textbf{Spearman} & \textbf{Pearson} & \textbf{Partial Corr} \\
\midrule
Number of Nodes ($|V|$) & 0.5783 & 0.5366 & 0.5227 \\
Number of Edges ($|E|$) & 0.5773 & 0.6076 & 0.5949 \\
\bottomrule
\end{tabular}
\label{tab:structural_correlation}
\vspace{-0.2cm}
\end{table}

\begin{table}[t]
\small
\centering

\caption{Experimental results of structural prediction probe.}
\vspace{0.2cm}
\begin{tabular}{l|c|c}
\toprule
\textbf{Input Features} & \textbf{$R^2$ ($\uparrow$)} & \textbf{MAE ($\downarrow$)} \\
\midrule
$r$ only (1-dim) & 0.4454 & 8.7791 \\
$u$ only (128-dim) & 0.0812 & 12.3421 \\
$r + u$ (Combined) & \textbf{0.7072} & \textbf{6.8961} \\
Tangent Space & 0.5113 & 7.7992 \\
\bottomrule
\end{tabular}
\label{tab:structural_probe}
\vspace{-0.4cm}
\end{table}

\begin{table}[t]
\small
\centering
\caption{Experimental results of semantic probe analysis and feature permutation.}
\vspace{0.2cm}
\begin{tabular}{l|c|c}
\toprule
\textbf{Input Combination} & \textbf{Source Acc} & \textbf{Target Acc} \\
\midrule
$r$ only & 59.41\% & 63.19\% \\
$u$ only & 91.79\% & 78.41\% \\
$r + u$ & \textbf{92.39\%} & \textbf{78.71\%} \\
\midrule
Shuffled $r$ + Orig $u$ & 90.77\% & 77.77\% \\
Orig $r$ + Shuffled $u$ & 50.83\% & 53.04\% \\
\bottomrule
\end{tabular}
\label{tab:semantic_probe}
\end{table}

\begin{table}[t]
\small
\centering
\caption{Counterfactual radial intervention.}
\vspace{0.2cm}
\begin{tabular}{l|c|c|c|l}
\toprule
\textbf{Sample} & \textbf{Scale} & \textbf{New Radius ($r'$)} & \textbf{Pred Nodes (Structure)} & \textbf{Class Prob} \\
\midrule
Graph \#0 & 0.5 & 0.7019 & 30.77 & Class 1: 68.52\% \\
Graph \#0 & 1.0 & 1.4039 & 31.40 & Class 1: 68.52\% (Original)\\
Graph \#0 & 1.5 & 2.1058 & 32.04 & Class 1: 68.52\% \\
Graph \#0 & 2.0 & 2.8078 & 32.67 & Class 1: 68.52\% \\
\midrule
Graph \#1 & 0.5 & 0.7304 & 30.80 & Class 0: 93.34\% \\
Graph \#1 & 1.0 & 1.4609 & 31.45 & Class 0: 93.34\% (Original)\\
Graph \#1 & 1.5 & 2.1913 & 32.11 & Class 0: 93.34\% \\
Graph \#1 & 2.0 & 2.9217 & 32.77 & Class 0: 93.34\% \\
\bottomrule
\end{tabular}
\label{tab:radial_intervention}
\vspace{-0.3cm}
\end{table}

\begin{table}[t]
\small
\centering
\caption{Counterfactual angular intervention.}
\vspace{0.2cm}
\begin{tabular}{l|l|c|c|c}
\toprule
\textbf{Alpha} & \textbf{Angular Source} & \textbf{Pred Nodes (Struct)} & \textbf{Prob Class 0} & \textbf{Prob Class 1} \\
\midrule
0.00 & 100\% $u_A$ (Class 0) & 31.45 & 93.34\% & 6.66\% \\
0.25 & Mixed (0.25) & 31.45 & 87.80\% & 12.20\% \\
0.50 & Mixed (0.50) & 31.45 & 74.45\% & 25.55\% \\
0.75 & Mixed (0.75) & 31.45 & 52.12\% & 47.88\% \\
1.00 & 100\% $u_B$ (Class 1) & 31.45 & 31.48\% & 68.52\% \\
\bottomrule
\end{tabular}
\label{tab:angular_intervention}
\vspace{-0.2cm}
\end{table}

\begin{table}[t]
\small
\centering
\caption{Evaluation of source-target coupling precision.}
\vspace{0.2cm}
\begin{tabular}{l|c}
\toprule
\textbf{Coupling Strategy} & \textbf{Precision} \\
\midrule
Random coupling & 50.55\% \\
Global nearest neighbor & 77.31\% \\
Pseudo-label class-conditioned NN & 79.15\% \\
Confidence-gated class-cond NN (Ours) & \textbf{81.33\%} \\
\bottomrule
\end{tabular}
\label{tab:coupling_precision}
\vspace{-0.2cm}
\end{table}

\begin{table}[ht]
\small
\centering
\caption{Trajectory analysis of Riemannian Flow Matching.}
\vspace{0.2cm}
\begin{tabular}{l|l}
\toprule
\textbf{Trajectory Metric} & \textbf{Evaluation Result} \\
\midrule
Radial Discrepancy ($W_1$ Dist) & $0.4176$ $\rightarrow$ $\mathbf{0.0000}$ \\
Angular Drift ($AD_{\text{ang}}$) & $0.1533$ radians \\
Class Boundary Crossings & $\mathbf{0.00\%}$ \\
Path Energy ($E(\gamma)$) & \textbf{0.0033} (Hyperbolic FM) vs. 0.0246 (Euclidean) \\
\bottomrule
\end{tabular}
\label{tab:flow_trajectory}
\end{table}

\subsection{More Operational Disentanglement Analysis}\label{sec:disen_more}

A key empirical question in \method{} is whether the learned polar coordinates provide an operational separation between structure-related and label-predictive signals. Specifically, we examine whether the radial coordinate $r$ serves as a compact proxy for graph-size statistics, while the angular coordinate $u$ carries most class-discriminative information under the trained representation. We emphasize that this analysis does not assume a globally identifiable topology-semantics factorization. Instead, it evaluates whether the intended radial-angular bias is reflected in probe performance, permutation ablations, and latent-space interventions. We conduct three diagnostic analyses to assess this behavior.

\subsubsection{Operational Evidence for Graph-Structure Signals in Radial Coordinates}

\textbf{Structural Correlation Analysis.} We examine whether the radial coordinate $r$ is associated with simple graph-size statistics, including the number of nodes and edges. As shown in Table~\ref{tab:structural_correlation}, $r$ exhibits substantial positive correlations with both node count and edge count, suggesting that the learned radius carries information about graph complexity under our representation geometry. This pattern is consistent with the intended radial inductive bias, where radial distance provides a scale-sensitive coordinate for representing structural variation. To reduce the concern that this correlation is explained only by class labels, we further compute partial correlations after controlling for the ground-truth label $y$. The coefficients remain close to the unconditional correlations, suggesting that $r$ retains graph-size information beyond what is explained by the label under this linear control.

\textbf{Structural Prediction Probe.} To test whether structural information is more linearly accessible from $r$ than from $u$, we freeze the encoder and train a linear Ridge regression probe to predict the number of edges. As a baseline, we use tangent-space representations, where standard Euclidean vectors are obtained via the logarithmic map without explicit polar factorization. As shown in Table~\ref{tab:structural_probe}, using only the scalar $r$ achieves a moderate $R^2$, indicating that the radius captures a nontrivial amount of edge-count information. In contrast, the angular component $u$ alone yields a much lower $R^2$, suggesting that this particular structural statistic is less linearly accessible from the angular direction. Combining $r$ and $u$ further improves prediction performance and outperforms the tangent-space baseline in this probe. These results support an operational separation in which $r$ provides a compact structure-related signal, while $u$ contributes complementary information without concentrating the edge-count signal.

\subsubsection{Operational Evidence for Class Semantics in Angular Coordinates}

\textbf{Semantic Probe Analysis.} To assess whether label-predictive information is primarily accessible from the angular component, we evaluate the standalone classification performance of each decoupled factor. As shown in Table~\ref{tab:semantic_probe}, using only the radial coordinate ($r$) yields accuracy only modestly above random guessing for this binary task, indicating that $r$ contains limited label-predictive information compared with the angular component. In contrast, the angular component ($u$) alone achieves performance close to that of the full representation ($r+u$). This suggests that class-discriminative information is predominantly encoded in $u$, while incorporating $r$ provides only a small additional benefit for classification in this setting.

\textbf{Feature Permutation Ablation.} To further evaluate the role of each factor, we conduct a feature permutation experiment on the test set using a classifier trained on the full ($r+u$) representation, as shown in Table~\ref{tab:semantic_probe}. When the radial values are randomly shuffled across samples while preserving their original angular directions (Shuffled $r$ + Orig $u$), performance remains close to the original result, suggesting that the classifier is not strongly sensitive to sample-specific radial values. In contrast, when the angular directions are shuffled while keeping the true radial values (Orig $r$ + Shuffled $u$), accuracy drops sharply to near-random chance. This contrast indicates that semantic predictions in this experiment depend much more strongly on $u$ than on $r$, supporting the intended operational separation between radial and angular factors.

\subsubsection{Latent-Space Intervention Analysis}

\textbf{Radial Intervention.} We perform a latent-space intervention by scaling the radial coordinate of a target sample ($r' = \text{scale} \cdot r$) while fixing its angular direction ($u$). The intervened tangent representation $v'_t = r' \cdot u$ is then evaluated using both the structural probe and the semantic classifier. As shown in Table~\ref{tab:radial_intervention}, increasing the scaling factor from $0.5$ to $2.0$ induces a monotonic increase in the predicted structural score. In contrast, the semantic classification probabilities remain nearly unchanged within the reported precision. This result suggests that, for the selected samples and probes, radial changes mainly affect the structure-related predictor while having limited influence on the semantic classifier.

\textbf{Angular Intervention.} We perform the inverse intervention by fixing the radial coordinate $r$ while smoothly interpolating the angular component $u$ toward a sample from a different class. Specifically, we construct counterfactual tangent representations $v' = r_A \cdot u_{\text{interp}}$, where $u_{\text{interp}}$ transitions from a Class 0 sample to a Class 1 sample. As shown in Table~\ref{tab:angular_intervention}, interpolating $u$ induces a substantial semantic shift and eventually flips the classifier's prediction from Class 0 to Class 1. Meanwhile, the predicted structural score remains stable within the reported precision. Together with the radial intervention, this provides additional empirical evidence that the radial magnitude is associated with structure-related variation, while the angular direction carries most of the label-predictive information in this experiment.

\subsection{Other Analysis}

\subsubsection{Evaluation of Class-Consistent Flow Coupling}

A key challenge in \method{} is to construct semantically reliable source--target trajectories while reducing arbitrary cross-domain mixing. Since target labels are unavailable during training, we evaluate different coupling strategies using ground-truth labels only as an oracle metric. As shown in Table~\ref{tab:coupling_precision}, coupling precision improves monotonically across the evaluated strategies. Random coupling produces substantial semantic mismatch, while global nearest-neighbor coupling and standard pseudo-label coupling improve alignment but may still introduce mismatch noise and potentially contribute to negative transfer. In contrast, our proposed Confidence-Gated Class-Conditioned Coupling achieves the highest precision among the compared strategies. By filtering uncertain boundary samples, the method reduces likely semantic mismatches and provides cleaner source--target pairs for flow construction. Consequently, the learned vector field is supervised on trajectories that are more likely to connect semantically consistent source and target representations, supporting more semantic-aware transport.

\subsubsection{Trajectory Diagnostics of Riemannian Flow Matching}

A key empirical question is whether the learned Riemannian flow trajectories respect the intended radial--angular separation during transport. To examine this, we analyze integrated trajectory metrics along the ODE path. As shown in Table~\ref{tab:flow_trajectory}, the results suggest the following observations:

\begin{itemize}
\item \textbf{Radial Discrepancy Reduction.} The flow substantially reduces the radial Wasserstein-1 ($W_1$) discrepancy between source and target domains in the evaluated trajectories, suggesting improved alignment of structure-related radial statistics.

\item \textbf{Small Angular Drift ($AD_{\mathrm{ang}}$).} Despite radial alignment, the accumulated angular drift remains small. This indicates that the transport introduces limited perturbation to the angular direction, which is consistent with preserving most label-predictive information under the trained classifier.

\item \textbf{No Observed Boundary Crossings.} Along the evaluated trajectories, we observe no class-boundary crossings under the trained decision function. This suggests that the learned transport does not visibly change the predicted semantic region for these samples, although it should be interpreted as an empirical diagnostic rather than a formal invariance guarantee.

\item \textbf{Lower Path Energy ($E(\gamma)$).} The Riemannian trajectory has lower path energy than the Euclidean transport baseline in this diagnostic. This is consistent with the geometry-aware flow following more efficient manifold-adapted paths, but does not imply global optimality of the learned transport.
\end{itemize}

Overall, these trajectory-level metrics provide empirical evidence that the learned Riemannian flow mainly adjusts radial discrepancies while introducing limited angular perturbation in the reported setting. This supports the intended structure-aware and class-consistent behavior of \method{}, without assuming exact radial--angular independence throughout the entire latent space.

\newpage
\input{figure/different_gnn}
\input{figure/hyper_threshold}
\input{figure/hyper_curve}
\input{table/Mutag_ablation}
\input{table/PROTEINS_ablation}
\input{table/NCI1_ablation}
\input{table/Molhiv_ablation}

\input{table/PROTEINS_node}
\input{table/PROTEINS_idx}
\input{table/NCI1_node}
\input{table/NCI1_idx}
\input{table/Mutag_node}
\input{table/Mutag_idx}
\input{table/Molhiv_node}
\input{table/Molhiv_idx}

%% file: table/dataset.tex
\begin{table}[ht]
    \centering
    \caption{Statistics of the experimental datasets.}
    \begin{tabular}{lcccc}
        \toprule
        Datasets      & Graphs & Avg. Nodes & Avg. Edges & Classes \\
        \midrule
        NCI1  & 4,110   & 29.87     & 32.30     & 2       \\
        MUTAGENICITY  & 4,337   & 30.32      & 30.77      & 2       \\
        PROTEINS  & 1,113   & 39.1      & 72.8      & 2       \\
        ogbg-molhiv & 41,127 & 25.5 & 27.5 & 2 \\
        
        \midrule
        DD & 1,178 & 284.32 & 715.66 & 2 \\
        COX2 & 467 & 41.22 & 43.45& 2 \\
        COX2\_MD & 303 & 26.28 &	335.12 & 2\\
        BZR & 405 & 35.75 & 38.36 & 2 \\
        BZR\_MD & 306 & 21.30 &	225.06 & 2 \\
        \bottomrule
    \end{tabular}
    \label{tab:dataset}
\end{table}

%% file: figure/dataset_vis.tex
\begin{figure*}[t]
\includegraphics[width=1.0\linewidth]{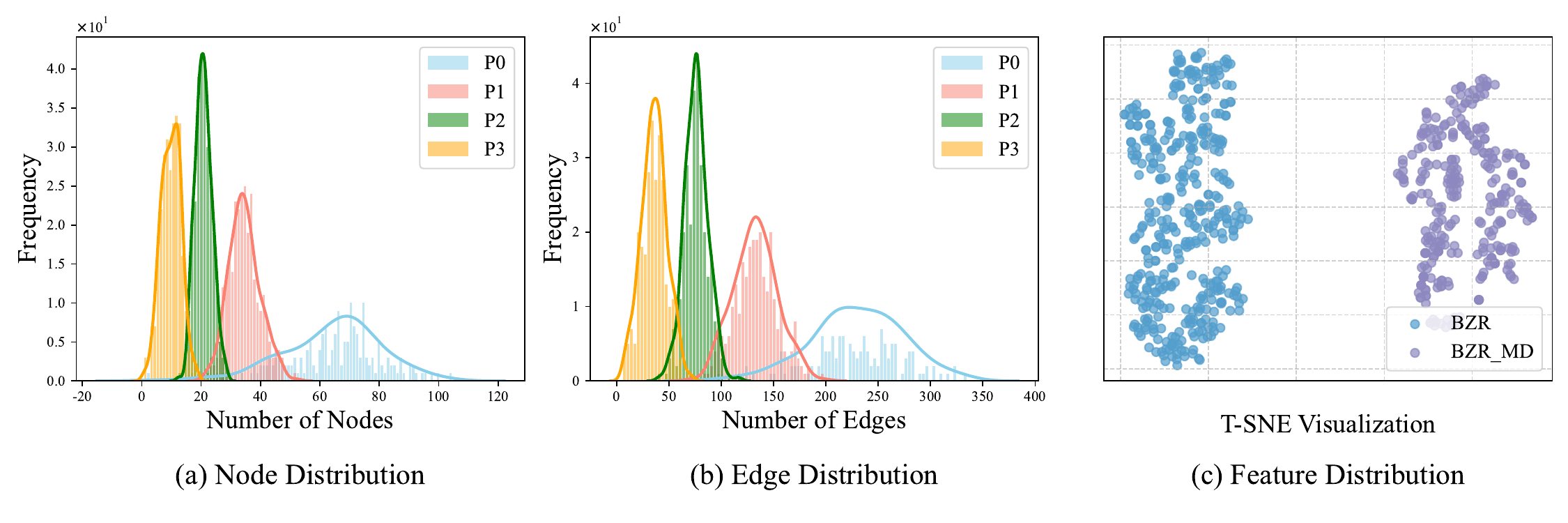}
    \vspace{-0.8cm}
    \caption{Visualization of domain shifts across different types. (a) Node distribution shift between sub-datasets of PROTEINS. (b) Edge distribution shift between sub-datasets of PROTEINS. (c) Feature distribution shift between BZR and BZR\_MD datasets.}
    \label{fig:shift}
    \vspace{-0.3cm}
\end{figure*}

%% file: figure/different_gnn.tex
\begin{figure*}[t]
    \centering
    \captionsetup[subfigure]{font=small}
    \begin{subfigure}[t]{0.24\textwidth}
        \centering
        \includegraphics[width=\textwidth]{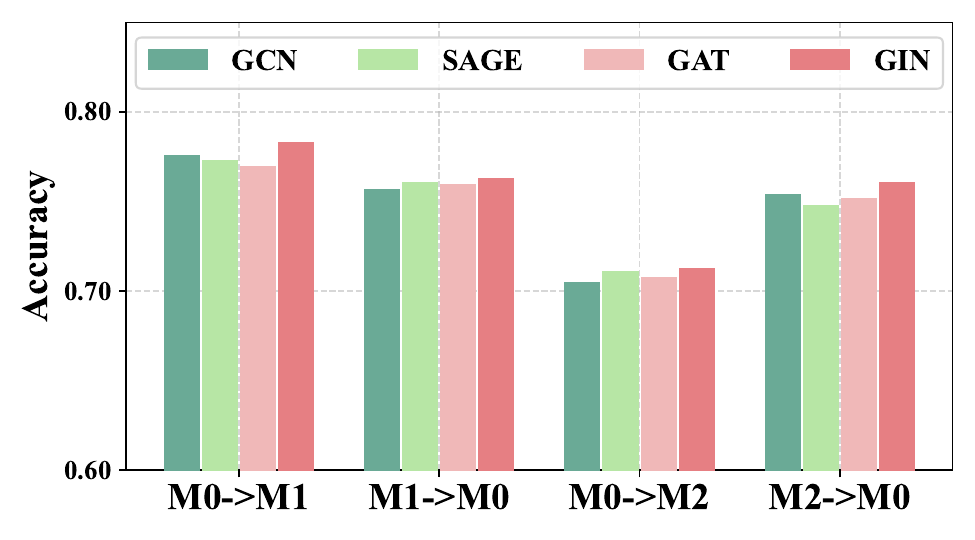}
        \caption{Mutagenicity}
    \end{subfigure}
    \hfill
    \begin{subfigure}[t]{0.24\textwidth}
        \centering
        \includegraphics[width=\textwidth]{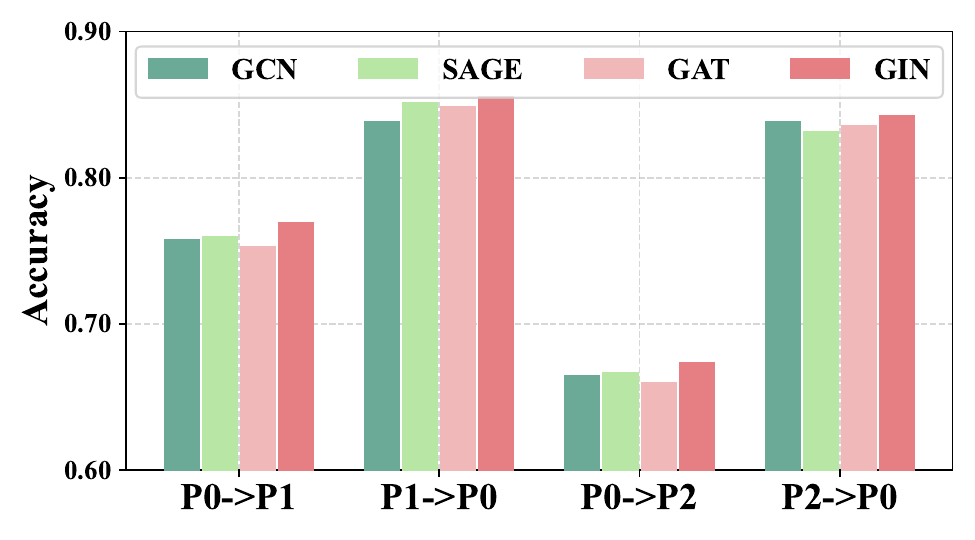}
        \caption{PROTEINS}
    \end{subfigure}
    \hfill
    \begin{subfigure}[t]{0.24\textwidth}
        \centering
        \includegraphics[width=\textwidth]{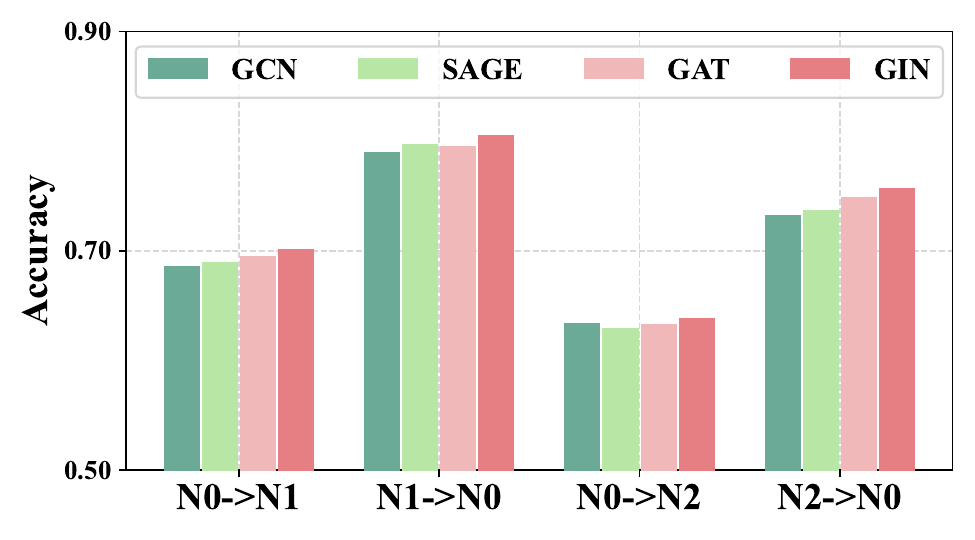}
        \caption{NCI1}
    \end{subfigure}
    \hfill
    \begin{subfigure}[t]{0.24\textwidth}
        \centering
        \includegraphics[width=\textwidth]{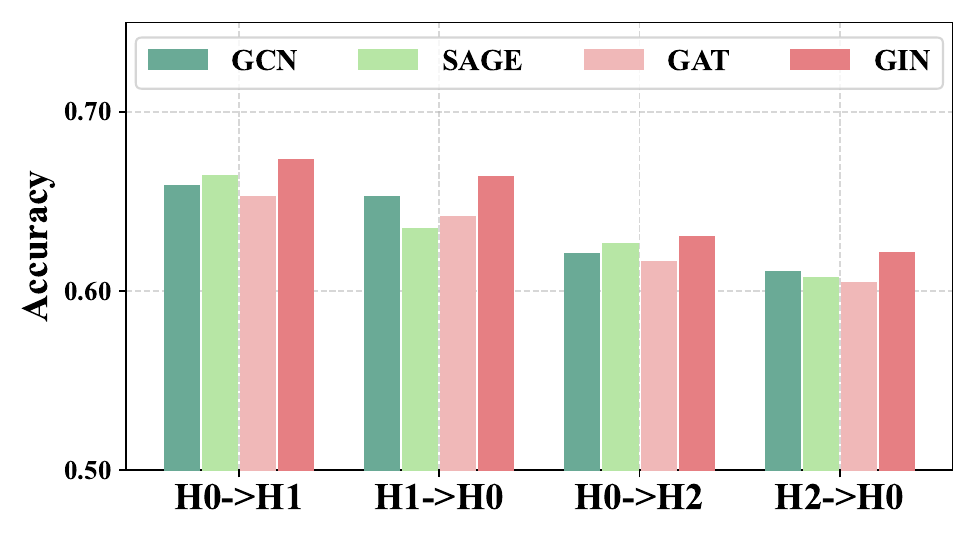}
        \caption{ogbg-molhiv}
    \end{subfigure}
    \caption{The performance with different tangent space aggregation modules on different datasets.}
    \label{fig:different_GNN}
\end{figure*}

%% file: figure/hyper_threshold.tex
\begin{figure*}[t]

    \centering
    \captionsetup[subfigure]{font=scriptsize} 
    \begin{subfigure}[t]{0.24\textwidth}
        \centering
        \includegraphics[width=\linewidth]{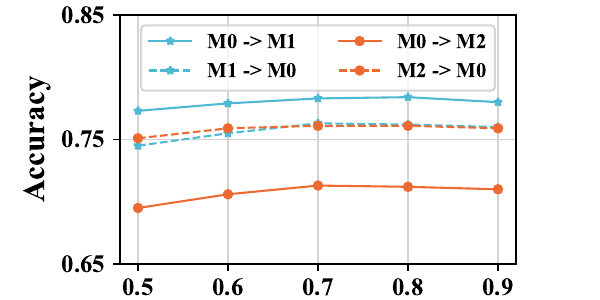}
        \caption{Mutagenicity}
    \end{subfigure}
    \hfill
    \begin{subfigure}[t]{0.24\textwidth}
        \centering
        \includegraphics[width=\linewidth]{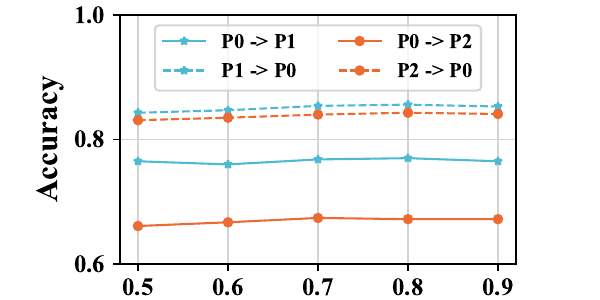}
        \caption{PROTEINS}
    \end{subfigure}
    \hfill
    \begin{subfigure}[t]{0.24\textwidth}
        \centering
        \includegraphics[width=\linewidth]{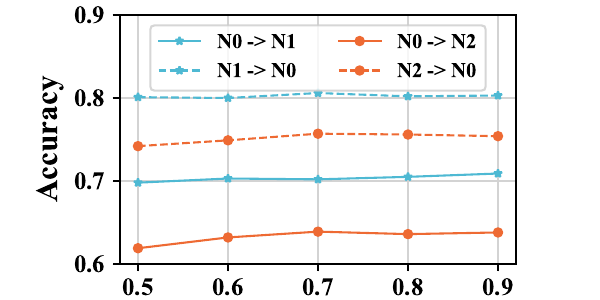}
        \caption{NCI1}
    \end{subfigure}
    \hfill
    \begin{subfigure}[t]{0.24\textwidth}
        \centering
        \includegraphics[width=\linewidth]{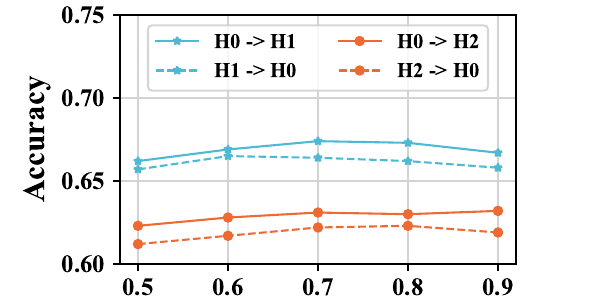}
        \caption{ogbg-molhiv}
    \end{subfigure}
    \caption{Hyperparameter sensitivity analysis of confidence threshold $\zeta$ on the Mutagenicity, PROTEINS, NCI1 and ogbg-molhiv datasets.}
    \label{fig:hyper_threshold}
    \vspace{-0.1cm}
\end{figure*}

%% file: figure/hyper_curve.tex
\begin{figure*}[t]

    \centering
    \captionsetup[subfigure]{font=scriptsize} 
    \begin{subfigure}[t]{0.32\textwidth}
        \centering
        \includegraphics[width=\linewidth]{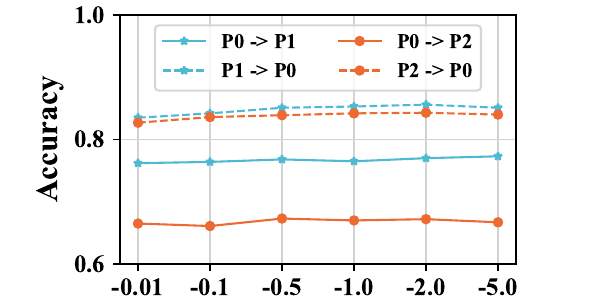}
        \caption{PROTEINS}
    \end{subfigure}
    \hfill
    \begin{subfigure}[t]{0.32\textwidth}
        \centering
        \includegraphics[width=\linewidth]{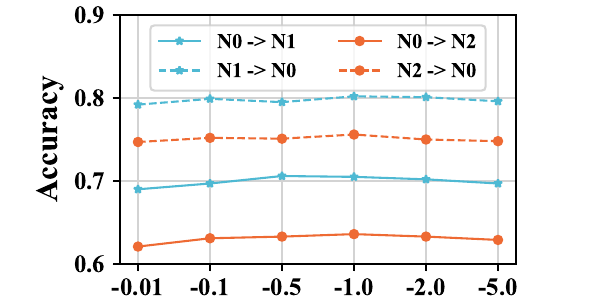}
        \caption{NCI1}
    \end{subfigure}
    \hfill
    \begin{subfigure}[t]{0.32\textwidth}
        \centering
        \includegraphics[width=\linewidth]{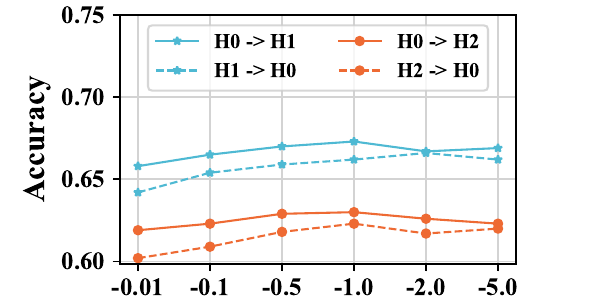}
        \caption{ogbg-molhiv}
    \end{subfigure}
    \caption{Hyperparameter sensitivity analysis of curvature parameter $c$ on the PROTEINS, NCI1 and ogbg-molhiv datasets.}
    \vspace{-0.1cm}
    \label{fig:hyper_curve}
\end{figure*}

%% file: table/Mutag_ablation.tex
\begin{table*}[t]
\small
\centering
\caption{The results of ablation studies on the Mutagenicity dataset (source → target). \textbf{Bold} results indicate the best performance.}
\resizebox{1.0\textwidth}{!}{

}
\label{tab:ablation_mutag}
\end{table*}

%% file: table/PROTEINS_ablation.tex
\begin{table*}[t]
\small
\centering
\caption{The results of ablation studies on the PROTEINS dataset (source → target). \textbf{Bold} results indicate the best performance.}
\vspace{-3pt}
\resizebox{1.0\textwidth}{!}{
%
}
\label{tab:proteins_ablation}
\end{table*}

%% file: table/NCI1_ablation.tex
\begin{table*}[t]
\small
\centering
\caption{The results of ablation studies on the NCI1 dataset (source → target). \textbf{Bold} results indicate the best performance.}
\resizebox{1.0\textwidth}{!}{
%
}
\label{tab:nci1_ablation}
\end{table*}

%% file: table/Molhiv_ablation.tex
\begin{table*}[t]
\small
\centering
\caption{The results of ablation studies on the ogbg-molhiv dataset (source → target). \textbf{Bold} results indicate the best performance.}
\vspace{-3pt}
\resizebox{1.0\textwidth}{!}{
%
}
\label{tab:molhiv_ablation}
\end{table*}

%% file: table/PROTEINS_node.tex
\begin{table*}[t]
\centering
\caption{The graph classification results (in \%) on PROTEINS under node density domain shift (source $\rightarrow$ target). P0, P1, P2, and P3 denote the sub-datasets partitioned with node density. \textbf{Bold} results indicate the best performance.}
\resizebox{1.0\textwidth}{!}{
%
}
\label{tab:proteins_node}
\end{table*}

%% file: table/PROTEINS_idx.tex
\begin{table*}[t]
\centering
\caption{The graph classification results (in \%) on PROTEINS under edge density domain shift (source $\rightarrow$ target). P0, P1, P2, and P3 denote the sub-datasets partitioned with edge density. \textbf{Bold} results indicate the best performance. }
\resizebox{1.0\textwidth}{!}{
%
}
\label{tab:proteins_idx}
\end{table*}

%% file: table/NCI1_node.tex
\begin{table*}[t]
\small
\caption{The classification results (in \%) on the NCI1 dataset under node density domain shift (source $\rightarrow$ target).  N0, N1, N2, and N3 denote the sub-datasets partitioned with node density. \textbf{Bold} results indicate the best performance.}
	\resizebox{1.0\textwidth}{!}{
	%
}
    \label{tab:nci1_node}
\end{table*}

%% file: table/NCI1_idx.tex
\begin{table*}[t]
\small
\caption{The classification results (in \%) on the NCI1 dataset under edge density domain shift (source $\rightarrow$ target).  N0, N1, N2, and N3 denote the sub-datasets partitioned with edge density. \textbf{Bold} results indicate the best performance.}
	\resizebox{1.0\textwidth}{!}{
	%
}
    \label{tab:nci_idx}
\end{table*}

%% file: table/Mutag_node.tex
\begin{table*}[t]
\small
\caption{The classification results (in \%) on the Mutagenicity dataset under node density domain shift (source $\rightarrow$ target).  M0, M1, M2, and M3 denote the sub-datasets partitioned with node density. \textbf{Bold} results indicate the best performance.}
	\resizebox{1.0\textwidth}{!}{
	%
}
    \label{tab:mutag_node}
\end{table*}

%% file: table/Mutag_idx.tex
\begin{table*}[t]
\small
\caption{The classification results (in \%) on the Mutagenicity dataset under edge density domain shift (source $\rightarrow$ target).  M0, M1, M2, and M3 denote the sub-datasets partitioned with edge density. \textbf{Bold} results indicate the best performance.}
	\resizebox{1.0\textwidth}{!}{
	%
}
    \label{tab:mutag_idx}
\end{table*}

%% file: table/Molhiv_node.tex
\begin{table*}[t]
\small
\caption{The classification results (in \%) on the ogbg-molhiv dataset under node density domain shift (source $\rightarrow$ target).  H0, H1, H2, and H3 denote the sub-datasets partitioned with node density. \textbf{Bold} results indicate the best performance.}
	\resizebox{1.0\textwidth}{!}{
	%
}
    \label{tab:hiv_node}
\end{table*}

%% file: table/Molhiv_idx.tex
\begin{table*}[t]
\small
\caption{The classification results (in \%) on the ogbg-molhiv dataset under edge density domain shift (source $\rightarrow$ target).  H0, H1, H2, and H3 denote the sub-datasets partitioned with edge density. \textbf{Bold} results indicate the best performance.}
	\resizebox{1.0\textwidth}{!}{
	%
}
    \label{tab:hiv_idx}
\end{table*}